\documentclass[twoside,11pt]{article}

\usepackage{blindtext}

\usepackage[abbrvbib, preprint]{jmlr2e}
\usepackage{amsmath}
\usepackage{bm}
\usepackage{mathtools}
\usepackage{bbm}
\usepackage{mathrsfs}
\usepackage{booktabs}
\usepackage{array}
\usepackage{enumitem}
\usepackage{xcolor}

\usepackage{tikz}
\usetikzlibrary{shapes,arrows,positioning,calc,decorations.pathmorphing}
\newtheorem{assumption}[theorem]{Assumption}

\usepackage{lastpage}

\ShortHeadings{A Spectral Revisit of the Distributional Bellman Operator under the Cram\'er Metric}{Wang, Deng, Lyu, Redmond and Li}
\firstpageno{1}

\begin{document}

\title{A Spectral Revisit of the Distributional Bellman Operator under the Cram\'er Metric}

\author{\name Keru Wang\textsuperscript{1} \email keru.wang@ucdconnect.ie \\
       \name Yixin Deng\textsuperscript{1} \email yixin.deng@ucdconnect.ie \\
       \name Yao Lyu\textsuperscript{2} \email lvyao@mail.tsinghua.edu.cn \\
       \name Stephen Redmond\textsuperscript{1}\thanks{Corresponding authors.} \email stephen.redmond@ucd.ie \\
       \name Shengbo Eben Li\textsuperscript{2,3}\footnotemark[1] \email lishbo@tsinghua.edu.cn \\
       \addr \textsuperscript{1}School of Electrical and Electronic Engineering, University College Dublin, Dublin, Ireland\\
       \addr \textsuperscript{2}School of Vehicle and Mobility, Tsinghua University, Beijing, China\\
       \addr \textsuperscript{3}College of Artificial Intelligence, Tsinghua University, Beijing, China}

\editor{My editor}

\maketitle

\begin{abstract}
Distributional reinforcement learning (DRL) studies the evolution of full return distributions under Bellman updates rather than focusing on expected values. A classical result is that the distributional Bellman operator is contractive under the Cram\'er metric, which corresponds to an $L^2$ geometry on differences of cumulative distribution functions (CDFs). While this contraction ensures stability of policy evaluation, existing analyses remain largely metric, focusing on contraction properties without elucidating the structural action of the Bellman update on distributions. In this work, we analyse distributional Bellman dynamics directly at the level of CDFs, treating the Cram\'er geometry as the intrinsic analytical setting. At this level, the Bellman update acts affinely on CDFs and linearly on differences between CDFs, and its contraction property yields a uniform bound on this linear action. Building on this intrinsic formulation, we construct a family of regularised spectral Hilbert representations that realise the CDF-level geometry by exact conjugation, without modifying the underlying Bellman dynamics. The regularisation affects only the geometry and vanishes in the zero-regularisation limit, recovering the native Cram\'er metric. This framework clarifies the operator structure underlying distributional Bellman updates and provides a foundation for further functional and operator-theoretic analyses in DRL.
\end{abstract}

\begin{keywords}
Distributional Reinforcement Learning, Distributional Bellman Operator, Cram\'er Metric, Hilbert Space, Spectral Formulation
\end{keywords}

\section{Introduction}
Distributional reinforcement learning (DRL) models the full law of returns associated with each state-action pair, providing a richer description of uncertainty than expectation-based formulations~\citep{c51,drlbook_bellemare_distributional_2023}. A foundational theoretical result underpinning this paradigm is that the distributional Bellman operator~\citep{drlbook_bellemare_distributional_2023} is contractive~\citep{c51theory} under the Cram\'er metric~\citep{cramer1,cramer2}, which coincides with the $\ell_2$ distance between cumulative distribution functions (CDFs)~\citep{prob_intro}. This contraction property guarantees stability, well-posedness, and convergence of distributional policy evaluation, and forms the theoretical basis of CDF-based DRL methods~\citep{c51,c51theory}.

Despite this well-established metric theory, the internal structure of distributional Bellman dynamics remains poorly understood. Existing analyses of Cram\'er-based DRL are largely confined to contraction arguments~\citep{c51theory}, which are sufficient for establishing existence and uniqueness of fixed points, but offer limited insight into how the Bellman update acts as a transformation on distributions. In particular, they do not clarify how the update (which combines reward translation, discount-induced scaling, and conditional expectation~\citep{Suttonbook}) organises its probabilistic and algebraic components, nor how these components interact under the Cram\'er geometry. As a result, structural, spectral, and operator-level questions about distributional Bellman dynamics remain largely inaccessible.

This gap is not merely technical, but structural. At the level of distributions, the Bellman update is inherently affine rather than linear: it preserves probabilistic structure but does not act linearly on the space of CDFs. The Cram\'er metric, in turn, is intrinsically a geometry of differences between distributions, endowing the CDF space with an $L^2$-type structure only through comparison~\citep{metric1,metric2}, while remaining incompatible with a direct Hilbert space formulation on distributions themselves~\citep{functional1,functional2}, as will be exposed later. Consequently, while contraction and fixed-point properties can be analysed directly at the raw CDF level, the lack of a compatible linear representation obscures the deeper functional organisation of the dynamics and precludes systematic operator-theoretic analysis.

This observation reveals a fundamental representational tension in distributional reinforcement learning: the Bellman dynamics are probabilistic and affine in nature, whereas the analytical tools required to study their structure, such as spectral analysis or perturbation theory, are inherently linear. Resolving this tension requires a framework that respects the intrinsic CDF-level dynamics while making the latent structure encoded by the Cram\'er geometry analytically accessible, without altering the underlying Bellman operator.

We therefore ask: \emph{How can distributional Bellman dynamics be analysed at their natural CDF level while admitting a faithful Hilbert space realisation that preserves the intrinsic Cram\'er geometry and exposes the internal structure of the Bellman update?} Crucially, such a realisation should not modify or approximate the Bellman operator, but instead provide an exact correspondence that separates probabilistic dynamics from analytical representation and recovers the Cram\'er geometry in an appropriate limit.

To address this question, we adopt a two-level analytical strategy. We first analyse distributional Bellman dynamics directly in the CDF domain, treating the Cram\'er geometry as the intrinsic analytical setting and establishing contraction, invariance, and fixed-point properties~\citep{Suttonbook,drlbook_bellemare_distributional_2023,rlbook} without reference to any auxiliary representation. Building on this intrinsic formulation, we then construct a family of regularised Hilbert spaces that provide an exact conjugate realisation of the CDF-level geometry. This construction does not introduce new dynamics; rather, it makes explicit the affine structure of the Bellman update at the CDF level and the geometric structure implicitly encoded by the Cram\'er metric, yielding a functional setting in which operator-theoretic and spectral tools become applicable. To help orient the reader, Figure~\ref{fig:conceptual_roadmap} provides a conceptual overview of this analytical framework and the role of each representation level.

\begin{figure}[htbp]
\centering
\begin{tikzpicture}[
    box/.style={
        rectangle,
        rounded corners=3pt,
        draw=#1!80!black,
        line width=0.8pt,
        fill=#1!5,
        align=center,
        inner sep=6pt,
        font=\small
    },
    arrow/.style={
        ->,
        >=stealth,
        line width=1.2pt,
        draw=black!70
    }
]

\node[box=blue, minimum width=12cm, minimum height=1.8cm] (dist) at (0,7.0) {
   \textbf{Distribution level} \\[4pt]
Return distributions for each state-action pair. \\[2pt]
Bellman update acts on full return laws. \\[2pt]
Policy evaluation seeks the fixed return distribution.
};

\node[box=red!70!black, minimum width=12cm, minimum height=1.6cm] (obstruction) at (0,4.5) {
    \textbf{Obstruction for direct analysis} \\[4pt]
Cram\'er geometry has a low-frequency singularity in the spectral domain \\[2pt]
\(\Rightarrow\) \emph{No direct stable Hilbert space realisation}.
};

\draw[arrow] (dist.south) -- (obstruction.north);

\node[font=\bfseries] at (0,3.1) {Our approach};

\node[box=green!60!black, minimum width=5.2cm, minimum height=3.0cm] (cdf) at (-4.5,0.2) {
   \textbf{CDF-level analysis} \\[6pt]
Represent return laws by CDFs. \\[4pt]
Bellman dynamics at CDF level. \\[6pt]
Establish contraction \& fixed-point. 
};

\node[box=orange, minimum width=5.2cm, minimum height=3.0cm] (spec) at (4.5,0.2) {
    \textbf{Spectral lifting} \\[6pt]
Embed the CDF into spectral space. \\[4pt]
Hilbertian realisation of Bellman dynamics. \\[4pt]
Contraction \& fixed point preserved.
};

\draw[arrow, draw=green!60!black] 
    ($(obstruction.south) + (-2.8,0)$) -- ++(0,-0.8) -| (cdf.north);
\draw[arrow, draw=orange!85!black] 
    ($(obstruction.south) + (2.8,0)$) -- ++(0,-0.8) -| (spec.north);

\draw[<->, >=stealth, line width=1.2pt, draw=purple!70] 
    (cdf.east) -- node[above, font=\small, align=center] 
    {equivalent\\representations} (spec.west);

\node[box=green!60!black, minimum width=5.2cm, minimum height=1.5cm] (gain_cdf) at (-4.5,-2.8) {
    \textbf{CDF level provides} \\[4pt]
The intrinsic Cram\'er geometry. \\[2pt]
Natural metric structure for return distributions.
};

\node[box=orange, minimum width=5.2cm, minimum height=1.5cm] (gain_spec) at (4.5,-2.8) {
    \textbf{Spectral level provides} \\[4pt]
    Stable Hilbert space framework \\
    for operator-theoretic analysis.
};

\draw[arrow, draw=green!60!black] (cdf.south) -- (gain_cdf.north);
\draw[arrow, draw=orange!85!black] (spec.south) -- (gain_spec.north);

\node[box=orange, minimum width=12cm, minimum height=1.6cm] (reg) at (0,-4.8) {
    \textbf{Regularisation parameter} \\[4pt]
Introduces a stable family of Hilbert geometries \\[2pt]
that approximate the intrinsic Cram\'er metric.
};

\node[box=gray, minimum width=12cm, minimum height=1.8cm] (conclusion) at (0,-7.0) {
    \textbf{Main insight} \\[4pt]
Cram\'er geometry emerges as the singular limit \\[2pt]
of a family of regularised Hilbert geometries.
};

\draw[arrow] (reg.south) -- (conclusion.north);

\end{tikzpicture}

\caption{Conceptual roadmap of the paper. Starting from distributional Bellman dynamics, we identify a structural obstruction arising from the singular nature of the Cram\'er geometry, which prevents a direct stable Hilbert space representation. Our analysis therefore proceeds in two stages. First, we study the Bellman dynamics intrinsically at the level of CDFs, where contraction and fixed-point properties can be established under the Cram\'er geometry. Second, we construct a regularised spectral representation that realises these dynamics within a Hilbert space framework while preserving the underlying Bellman operator. The regularisation introduces a stable family of Hilbert geometries whose singular limit recovers the intrinsic Cram\'er metric.}
\label{fig:conceptual_roadmap}
\end{figure}
Our main contributions are as follows:
\begin{itemize}[leftmargin=*, itemsep=0.3em]
\item We present an intrinsic analysis of distributional Bellman dynamics at the raw CDF level under the Cram\'er geometry, establishing contraction, fixed-point existence, and convergence independently of any particular representation. This analysis reveals that the Bellman update acts affinely on distributions, with the Cram\'er geometry encoding the comparative structure that governs this affine action.

\item We construct a canonical Hilbert space envelope via exact conjugation that realises this intrinsic CDF geometry without altering the Bellman dynamics. For each fixed regularisation parameter, this yields a genuine Hilbert space in which the induced linear action of the Bellman update on CDF differences admits a bounded operator realisation; the native Cram\'er geometry is recovered monotonically in the zero-regularisation limit.

\item We establish the first spectral framework for analysing distributional Bellman updates by making explicit, via a functional-analytic interpretation, the operator structure underlying Cram\'er-based distributional Bellman dynamics. This lays the groundwork for systematic spectral and perturbation analyses that were previously inaccessible within the raw Cram\'er metric.
\end{itemize}

Together, these results establish a unified geometric and functional-analytic framework for distributional Bellman dynamics. By separating probabilistic dynamics from analytical representation, the proposed framework resolves a longstanding structural obstruction in Cram\'er-based DRL and provides a foundation for deeper theoretical investigations of distributional reinforcement learning.

\medskip
The remainder of the paper is organised as follows. Section~\ref{sec:prelims} introduces the probabilistic formulation of distributional policy evaluation and the Cramér geometry on cumulative distribution functions. Section~\ref{sec:methodology} outlines the three-stage representation strategy that guides our analysis, separating intrinsic dynamics from analytical coordinates. Section~\ref{sec:hilbert} constructs a regularised spectral Hilbert representation and establishes its exact correspondence with the intrinsic CDF geometry. Section~\ref{sec:bellman_embedding} develops the CDF-level Bellman analysis, derives the conjugate spectral representation, and studies fixed points and convergence. Sections~\ref{sec:relatedwork} and~\ref{sec:discussion} discuss related work and broader implications, respectively, and Section~\ref{sec:conclusion} concludes.

\section{Preliminaries and Analytical Background}
\label{sec:prelims}
To place Cram\'er-based distributional Bellman theory within a functional-analytic framework, we first make precise the probabilistic objects and geometric structures underlying distributional reinforcement learning. This section introduces the distributional formulation of policy evaluation, fixes notation for return random variables and return-distribution fields, and formalises the CDF geometry induced by the Cram\'er metric together with its natural working domain. We then review metrics on return distributions commonly used in distributional reinforcement learning and summarize the CDF-Fourier correspondence that underlies our subsequent spectral constructions.

\subsection{Bellman Dynamics and Distributional Policy Evaluation}
\label{subsec:mdp_prelims}
We recall the Markov decision process (MDP) framework~\citep{Suttonbook} and the distributional formulation of policy evaluation, fixing notation and clarifying the probabilistic objects underlying the CDF-level analysis developed later. In particular, we restrict attention throughout this section to policy evaluation under a fixed policy and do not consider policy optimisation or control. All statements in this subsection are purely probabilistic and do not rely on any spectral or Hilbert space constructions.

We consider a discounted infinite-horizon Markov decision process $<\mathcal S,\mathcal A, \mathcal P,\mathcal R, \gamma>,$ where $\mathcal S$ denotes the state space and $\mathcal A$ the action space, and $\gamma\in (0,1)$ is the discount factor governing the relative weight of future rewards.

For $s\in\mathcal S$ and $a\in\mathcal A$, $\mathcal P(\cdot \mid s,a)$ is the transition kernel on $\mathcal S$, and $\mathcal R(s,a,s')$ is the reward function on $\mathbb R$, specifying the one-step reward given the state-action-next-state triple $(s,a,s')$.

Given any action-generation mechanism, a \emph{trajectory} $\{S_t,A_t,R_t\}_{t\ge 0}$ is a stochastic process evolving according to $S_{t+1} \sim \mathcal P(\cdot \mid S_t,A_t)$, $R_t = \mathcal R( S_t,A_t,S_{t+1}),$ with randomness induced jointly by the transition and by the mechanism governing action selection, where $t$ denotes the discrete time index.

A (stationary) policy $\pi$ is a stochastic kernel on $\mathcal A$ given $\mathcal S$. When actions are generated according to $A_t\sim \pi(\cdot\mid S_t)$ for all $t\ge 0$, the joint process $\{S_t,A_t,R_t\}_{t\ge 0}$ evolves according to fixed transition, reward, and action-selection kernels and is Markov with respect to the state-action pair.

\medskip

\textbf{Return variable.}
Along a trajectory, define the discounted return variable
\begin{equation}
Z_t := \sum_{k=0}^{\infty} \gamma^k R_{t+k}, \qquad t \ge 0,
\label{eq:return_def}
\end{equation}
whenever the series is well defined. The return variable satisfies the one-step decomposition
\begin{equation}
Z_t = R_t + \gamma Z_{t+1}.
\label{eq:return_decomposition}
\end{equation}

For any action-generation mechanism governing actions after time $t$, define the state-action conditioned \emph{return random variable}
\begin{equation}
Z(s,a)
:= Z_t \ \text{conditioned on}\ (S_t=s,\ A_t=a),
\label{eq:state_action_return}
\end{equation}
which is well defined for any fixed $t\ge 0$ by time-homogeneity of the underlying process.

When actions are generated according to a stationary policy $\pi$, we define the policy-conditioned state-action \emph{return random variable} by
\begin{equation}
Z^\pi(s,a)
\;:=\;
Z_t \ \text{conditioned on}\ (S_t=s,\ A_t=a),
\quad
A_{t+k}\sim \pi(\cdot\mid S_{t+k}) \ \text{for all } k\ge 1,
\label{eq:policy_conditioned_return}
\end{equation}
which is well defined for any $t\ge 0$ by time homogeneity.

For notational convenience, we write $R(s,a)$ for the one-step reward random variable obtained as follows: first sample $S'\sim \mathcal  P(\cdot\mid s,a)$, and then calculate $R(s,a) =\mathcal R(s,a,S').$
Equivalently, $R(s,a)$ denotes a random variable with the same distribution as $R_t$ conditional on $(S_t,A_t)=(s,a)$.

Conditioning the one-step decomposition \eqref{eq:return_decomposition} on $(S_t,A_t)=(s,a)$ yields the Bellman recursion
\begin{equation}
Z(s,a) = R(s,a) + \gamma Z(S',A'),
\label{eq:rv_bellman_generic}
\end{equation}
where $S'\sim \mathcal P(\cdot\mid s,a)$ and $A'$ is generated by the action-selection mechanism governing future actions.

In particular, when actions after time $t$ are generated according to a policy $\pi$, the recursion \eqref{eq:rv_bellman_generic} specialises, by the definition \eqref{eq:policy_conditioned_return}, to
\begin{equation}
Z^\pi(s,a) = R(s,a) + \gamma Z^\pi(S',A'),
\label{eq:rv_bellman}
\end{equation}
with $S' \sim \mathcal P(\cdot\mid s,a)$ and $A'\sim \pi(\cdot\mid S')$.

\medskip

\textbf{Distributional Bellman operator.}
The random-variable Bellman equation \eqref{eq:rv_bellman_generic} expresses the one-step decomposition of returns: conditioned on a state-action pair $(s,a)$, the return decomposes into an immediate reward and a discounted future return evaluated at the next state-action pair. This identity is purely probabilistic and does not depend on a particular
policy.

When actions are generated according to a fixed policy $\pi$, conditioning \eqref{eq:rv_bellman_generic} yields the policy-specific recursion \eqref{eq:rv_bellman}, which serves as the starting point for distributional policy evaluation.

Distributional reinforcement learning transfers this recursion from random variables to their induced probability laws. While \eqref{eq:rv_bellman_generic} and \eqref{eq:rv_bellman} describe recursions at the level of random variables, they do not define a deterministic Bellman operator, as the update involves additional randomness through the next state-action pair~\citep{drlbook_bellemare_distributional_2023}. Passing to probability laws yields a deterministic operator acting on return distributions.

To express this update uniformly across state-action pairs, we introduce the notion of a return-distribution field. A \emph{return-distribution field} is a mapping
\begin{equation}
\mathcal Z : \mathcal S \times \mathcal A \longrightarrow \mathscr P(\mathbb R),
\label{def:field_z}
\end{equation}
which assigns to each state-action pair $(s,a)$ a probability law on $\mathbb R$. Here $\mathscr P(\mathbb R)$ denotes the set of all probability laws on $\mathbb R$. The field $\mathcal Z$ is related to the state-action return random variables by $Z(s,a) \sim \mathcal Z(s,a),\text{ or equivalently, } \mathcal Z(s,a) = \mathrm{Law}\!\left(Z(s,a)\right),$ where $\mathrm{Law}(\cdot)$ denotes taking the probability distribution of a random variable.

Given such a field $\mathcal Z$ and a fixed policy $\pi$, the \emph{distributional Bellman operator} $\mathcal T^\pi_{D}$ is defined pointwise by taking the law of the corresponding random-variable Bellman update~\citep{drlbook_bellemare_distributional_2023,c51,c51theory}. Specifically, for each $(s,a)\in\mathcal S\times\mathcal A$, we set
\begin{equation}
(\mathcal T^\pi_{D} \mathcal Z)(s,a)
:= \mathrm{Law}\!\left(
  R(s,a) + \gamma Z'
\right),
\label{eq:dist_bellman_operator_def}
\end{equation}
where $S'\sim \mathcal P(\cdot\mid s,a)$, $A'\sim \pi(\cdot\mid S')$, and, conditionally on $(S',A')$, the auxiliary random variable $Z'$ has law $\mathcal Z(S',A')$.

The \emph{policy-induced return-distribution field} $\mathcal Z^\pi := \{\mathcal Z^\pi(s,a)\}_{(s,a)\in\mathcal S\times\mathcal A}$ is defined pointwisely by $\mathcal Z^\pi(s,a) := \mathrm{Law}\!\left(Z^\pi(s,a)\right)$. By construction, taking laws on both sides of \eqref{eq:rv_bellman} yields the fixed-point~\citep{drlbook_bellemare_distributional_2023} equation
\begin{equation}
\mathcal Z^\pi = \mathcal T^\pi_{D} \mathcal Z^\pi.
\label{eq:dist_bellman_fp}
\end{equation}
Equation~\eqref{eq:dist_bellman_fp} characterises distributional policy evaluation under a fixed policy as a deterministic fixed-point problem on return-distribution fields.

\medskip
\textbf{Notational conventions.}
For notational simplicity, we use the shorthand $(s',a') \sim \mathcal P^\pi(\cdot\mid s,a)$ to denote the joint distribution obtained by first sampling $s'\sim \mathcal P(\cdot\mid s,a)$ and then sampling $a'\sim \pi(\cdot\mid s')$. Equivalently, $\mathcal P^\pi(\cdot\mid s,a)$ denotes the conditional law of $(S_{t+1},A_{t+1})$, i.e. $(S',A')$ discussed above in equations~\eqref{eq:rv_bellman_generic} and~\eqref{eq:rv_bellman}, given $(S_t,A_t)=(s,a)$ under policy $\pi$.
\subsection{Metrics on Return Distributions}
\label{subsec:metrics_return}
The analysis of distributional Bellman dynamics requires a choice of metric on spaces of return distributions~\citep{drlbook_bellemare_distributional_2023}. Unlike classical reinforcement learning, where value functions are compared pointwise, distributional reinforcement learning operates on probability laws and is therefore inherently metric-dependent~\citep{metric1,metric2}.

Throughout this paper, return distributions are represented through their \emph{cumulative distribution functions} (CDFs)~\citep{returncdf}. For a real-valued random variable $X$, its CDF is defined by~\citep{prob_intro}
\begin{equation}
F_X(x) := \mathbb P(X \le x), \qquad x\in\mathbb R .
\label{eq:cdf_def}
\end{equation}
When the law of $X$ is denoted by $\mathcal X$, we also write $F_{\mathcal X}:=F_X$. This representation reduces comparisons of distributions to comparisons of functions on $\mathbb R$.

Several metrics have been considered in the distributional reinforcement learning literature. Two prominent families are Wasserstein distances~\citep{qrdqn-Dabney18,iqn,fqf,dsacma} and CDF-based $L^p$ distances~\citep{drlbook_bellemare_distributional_2023}. Wasserstein metrics induce a weak topology with explicit moment control, but contraction of distributional Bellman updates under Wasserstein distances generally requires restrictive assumptions and does not hold in full generality~\citep{drlbook_bellemare_distributional_2023}.

An alternative is to compare distributions through their CDFs. Given two probability laws $P_1$ and $P_2$ on $\mathbb R$, the CDF-based $L^p$ distance~\citep{lpmetirc}is defined by
\[
\ell_p(F_{P_1},F_{P_2})
:=
\left(
\int_{\mathbb R}
\bigl|F_{P_1}(x)-F_{P_2}(x)\bigr|^p\,dx
\right)^{1/p},
\]
whenever finite. Such metrics are stronger than Wasserstein distances when well defined and allow one to work directly with function-level representations.

A distinguished case arises when $p=2$. In this setting, the induced distance coincides with the \emph{Cram\'er distance}~\citep{cramer1,cramer2} and is generated by an $L^2$ inner product on differences of CDFs. This inner-product structure~\citep{functional1} singles out the Cram\'er metric as a natural candidate for contraction-based analyses of distributional Bellman operators and as the geometric foundation for the framework developed in this paper.
\subsection{Cram\'er Geometry and its Working Domain}
\label{subsec:domain}
Having introduced CDFs as a canonical representation of return distributions and motivated the use of CDF-based metrics in Section~\ref{subsec:metrics_return}, we now focus on the specific geometry induced by the Cram\'er distance. This geometry plays a distinguished role in distributional reinforcement learning, as it yields a contraction property for the distributional Bellman operator while remaining strong enough to admit a functional-analytic description at the level of CDFs~\citep{drllinear,cramer-analysis}.

\paragraph{Cram\'er distance on CDFs.}
Let $P_1$ and $P_2$ be probability laws on $\mathbb R$, with cumulative distribution functions $F_{P_1}$ and $F_{P_2}$ defined as in \eqref{eq:cdf_def}. The Cram\'er distance between $P_1$ and $P_2$ is defined by
\begin{equation}
d_C(P_1,P_2)
\;:=\;
\ell_2(F_{P_1},F_{P_2})
\;=\;
\left(
\int_{\mathbb R}
\bigl(F_{P_1}(x)-F_{P_2}(x)\bigr)^2\,dx
\right)^{1/2},
\label{eq:cramer_def}
\end{equation}
where $\ell_2(\cdot,\cdot)$ denotes the standard $L^2(\mathbb R)$ distance~\citep{lpmetirc}.
That is, the Cram\'er distance coincides with the $L^2$ distance between cumulative distribution functions whenever this quantity is finite.

This identification endows the space of return distributions with an $L^2$-type geometry at the level of CDFs and provides the geometric setting in which contraction properties of the distributional Bellman operator are naturally formulated~\citep{drlstat}.

\paragraph{Working domain.}
The Cram\'er distance is not finite for arbitrary probability laws on $\mathbb R$. We therefore work on the natural domain on which the induced geometry is well defined. Let $\delta_0$ denote the Dirac measure at the origin, whose cumulative distribution function is given by
\begin{equation}
F_{\delta_0}(x) = \mathbbm{1}\{x \ge 0\}.
\label{eq:delta0_cdf}
\end{equation}
We define
\begin{equation}
\Gamma_F
:= \Bigl\{
        F_P:\mathbb R\to[0,1]
        \;:\;
        F_P \text{ is a cumulative distribution function and }
        d_C(P,\delta_0) < \infty
   \Bigr\},
\label{eq:GammaF_def}
\end{equation}
that is, the set of CDFs whose Cram\'er distance to $\delta_0$ is finite.
Equivalently,
\[
\Gamma_F
=
\Bigl\{
F_P :
F_P \text{ is a CDF and }
F_P - F_{\delta_0}\in L^2(\mathbb R)
\Bigr\}.
\]
Throughout this paper, unless otherwise stated, all return distributions are assumed to have cumulative distribution functions belonging to $\Gamma_F$. This domain is invariant under the transformations appearing in the distributional Bellman update, including reward translation and discount-induced scaling~\citep{c51}.

While the Cram\'er geometry provides a natural $L^2$ framework at the level of CDFs, it does not by itself furnish a Hilbert space realisation suitable for spectral analysis of Bellman transformations. To further analyse the structure induced by the Cram\'er geometry, it is convenient to pass from CDFs to an equivalent spectral representation, which we introduce next.

\subsection{CDF Representations and their Fourier Correspondence}
\label{subsec:cdf_fourier_background}
While the Cram\'er geometry~\citep{cramer-analysis} is formulated at the level of CDFs, an equivalent and often more transparent description is obtained in the Fourier domain~\citep{drlstat}. In particular, operations appearing in distributional Bellman updates admit simple algebraic representations in frequency space, which will play a central role in the spectral constructions developed later.

For a probability law $P$ on $\mathbb R$, let $X\sim P$ and denote its characteristic function~\citep{prob_intro} by
\[
\phi_P(\omega) := \mathbb E\!\left[e^{i\omega X}\right],
\qquad \omega\in\mathbb R.
\]
The characteristic function uniquely determines the law of $X$ and is classically related to the cumulative distribution function through Fourier analysis.

Throughout this paper, we adopt the symmetric Fourier transform~\citep{fourier} convention
\begin{equation}
\widehat f(\omega)
= \frac{1}{\sqrt{2\pi}}\int_{\mathbb R} f(x)e^{-i\omega x}\,dx,
\qquad
f(x)
= \frac{1}{\sqrt{2\pi}}\int_{\mathbb R}\widehat f(\omega)e^{i\omega x}\,d\omega,
\label{eq:fourier_convention}
\end{equation}
for which Plancherel's identity~\citep{plancherel} yields
\begin{equation}
\|f\|_{L^2(\mathbb R)}^2
=\int_{\mathbb R} | f(x)|^2\,dx=
\int_{\mathbb R} |\widehat f(\omega)|^2\,d\omega =\|\widehat f\|_{L^2(\mathbb R)}^2.
\label{eq:plancherel}
\end{equation}

This correspondence allows CDF-based distances, including the Cram\'er distance, to be reinterpreted as weighted $L^2$ quantities in frequency space. The resulting spectral viewpoint provides the technical foundation for the Hilbert space constructions introduced in Section~\ref{sec:hilbert}.
\subsection{Limitations of Hilbert Realisations under the Cram\'er Geometry}
\label{subsec:inner_product_limits}
At first sight, the Cram\'er geometry appears to be naturally compatible with a Hilbert space formulation. Indeed, on its working domain $\Gamma_F$, the Cram\'er distance coincides with the $L^2(\mathbb R)$ norm of differences of CDFs, suggesting that distributional Bellman dynamics might admit a direct Hilbert space realisation~\citep{functional1}.

However, this intuition is misleading. When the Cram\'er geometry is examined through the Fourier correspondence outlined in Section~\ref{subsec:cdf_fourier_background}, a fundamental obstruction becomes apparent. Although Plancherel's identity~\citep{plancherel} preserves $L^2$ norms under the Fourier transform, the Cram\'er metric corresponds in the frequency domain to a weighted $L^2$ structure with a singular low-frequency behaviour~\citep{drlstat}. In particular, the induced spectral weight diverges as $\omega\to 0$ (here $\omega$ refers to the frequency, and this description can be seen from equation~\eqref{eq:cramer_phi_formula}), producing a geometry that does not admit a stable Hilbert space realisation in spectral coordinates.

This singularity is benign for metric arguments carried out directly at the CDF level: contraction, invariance, and fixed-point properties of the distributional Bellman operator~\citep{cramer-analysis} are naturally formulated in CDF space and are unaffected by the low-frequency behaviour. However, when the Cram\'er geometry is examined through its Fourier representation, a structural limitation becomes apparent. The frequency-domain formula encodes the Cram\'er distance via differences of characteristic functions weighted by the singular factor $\omega^{-2}$ (see equation~\eqref{eq:cramer_phi_formula}). While this representation faithfully captures the metric geometry of CDF differences, the induced spectral structure is not itself a stable Hilbert space in which Bellman transformations admit a direct linear realisation.

As a result, while the Cram\'er metric provides an intrinsic and powerful geometric framework for analysing distributional Bellman dynamics at the level of CDFs, it does not by itself furnish a functional-analytic setting in which the induced linear action of the Bellman update on CDF differences can be studied as an operator on a Hilbert space. Resolving this tension therefore requires a modification at the level of geometry, rather than at the level of the Bellman operator itself.

Section~\ref{sec:hilbert} develops such a geometric resolution. We introduce a family of regularised Hilbert spaces that provide a stable linear realisation of the centred Cram\'er geometry, while recovering the intrinsic Cram\'er distance in a controlled limit.

\section{Methodological Framework of the Present Analysis}
\label{sec:methodology}
As outlined in the introduction, our analysis is guided by a representation strategy designed to separate the intrinsic structure of distributional Bellman dynamics from the choice of analytical coordinates. Rather than postulating Bellman operators directly in a Hilbert space, we proceed by first isolating all probabilistic and geometric properties at the level where the Bellman update is naturally defined (the CDF level), and only subsequently transporting these structures into a linear spectral setting.

Concretely, our methodology consists of the following three stages.

\begin{enumerate}
\item \textit{CDF-level formulation and analysis.} The distributional Bellman update is formulated and analysed entirely at the level of cumulative distribution functions. Working on the finite Cram\'er domain $\Gamma_F$ introduced in \eqref{eq:GammaF_def}, this representation provides an intrinsic $L^2$-type geometry on differences of CDFs relative to the reference distribution $F_{\delta_0}$. All contraction, invariance, and fixed-point arguments governing policy evaluation are established directly in this CDF space, without reference to any spectral or Hilbert space constructions.

\item \textit{Exact transport to a spectral Hilbert representation.} 
To enable a spectral and linear description of the induced action of the Bellman update on CDF differences, we introduce an exact transport that conjugates CDF representations into a regularised spectral Hilbert space. This transport is constructed so that the spectral operators arise solely by conjugation of the CDF-level Bellman update, and no modification of the Bellman dynamics themselves is introduced. As a result, all probabilistic and dynamical properties are inherited unchanged from the CDF level.

\item \textit{Geometric regularisation and recovery of the Cram\'er geometry.} The spectral representation is equipped with a one-parameter family of Hilbertian geometries that regularise the singular low-frequency structure induced by the Cram\'er metric. For each fixed regularisation parameter, this yields a stable Hilbert space in which the induced linear action of the Bellman update on centred CDF representations admits a well-defined operator realisation. At the same time, the induced Hilbert metrics are shown to converge monotonically to the intrinsic Cram\'er distance as the regularisation parameter tends to zero, ensuring that the spectral formulation remains geometrically faithful to the original CDF-based analysis.
\end{enumerate}

This three-stage construction provides a unified framework in which distributional Bellman dynamics can be analysed rigorously at the CDF level, while admitting an exact and geometrically controlled realisation in a Hilbert space for further structural and spectral investigations.

With this analytical strategy in place, we now proceed to the explicit
construction of the regularised Hilbert representation.

\section{Cram\'er Geometry and its Hilbert Envelope}
\label{sec:hilbert}
Section~\ref{sec:prelims} introduced CDFs as the primary representation of return distributions and fixed the Cram\'er metric as the associated geometry, defined intrinsically at the CDF level.

We now construct a Hilbert space realisation of the \emph{centred} CDF geometry that provides a functional-analytic representation of Cram\'er-based distributional Bellman dynamics. The centring arises intrinsically from the Cram\'er geometry itself, which measures differences between distributions rather than distributions in isolation. This realisation is purely representational: it does not modify the Bellman update or the underlying Cram\'er geometry. Rather, it provides an explicit spectral setting in which the induced linear action of the Bellman update on centred CDF differences can be realised as an operator on a Hilbert space, while preserving the original Cram\'er distances.

In spectral coordinates, the Cram\'er geometry corresponds to a weighted $L^2$ structure with a singular low-frequency behaviour. This singularity is intrinsic and cannot be removed by an equivalent renorming of the underlying function space, as it reflects the $\omega^{-2}$ weighting inherent in the Cram\'er metric. As a result, it obstructs a direct Hilbert space realisation based on standard spectral multipliers.

To resolve this obstruction, we introduce a one-parameter family of Hilbert spaces $\{\mathcal H_{\epsilon}\}_{\epsilon>0}$ obtained by a controlled regularisation of the Cram\'er geometry. The regularisation is chosen to be minimal and monotone: it affects only the singular low-frequency behaviour, leaves the representation of distributions unchanged, and admits an exact recovery of the Cram\'er metric in the zero-regularisation limit. For each fixed $\epsilon>0$ the geometry is Hilbertian and admits a stable spectral realisation; as $\epsilon\downarrow0$, the induced metric converges monotonically to the Cram\'er metric on its finite domain. In this sense, $\{\mathcal H_{\epsilon}\}_{\epsilon>0}$ forms a \emph{Hilbert envelope} of the Cram\'er geometry.
\subsection{The Cram\'er Distance and its Fourier Representation}
\label{subsec:cramer_bridge}
This subsection makes precise the spectral obstruction discussed above by deriving a Fourier-domain representation of the Cram\'er distance. Starting from the CDF-level identity
\[
d_C(P_1,P_2)
=
\|F_{P_1}-F_{P_2}\|_{L^2(\mathbb R)},
\]
we express the distance in frequency coordinates via Plancherel's theorem. The key step is an explicit formula for the Fourier transform of a difference of CDFs in terms of the corresponding characteristic functions, which reveals the singular low-frequency weighting that motivates the regularised Hilbert envelope constructed later.

Let $P_1$ and $P_2$ be probability laws with $F_{P_1},F_{P_2}\in\Gamma_F$, and define
\[
H(x):=F_{P_1}(x)-F_{P_2}(x).
\]
By definition of the Cram\'er distance and Plancherel's identity,
\[
d_C^2(P_1,P_2)
=
\|H\|_{L^2(\mathbb R)}^2
=
\|\widehat H\|_{L^2(\mathbb R)}^2
=
\int_{\mathbb R}|\widehat H(\omega)|^2\,d\omega.
\]

To relate $\widehat H$ to the underlying probability laws, we recall an explicit Fourier-analytic identity expressing $\widehat H$ in terms of the associated characteristic functions~\citep{cdftocf,integral}.

\begin{lemma}[Fourier transform of a CDF difference]
\label{lemma:fourier_cdf_difference}
Let $P_1,P_2$ be probability distributions with
$F_{P_1},F_{P_2}\in\Gamma_F$, and define
\[
H(x):=F_{P_1}(x)-F_{P_2}(x).
\]
Then for every $\omega\neq0$,
\begin{equation}
\widehat H(\omega)
=
\frac{\phi_{P_1}(-\omega)-\phi_{P_2}(-\omega)}
{i\omega\sqrt{2\pi}},
\label{eq:fourier_cdf_formula}
\end{equation}
where $\phi_{P}(\omega)
=
\mathbb E_{P}\!\left[e^{i\omega X}\right]$
denotes the characteristic function of $P$.
\end{lemma}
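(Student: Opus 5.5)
Since $H\in L^2(\mathbb R)$ by the definition of $\Gamma_F$ (because $H=(F_{P_1}-F_{\delta_0})-(F_{P_2}-F_{\delta_0})$), $\widehat H$ is defined a priori only as the Plancherel transform, i.e.\ as an $L^2$ limit, and need not be given pointwise by an absolutely convergent integral. The plan is therefore to identify $\widehat H$ as a tempered distribution and then read off its values on $\{\omega\neq0\}$, where it turns out to be a continuous function.

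\textbf{Step 1 (distributional derivative).} The distributional derivative of $H$ is the finite signed measure $\mu:=P_1-P_2$, since $F_P'=P$ in the sense of distributions. The Fourier transform of $\mu$ in the convention \eqref{eq:fourier_convention} is
\[
\widehat\mu(\omega)=\frac{1}{\sqrt{2\pi}}\int e^{-i\omega x}\,d\mu(x)=\frac{\phi_{P_1}(-\omega)-\phi_{P_2}(-\omega)}{\sqrt{2\pi}},
\]
which is a bounded continuous function. The differentiation rule $\widehat{H'}=i\omega\widehat H$ holds for tempered distributions, and $H\in L^2$ is tempered. Hence $i\omega\,\widehat H(\omega)=\widehat\mu(\omega)$ as tempered distributions. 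On the open set $\{\omega\neq0\}$ we can multiply by the smooth function $1/(i\omega)$, which gives $\widehat H=\widehat\mu/(i\omega)$ as distributions there. Since $\widehat H\in L^2$ is a function, the two coincide almost everywhere on $\omega\neq0$. Taking the continuous representative yields \eqref{eq:fourier_cdf_formula}, which is the natural pointwise meaning of ``for every $\omega\neq0$''.

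\textbf{Step 2 (elementary cross-check).} As an alternative that avoids distribution theory, truncate: set $\widehat H_R(\omega)=\frac{1}{\sqrt{2\pi}}\int_{-R}^{R}H(x)e^{-i\omega x}dx$, which converges to $\widehat H$ in $L^2$. For $\omega\neq0$, integrate by parts in Stieltjes form:
\[
\int_{-R}^R H e^{-i\omega x}dx=\Bigl[\tfrac{H(x)e^{-i\omega x}}{-i\omega}\Bigr]_{-R}^{R}+\frac{1}{i\omega}\int_{(-R,R]}e^{-i\omega x}d\mu(x).
\]
The boundary terms vanish as $R\to\infty$, because $H(\pm R)\to0$: both CDFs tend to $0$ at $-\infty$ and to $1$ at $+\infty$. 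The measure integral converges to $\sqrt{2\pi}\,\widehat\mu(\omega)$ by dominated convergence. Thus $\widehat H_R$ converges pointwise on $\omega\neq0$ to the right-hand side of \eqref{eq:fourier_cdf_formula}. Since a subsequence of an $L^2$-convergent sequence converges almost everywhere, the two limits agree almost everywhere. Jump points of $H$ cause no trouble, because Stieltjes integration by parts handles the right-continuous $H$ against $\mu$.

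\textbf{Main obstacle.} The delicate point is interpretive rather than computational. $H$ need not be in $L^1$, so ``$\widehat H(\omega)$ for every $\omega$'' must be understood as the continuous representative of the $L^2$ class on $\omega\neq0$. I would state this explicitly, and note that the resulting formula is exactly what is fed into Plancherel's identity \eqref{eq:plancherel}, where only almost-everywhere equality matters.
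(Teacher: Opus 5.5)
Your proposal is correct, and its two steps relate to the paper in different ways.

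Step~2 is the paper's own argument made rigorous. The paper integrates by parts in Stieltjes form directly over all of $\mathbb R$, with $u=H$ and $v=-e^{-i\omega x}/(i\omega)$. It drops the boundary term because $H(x)\to0$ at $\pm\infty$, and then uses $dH=P_1-P_2$. However, it starts from $\widehat H(\omega)=\frac{1}{\sqrt{2\pi}}\int_{\mathbb R}H(x)e^{-i\omega x}\,dx$ as if this were a Lebesgue integral, which it need not be. For example, take $P_2=\delta_0$ and $P_1$ with $F_{P_1}(x)=0$ for $x<1$ and $1-F_{P_1}(x)=x^{-3/4}$ for $x\ge1$. Then $H\in L^2\setminus L^1$, and this is exactly the centring case used to define $\Phi$. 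Your truncation to $[-R,R]$, the $L^2$ convergence $\widehat H_R\to\widehat H$, and the a.e.\ identification along a subsequence supply the missing link. They show that the improper integral exists for every $\omega\neq0$ and coincides a.e.\ with the Plancherel transform. That transform is the object actually fed into \eqref{eq:plancherel} and into the isometry of Theorem~\ref{thm:cdf_spectral_isomorphism}.

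Step~1 is a genuinely different route. You identify $H'=P_1-P_2$ as tempered distributions and apply $\widehat{H'}=i\omega\widehat H$, so no boundary-term analysis is needed at all. This route also makes explicit that $i\omega\widehat H=\widehat\mu$ determines $\widehat H$ only away from $\omega=0$; a multiple of $\delta_0$ is excluded only because $\widehat H\in L^2$. Its cost is the distributional machinery. The truncation argument, by contrast, is elementary and gives a literal pointwise value for each $\omega\neq0$. That is the most natural reading of the lemma's ``for every $\omega\neq0$''.
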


\begin{proof}
A complete proof is provided in Appendix~\ref{appendix:proof_fourier_cdf_difference}.
\end{proof}

Taking modulus in~\eqref{eq:fourier_cdf_formula} yields
\[
|\widehat H(\omega)|^2
=
\frac{|\phi_{P_1}(\omega)-\phi_{P_2}(\omega)|^2}
{2\pi\,\omega^2},
\qquad \omega\neq0.
\]
Substituting into the Plancherel representation gives the classical characteristic-function form of the Cram\'er distance~\citep{drlbook_bellemare_distributional_2023}:
\begin{equation}
d_C^2(P_1,P_2)
=
\frac{1}{2\pi}
\int_{\mathbb R}
\frac{|\phi_{P_1}(\omega)-\phi_{P_2}(\omega)|^2}
{\omega^2}\,d\omega.
\label{eq:cramer_phi_formula}
\end{equation}

Formula~\eqref{eq:cramer_phi_formula} reveals the origin of the zero-frequency singularity: spectral discrepancies are weighted by the unbounded factor $\omega^{-2}$. Although the cancellation $\phi_{P_1}(0)=\phi_{P_2}(0)=1$ ensures finiteness of the integral on the Cram\'er domain, the resulting spectral inner product is generated by a singular, non-uniform weight at low frequencies. As a consequence, the induced spectral geometry does not provide a stable Hilbert space structure in which the linear action of the Bellman update on centred CDF differences can be realised as a bounded operator. This structural limitation motivates the regularised Hilbert geometries introduced in the next subsection.
\subsection{A Spectrally Regularised Hilbert Envelope of the Cram\'er Geometry}
\label{subsec:cramer_spectral_space}
The Fourier representation~\eqref{eq:cramer_phi_formula} shows that the Cram\'er geometry corresponds, in spectral coordinates, to a singular low-frequency weighting. This observation identifies the precise obstruction to a direct Hilbert space formulation: while the geometry is well defined on the Cram\'er domain, it is not generated by a uniformly bounded spectral multiplier. Although this singularity is harmless for metric arguments carried out at the CDF level, it prevents the induced linear action of the Bellman update on centred CDF differences from being realised as a bounded operator in the associated spectral space. This identifies the precise spectral mechanism underlying the obstruction discussed in Section~\ref{subsec:inner_product_limits}.

Our approach resolves this obstruction at the level of geometry rather than by modifying return distributions or Bellman dynamics. Specifically, we introduce a one-parameter family of regularised Hilbert geometries that remain well defined for fixed regularisation while recovering the intrinsic Cram\'er geometry in the zero-regularisation limit. In this sense, the resulting spaces form a \emph{Hilbert envelope} of the Cram\'er geometry.

\paragraph{A regularised spectral Hilbert space.}
Motivated by the singular factor $\omega^{-2}$ in \eqref{eq:cramer_phi_formula}, we define a family of regularised spectral geometries by replacing this unbounded weight with $(\omega^2+\epsilon)^{-1}$. For each $\epsilon>0$, let
\[
\mathcal H_{\epsilon}
:= \Bigl\{
f \;\mid\;
\|f\|_{\mathcal H_{\epsilon}}^2
:= \frac{1}{2\pi}\!\int_{\mathbb R}
\frac{|\widehat f(\omega)|^2}{\omega^2+\epsilon}\,d\omega <\infty
\Bigr\},
\]
equipped with the inner product
\begin{equation}
\langle f,g\rangle_{\mathcal H_{\epsilon}}
:= \frac{1}{2\pi}\!\int_{\mathbb R}
\frac{\widehat f(\omega)\,\overline{\widehat g(\omega)}}{\omega^2+\epsilon}\,d\omega ,
\label{eq:cramer_reg_inner}
\end{equation}
where $\overline{\widehat g(\omega)}$ denotes the complex conjugate of
$\widehat g(\omega)$~\citep{functional1}. Via the Fourier transform, $\mathcal H_{\epsilon}$ is isometrically isomorphic to a weighted $L^2$ space on the frequency domain and therefore inherits a complete Hilbert structure.

\begin{proposition}
\label{prop:cramer_space_hilbert}
For every $\epsilon>0$, the space $\mathcal H_{\epsilon}$ equipped with the inner product~\eqref{eq:cramer_reg_inner} is a Hilbert space.
\end{proposition}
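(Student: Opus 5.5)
The approach is to identify $\mathcal H_{\epsilon}$, through the Fourier transform, with a weighted $L^2$ space on the frequency line. Completeness then follows from completeness of weighted $L^2$ spaces. The first step is to fix what the elements of $\mathcal H_{\epsilon}$ are. The weight $(\omega^2+\epsilon)^{-1}$ is bounded above by $\epsilon^{-1}$ but decays at infinity, so $\widehat f$ need not lie in $L^2(\mathbb R)$. I will therefore take $\mathcal H_{\epsilon}$ to consist of tempered distributions $f$ whose Fourier transform is a measurable function with finite weighted norm, that is, $\widehat f\in L^2(\mathbb R,\mu_\epsilon)$ with $d\mu_\epsilon(\omega)=\frac{1}{2\pi}(\omega^2+\epsilon)^{-1}d\omega$, modulo $\mu_\epsilon$-a.e.\ equality. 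This space contains $L^2(\mathbb R)$: by Plancherel~\eqref{eq:plancherel}, $\|f\|_{\mathcal H_\epsilon}^2\le (2\pi\epsilon)^{-1}\|f\|_{L^2}^2$. In particular it contains every centred CDF difference $F_P-F_{\delta_0}$ with $F_P\in\Gamma_F$.

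Next, I check that~\eqref{eq:cramer_reg_inner} is an inner product. It is well defined by Cauchy--Schwarz in $L^2(\mu_\epsilon)$, and it is sesquilinear and conjugate symmetric by inspection. For definiteness, $\|f\|_{\mathcal H_\epsilon}=0$ forces $\widehat f=0$ a.e., because the density $(\omega^2+\epsilon)^{-1}$ is strictly positive everywhere (here $\epsilon>0$ is used). Injectivity of the Fourier transform on tempered distributions then gives $f=0$.

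For completeness, define $U:\mathcal H_\epsilon\to L^2(\mu_\epsilon)$ by $Uf=\widehat f$. By construction $U$ is linear and isometric. It is also surjective: any $g\in L^2(\mu_\epsilon)$ satisfies $\int|g|(\omega^2+\epsilon)^{-1/2}\cdot(\omega^2+\epsilon)^{-1/2}\,d\omega<\infty$ locally. More precisely, $g(\omega)(\omega^2+\epsilon)^{-1/2}\in L^2$, so $g$ is a polynomially weighted $L^2$ function and hence tempered. Its inverse Fourier transform is then an element of $\mathcal H_\epsilon$ mapped to $g$. Given a Cauchy sequence $(f_n)$ in $\mathcal H_\epsilon$, the images $Uf_n$ form a Cauchy sequence in $L^2(\mu_\epsilon)$. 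By the Riesz--Fischer theorem for the $\sigma$-finite measure $\mu_\epsilon$, this sequence converges to some $g$, and $f:=U^{-1}g$ is the limit of $(f_n)$ in $\mathcal H_\epsilon$.

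The main obstacle is the surjectivity and well-definedness step: making precise that every element of the weighted $L^2$ space is the Fourier transform of a legitimate object. If one restricted $\mathcal H_\epsilon$ to $L^2(\mathbb R)$ functions, the space would not be closed under $\mathcal H_\epsilon$-limits. I will handle this by working with tempered distributions and using the bound $|g|\le (\omega^2+\epsilon)^{1/2}\,|g|(\omega^2+\epsilon)^{-1/2}$, which exhibits $g$ as a polynomial times an $L^2$ function. Equivalently, $\mathcal H_\epsilon$ is the completion of $L^2(\mathbb R)$, which is the Sobolev space $H^{-1}(\mathbb R)$ with an equivalent, $\epsilon$-scaled norm.
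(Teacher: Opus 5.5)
Your proof is correct and follows the same route as the paper: use $f\mapsto\widehat f$ to identify $\mathcal H_\epsilon$ isometrically with the weighted space $L^2(\mathbb R, d\omega/(\omega^2+\epsilon))$, and inherit completeness from that space. The paper only asserts this isomorphism ``by construction''; your choice of tempered distributions as the underlying set, together with your definiteness and surjectivity checks, supplies the details it leaves implicit. Those details are genuinely needed, because the paper's embeddings $\Phi(P)$, with $\widehat{\Phi(P)}=\phi_P-1$, are in general signed measures rather than $L^2$ functions.
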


\begin{proof}
By construction, the Fourier transform defines an isometric isomorphism~\citep{functional1} between $\mathcal H_{\epsilon}$ and the weighted space
\[
L^2\!\left(\mathbb R, \frac{d\omega}{\omega^2+\epsilon}\right),
\]
which is complete since $\omega^2+\epsilon$ is strictly positive for all $\omega\in\mathbb R$.
\end{proof}

\paragraph{Spectral embedding of distributions.}
Having fixed the ambient geometry, we introduce a canonical representation of return distributions as elements of $\mathcal H_{\epsilon}$. Importantly, the embedding itself is independent of $\epsilon$; regularization enters only through the geometry of the space.

For each probability law $P$ on $\mathbb R$, define $\Phi(P)\in\mathcal H_{\epsilon}$ by specifying its Fourier transform as
\begin{equation}
\widehat{\Phi(P)}(\omega)
:= \phi_P(\omega)-1 ,
\label{eq:phi_embedding_def}
\end{equation}
where $\phi_P(\omega)=\mathbb E_{X \sim P}[e^{i\omega X}]$ denotes the characteristic function of $P$.

Since the characteristic function uniquely determines the law of a real-valued random variable and satisfies $\phi_P(0)=1$ for every probability measure $P$, the mapping $\phi_P \mapsto \phi_P-1$ is injective on the space of characteristic functions. Consequently, the embedding $\Phi$ retains the full distributional information of $P$, and no information is lost by removing the constant zero-frequency mode.

The subtraction of the constant mode plays a purely geometric role. It mirrors the CDF-level centring $F_P-F_{\delta_0}$ underlying the definition of the Cram\'er geometry and removes only a redundant degree of freedom associated with total mass. Although $\Phi(P)$ does not correspond to a probability density in the time domain, it provides a spectral representation of the underlying distribution, from which the original law can be uniquely recovered.

\paragraph{Well-definedness on the finite Cram\'er domain.}
The embedding~\eqref{eq:phi_embedding_def} is naturally well defined on the finite Cram\'er domain $\Gamma_F$. Fix $P$ with $F_P\in\Gamma_F$ and define
\[
H(x):=F_P(x)-F_{\delta_0}(x)\in L^2(\mathbb R).
\]
Applying Lemma~\ref{lemma:fourier_cdf_difference} with $P_1=P$ and $P_2=\delta_0$ yields, for $\omega\neq0$,
\[
\widehat H(\omega)
= \frac{\phi_P(-\omega)-1}{i\omega\sqrt{2\pi}} .
\]
Taking modulus and using $|\phi_P(-\omega)-1|=|\phi_P(\omega)-1|$ gives
\[
|\phi_P(\omega)-1|^2
= 2\pi\,\omega^2\,|\widehat H(\omega)|^2 .
\]
Consequently,
\[
\int_{\mathbb R}
\frac{|\phi_P(\omega)-1|^2}{\omega^2+\epsilon}\,d\omega
\le
2\pi
\int_{\mathbb R}|\widehat H(\omega)|^2\,d\omega
<\infty,
\]
where finiteness follows from Plancherel's identity. Thus $\Phi(P)\in\mathcal H_{\epsilon}$ for every $P$ with $F_P\in\Gamma_F$.

\begin{remark}[Regularisation by geometry, not embedding]
The mapping $P\mapsto \phi_P(\cdot)-1$ is fixed and injective, and therefore faithfully represents probability laws. Regularisation enters solely through the ambient Hilbert norm via the weight $(\omega^2+\epsilon)^{-1}$, which controls how spectral discrepancies are measured rather than which discrepancies are represented.
\end{remark}

\paragraph{Spectral distance identity.}
For any $P_1,P_2$ with $F_{P_1},F_{P_2}\in\Gamma_F$, the squared Hilbert distance between their embeddings satisfies
\begin{equation}
\|\Phi(P_1)-\Phi(P_2)\|_{\mathcal H_{\epsilon}}^2
= \frac{1}{2\pi}\!\int_{\mathbb R}
\frac{|\phi_{P_1}(\omega)-\phi_{P_2}(\omega)|^2}
{\omega^2+\epsilon}\,d\omega .
\label{eq:phi_embedding_distance}
\end{equation}
This expression is the regularised analogue of \eqref{eq:cramer_phi_formula}, obtained by replacing the singular weight
$\omega^{-2}$ with $(\omega^2+\epsilon)^{-1}$. The limiting relation back to the Cram\'er metric is established in Section~\ref{subsec:cramer_convergence}.
\subsection{Convergence to the Cram\'er Metric}
\label{subsec:cramer_convergence}
The purpose of this subsection is to establish that the regularised Hilbert geometries introduced in Section~\ref{subsec:cramer_spectral_space} approximate the intrinsic Cram\'er geometry. Specifically, we show that the metric induced by $\mathcal H_{\epsilon}$ converges monotonically to the Cram\'er distance as the regularisation parameter $\epsilon$ tends to zero, whenever the latter is finite.

Recall that for distributions $P_1,P_2$ with $F_{P_1},F_{P_2}\in\Gamma_F$, the squared Hilbert distance between their spectral embeddings is given by the regularised spectral discrepancy
\eqref{eq:phi_embedding_distance}. From a geometric viewpoint, the passage from the Cram\'er metric to $\mathcal H_{\epsilon}$ corresponds to replacing the singular weight $\omega^{-2}$ by the bounded weight $(\omega^2+\epsilon)^{-1}$ in frequency space.

Since $\phi_{P_1}(0)=\phi_{P_2}(0)=1$, the numerator in the spectral representation vanishes at zero frequency. As a result, the regularisation affects only the local behaviour of the geometry near $\omega=0$, while leaving the global spectral structure unchanged.

As $\epsilon\downarrow0$, the weights $(\omega^2+\epsilon)^{-1}$ increase monotonically to $\omega^{-2}$ for all $\omega\neq0$. On the finite Cram\'er domain, where the limiting integral is finite, this monotone increase allows one to pass to the limit under the integral sign.

\begin{proposition}[Spectral bridge to the Cram\'er metric]
\label{prop:cramer_spectral_bridge}
For any $P_1,P_2$ with $F_{P_1},F_{P_2}\in\Gamma_F$ and every $\epsilon>0$,
\[
\|\Phi(P_1)-\Phi(P_2)\|_{\mathcal H_{\epsilon}}^2
  = \frac{1}{2\pi}\!\int_{\mathbb R}
      \frac{|\phi_{P_1}(\omega)-\phi_{P_2}(\omega)|^2}{\omega^2+\epsilon}\,d\omega,
\]
and as $\epsilon\downarrow0$,
\[
\|\Phi(P_1)-\Phi(P_2)\|_{\mathcal H_{\epsilon}}
  \uparrow
  \left(
  \frac{1}{2\pi}
  \int_{\mathbb R}
     \frac{|\phi_{P_1}(\omega)-\phi_{P_2}(\omega)|^2}{\omega^2}\,d\omega
  \right)^{1/2}
  = d_C(P_1,P_2).
\]
\end{proposition}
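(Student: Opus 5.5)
The first identity follows directly from the definitions, and the limit follows by monotone convergence together with \eqref{eq:cramer_phi_formula}.

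For the first identity, I would use linearity of the embedding at the level of Fourier transforms. By \eqref{eq:phi_embedding_def}, $\widehat{\Phi(P_1)}-\widehat{\Phi(P_2)} = (\phi_{P_1}-1)-(\phi_{P_2}-1)=\phi_{P_1}-\phi_{P_2}$. Hence $\Phi(P_1)-\Phi(P_2)$ is the element of $\mathcal H_\epsilon$ whose transform is $\phi_{P_1}-\phi_{P_2}$. Substituting this into the norm defining $\mathcal H_\epsilon$ gives \eqref{eq:phi_embedding_distance}. Membership in $\mathcal H_\epsilon$, and hence finiteness of both sides, was already established in the well-definedness paragraph, since $\mathcal H_\epsilon$ is a vector space containing $\Phi(P_1)$ and $\Phi(P_2)$.

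For the limit, set $g(\omega):=|\phi_{P_1}(\omega)-\phi_{P_2}(\omega)|^2\ge 0$ and $w_\epsilon(\omega):=(\omega^2+\epsilon)^{-1}$. For each fixed $\omega\neq 0$, $w_\epsilon(\omega)$ is strictly decreasing in $\epsilon$ and tends to $\omega^{-2}$ as $\epsilon\downarrow0$. The integrands $g\,w_\epsilon$ are therefore nonnegative and increase pointwise (almost everywhere, since $\{0\}$ is null) to $g\,\omega^{-2}$ along any sequence $\epsilon_n\downarrow 0$. The monotone convergence theorem then yields $\frac{1}{2\pi}\int g\,w_{\epsilon}\,d\omega\uparrow \frac{1}{2\pi}\int g\,\omega^{-2}\,d\omega$. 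Monotonicity in the continuous parameter is immediate, because $\epsilon\mapsto \int g w_\epsilon$ is nonincreasing, so the limit along sequences identifies the full limit. Taking square roots preserves monotone convergence, as $t\mapsto\sqrt t$ is continuous and increasing on $[0,\infty]$.

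It remains to identify the limit with $d_C(P_1,P_2)$, which is exactly \eqref{eq:cramer_phi_formula}. That formula was derived from Lemma~\ref{lemma:fourier_cdf_difference} and Plancherel's identity applied to $H=F_{P_1}-F_{P_2}$. This $H$ lies in $L^2(\mathbb R)$ because $H=(F_{P_1}-F_{\delta_0})-(F_{P_2}-F_{\delta_0})$ and both terms are in $L^2$ by the definition of $\Gamma_F$. The modulus step uses $|\phi_{P_j}(-\omega)-\ldots|=|\phi_{P_j}(\omega)-\ldots|$, which holds because $\phi(-\omega)=\overline{\phi(\omega)}$ for real random variables, so the differences at $-\omega$ are complex conjugates of those at $\omega$. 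This also shows the limit is finite.

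The only delicate point is making sure the application of Plancherel and Lemma~\ref{lemma:fourier_cdf_difference} is justified for $H\in L^2$ rather than $L^1$, so that $\widehat H$ is the $L^2$ Fourier transform and the identity holds almost everywhere. Since the lemma is assumed, an a.e.\ equality suffices for the integral, and the argument closes without further work.
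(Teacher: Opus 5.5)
Your proposal is correct and follows essentially the same route as the paper. The paper reads the first identity off the definition of the $\mathcal H_\epsilon$ norm, using $\widehat{\Phi(P_1)}-\widehat{\Phi(P_2)}=\phi_{P_1}-\phi_{P_2}$. It then gets the limit from the monotone convergence theorem applied to the increasing weights $(\omega^2+\epsilon)^{-1}\uparrow\omega^{-2}$, and identifies that limit with $d_C^2(P_1,P_2)$ through the characteristic-function form of the Cram\'er distance. Your extra points — passing from sequences to the continuous parameter, the null set $\{0\}$, and $H\in L^2$ with the conjugate symmetry of $\phi$ — refine that same argument rather than change it.
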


\begin{proof}
A complete proof is provided in Appendix~\ref{appendix:proof_cramer_convergence}.
\end{proof}
The convergence result above shows that the regularised Hilbert geometries $\mathcal H_{\epsilon}$ approximate the intrinsic Cram\'er geometry on the finite Cram\'er domain $\Gamma_F$. This establishes the geometric consistency of the Hilbert envelope construction.

However, while distributional Bellman updates act on probability laws, their geometric and analytic structure is exposed only after passing to cumulative distribution function representations equipped with the Cram\'er metric. To analyse Bellman dynamics within the Hilbert framework without altering their structure, we therefore require a precise and norm-preserving transport between CDF representations and the spectral Hilbert space.

\subsection{CDF-Spectral Transport for Bellman Analysis}
\label{subsec:cdf_spectral_bridge}
This subsection introduces a transport mechanism that allows distributional Bellman dynamics, to be conjugated into the regularised spectral Hilbert space without modifying their intrinsic structure later in Section~\ref{sec:bellman_embedding}. The key step is an isometric identification between centered CDF representations and the spectral Hilbert space, which enables such a conjugation while preserving the underlying Cram\'er geometry.

We work throughout on the finite Cram\'er domain $\Gamma_F$ introduced in \eqref{eq:GammaF_def}.

\paragraph{Raw CDF parameterisation and centring.}
We represent return distributions through their CDFs. Raw CDFs serve only as a convenient parameterisation and carry no geometry. To introduce a Hilbert geometry compatible with the Cram\'er structure, we fix the reference CDF $F_{\delta_0}(x)=\mathbbm{1}\{x\ge 0\}$ and define the centring operation
\begin{equation}
C(F)(x) := F(x)-F_{\delta_0}(x).
\label{eq:centring_def}
\end{equation}
For a distribution $P$ with $F_P\in\Gamma_F$, we write
\begin{equation}
\mathcal U_C(P)(x) := C(F_P)(x) = F_P(x)-F_{\delta_0}(x).
\label{eq:UC_def}
\end{equation}

\paragraph{A centered CDF Hilbert space.}
For $\epsilon>0$, define
\[
\mathcal H^{\mathrm{cdf}}_\epsilon
 := \Bigl\{
       H\mid\;
       \|H\|^2_{\mathcal H^{\mathrm{cdf}}_\epsilon}
        := \int_{\mathbb R}
             \frac{\omega^2}{\omega^2+\epsilon}
             |\widehat H(\omega)|^2\,d\omega < \infty
    \Bigr\},
\]
equipped with the inner product
\[
\langle H_1,H_2\rangle_{\mathcal H^{\mathrm{cdf}}_\epsilon}
:=
\int_{\mathbb R}
\frac{\omega^2}{\omega^2+\epsilon}
\widehat H_1(\omega)\,
\overline{\widehat H_2(\omega)}\,d\omega ,
\]
where the overline denotes complex conjugation. The factor $\omega^2/(\omega^2+\epsilon)$ removes the constant mode and suppresses low-frequency components, yielding a well-posed Hilbert geometry on centred CDF representations that is compatible with the regularised spectral geometry introduced above.

\begin{lemma}[Membership of centered CDFs]
\label{lemma:centered_cdf_membership}
For every distribution $P$ with $F_P\in\Gamma_F$, we have
\[
\mathcal U_C(P) = F_P - F_{\delta_0} \in \mathcal H^{\mathrm{cdf}}_\epsilon .
\]
\end{lemma}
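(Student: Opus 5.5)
The plan is a direct comparison of spectral weights, combined with Plancherel's identity~\eqref{eq:plancherel}. Fix $P$ with $F_P\in\Gamma_F$ and set $H:=\mathcal U_C(P)=F_P-F_{\delta_0}$. By the equivalent description of the working domain in \eqref{eq:GammaF_def}, $H\in L^2(\mathbb R)$. Its Fourier transform $\widehat H$ is therefore well defined as an element of $L^2(\mathbb R)$, in the Plancherel sense, under the convention \eqref{eq:fourier_convention}. This already makes sense of the quantity $\|H\|_{\mathcal H^{\mathrm{cdf}}_\epsilon}$ appearing in the definition of $\mathcal H^{\mathrm{cdf}}_\epsilon$.

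The key step is the pointwise bound
\[
0\le \frac{\omega^2}{\omega^2+\epsilon}\le 1,\qquad \omega\in\mathbb R,\ \epsilon>0 .
\]
From it I would get
\[
\|H\|^2_{\mathcal H^{\mathrm{cdf}}_\epsilon}
=\int_{\mathbb R}\frac{\omega^2}{\omega^2+\epsilon}|\widehat H(\omega)|^2\,d\omega
\le \int_{\mathbb R}|\widehat H(\omega)|^2\,d\omega
=\|H\|^2_{L^2(\mathbb R)} = d_C^2(P,\delta_0)<\infty ,
\]
where the last equality is Plancherel's identity combined with the definition \eqref{eq:cramer_def}. This proves membership.

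As a consistency check, I would also note the link to the spectral space $\mathcal H_\epsilon$. By Lemma~\ref{lemma:fourier_cdf_difference} with $P_1=P$ and $P_2=\delta_0$, one has $\omega^2|\widehat H(\omega)|^2=|\phi_P(\omega)-1|^2/(2\pi)$ for $\omega\neq0$. Hence $\|H\|^2_{\mathcal H^{\mathrm{cdf}}_\epsilon}=\|\Phi(P)\|^2_{\mathcal H_\epsilon}$, which matches the finiteness computation already given after \eqref{eq:phi_embedding_def}. This identity is not needed for the lemma itself.

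The only potential obstacle is interpretive rather than technical. $H$ is generally not in $L^1$, because $F_P-F_{\delta_0}$ need not be integrable, so $\widehat H$ must be understood as the $L^2$ (Plancherel) Fourier transform rather than the pointwise integral. I would state this explicitly. With that convention the argument reduces entirely to the bounded-multiplier comparison above.
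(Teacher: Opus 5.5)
Your proposal is correct and is the paper's proof: $H\in L^2(\mathbb R)$ by the definition of $\Gamma_F$, Plancherel gives $\widehat H\in L^2(\mathbb R)$, and the bound $\omega^2/(\omega^2+\epsilon)\le 1$ makes the $\mathcal H^{\mathrm{cdf}}_\epsilon$ norm finite. Your remark that $\widehat H$ must be read as the Plancherel transform is a correct clarification the paper leaves implicit: $H\in L^1$ exactly when $P$ has a finite first moment, which $\Gamma_F$ does not require.
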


\begin{proof}
If $F_P\in\Gamma_F$, then by definition $\mathcal U_C(P)=F_P-F_{\delta_0}\in L^2(\mathbb R)$.
Plancherel's identity yields $\int_{\mathbb R}|\widehat{\mathcal U_C(P)}(\omega)|^2\,d\omega<\infty$.
Since $\omega^2/(\omega^2+\epsilon)\le 1$, we obtain
\[
\|\mathcal U_C(P)\|^2_{\mathcal H^{\mathrm{cdf}}_\epsilon}
= \int_{\mathbb R}\frac{\omega^2}{\omega^2+\epsilon}|\widehat{\mathcal U_C(P)}(\omega)|^2\,d\omega
\le \int_{\mathbb R}|\widehat{\mathcal U_C(P)}(\omega)|^2\,d\omega < \infty,
\]
which proves the claim.
\end{proof}

\paragraph{CDF-spectral isometric identification.}
Recall the spectral Hilbert space $\mathcal H_{\epsilon}$ from Section~\ref{subsec:cramer_spectral_space}. We now define a linear map that identifies the centered CDF Hilbert space $\mathcal H^{\mathrm{cdf}}_\epsilon$ with the spectral Hilbert space $\mathcal H_{\epsilon}$ in an isometric manner, preserving the corresponding Hilbert norms.

\begin{theorem}[CDF-spectral isometric isomorphism]
\label{thm:cdf_spectral_isomorphism}
Let $\epsilon>0$.
Define
\[
\mathcal U :
\mathcal H^{\mathrm{cdf}}_\epsilon \longrightarrow \mathcal H_{\epsilon}
\]
by prescribing its Fourier transform:
\begin{equation}
\widehat{(\mathcal U H)}(\omega)
 :=
 -\sqrt{2\pi}\, i \omega\overline{\widehat H(\omega)},
\qquad \omega\in\mathbb R.
\label{eq:U_definition}
\end{equation}
Then the following statements hold:
\begin{enumerate}
\item $\mathcal U$ is a conjugate-linear isometric isomorphism with $\|\mathcal U H\|_{\mathcal H_{\epsilon}}
 = \|H\|_{\mathcal H^{\mathrm{cdf}}_\epsilon},
$ where $
H\in\mathcal H^{\mathrm{cdf}}_\epsilon.$
\item For any $P_1,P_2$ with $F_{P_1},F_{P_2}\in\Gamma_F$,
\[
\mathcal U\bigl(F_{P_1}-F_{P_2}\bigr)
 = \Phi(P_1)-\Phi(P_2).
\]
In particular,
\[
\Phi(P) = \mathcal U\bigl(\mathcal U_C(P)\bigr),
\qquad \mathcal U_C(P)=F_P-F_{\delta_0}.
\]
\item $\mathcal U$ is onto $\mathcal H_{\epsilon}$ and admits
a bounded inverse $\mathcal U^{-1}:
\mathcal H_{\epsilon}\to\mathcal H^{\mathrm{cdf}}_\epsilon$ given by
\begin{equation}
\widehat{(\mathcal U^{-1} f)}(\omega)
 :=
 \frac{1}{\sqrt{2\pi}i\omega}\overline{\widehat f(\omega)},
\qquad \omega\in\mathbb R,
\label{eq:U_inverse}
\end{equation}
which also preserves norms:
\[
\|\mathcal U^{-1} f\|_{\mathcal H^{\mathrm{cdf}}_\epsilon}
 = \|f\|_{\mathcal H_{\epsilon}}.
\]
\end{enumerate}
\end{theorem}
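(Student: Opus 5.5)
The plan is to work entirely on the Fourier side, treating both $\mathcal H^{\mathrm{cdf}}_\epsilon$ and $\mathcal H_\epsilon$ as weighted $L^2$ spaces of Fourier transforms, as in the proof of Proposition~\ref{prop:cramer_space_hilbert}. $\mathcal H^{\mathrm{cdf}}_\epsilon$ carries the weight $\omega^2/(\omega^2+\epsilon)$ and $\mathcal H_\epsilon$ the weight $(2\pi)^{-1}(\omega^2+\epsilon)^{-1}$. Under this identification, $\mathcal U$ is the pointwise multiplier $m(\omega)=-\sqrt{2\pi}\,i\omega$ composed with complex conjugation, and every claim reduces to a pointwise identity in $\omega$.

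\textbf{Part 1.} Conjugate-linearity is immediate, since $\overline{\widehat{aH_1+bH_2}}=\bar a\,\overline{\widehat H_1}+\bar b\,\overline{\widehat H_2}$ and $m$ is fixed. For the isometry, compute
\[
|\widehat{\mathcal U H}(\omega)|^2=2\pi\omega^2|\widehat H(\omega)|^2 .
\]
Dividing by $2\pi(\omega^2+\epsilon)$ and integrating gives exactly
\[
\int_{\mathbb R}\frac{\omega^2}{\omega^2+\epsilon}\,|\widehat H(\omega)|^2\,d\omega=\|H\|^2_{\mathcal H^{\mathrm{cdf}}_\epsilon}.
\]
So $\mathcal U$ maps $\mathcal H^{\mathrm{cdf}}_\epsilon$ into $\mathcal H_\epsilon$ and preserves norms. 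Injectivity follows from the isometry.

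\textbf{Part 2.} Apply Lemma~\ref{lemma:fourier_cdf_difference} to $H=F_{P_1}-F_{P_2}$, which lies in $\mathcal H^{\mathrm{cdf}}_\epsilon$ by Lemma~\ref{lemma:centered_cdf_membership}, since it is the difference of two centred CDFs. For $\omega\neq0$, take complex conjugates, using $\overline{\phi_P(-\omega)}=\phi_P(\omega)$ and $\overline{1/(i\omega)}=-1/(i\omega)$:
\[
\overline{\widehat H(\omega)}=-\frac{\phi_{P_1}(\omega)-\phi_{P_2}(\omega)}{i\omega\sqrt{2\pi}} .
\]
Multiplying by $-\sqrt{2\pi}\,i\omega$ gives $\phi_{P_1}(\omega)-\phi_{P_2}(\omega)$. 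By \eqref{eq:phi_embedding_def}, this equals $\widehat{\Phi(P_1)}(\omega)-\widehat{\Phi(P_2)}(\omega)$. The single point $\omega=0$ is Lebesgue-null and is irrelevant for elements of weighted $L^2$. The special case $P_2=\delta_0$, with $\phi_{\delta_0}\equiv1$ and hence $\Phi(\delta_0)=0$, yields $\Phi(P)=\mathcal U(\mathcal U_C(P))$.

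\textbf{Part 3.} Given $f\in\mathcal H_\epsilon$, define $H$ through \eqref{eq:U_inverse}. Then
\[
\frac{\omega^2}{\omega^2+\epsilon}\,|\widehat H|^2=\frac{|\widehat f|^2}{2\pi(\omega^2+\epsilon)},
\]
so $\|H\|_{\mathcal H^{\mathrm{cdf}}_\epsilon}=\|f\|_{\mathcal H_\epsilon}<\infty$. Next check $\mathcal U\mathcal U^{-1}f=f$ and $\mathcal U^{-1}\mathcal U H=H$ pointwise for $\omega\neq0$. Conjugation applied twice is the identity, and $(-\sqrt{2\pi}\,i\omega)\cdot\overline{(\sqrt{2\pi}\,i\omega)^{-1}}=1$. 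This gives surjectivity, and the inverse is bounded, indeed isometric.

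The main obstacle is this surjectivity step, more precisely its well-posedness. The preimage $\widehat H=\overline{\widehat f}/(\sqrt{2\pi}\,i\omega)$ may fail to be locally integrable, or even $L^2$, near $\omega=0$. So $H$ need not be a classical $L^2$ function, or even a tempered distribution, in the time domain. I would resolve this by making explicit that $\mathcal H^{\mathrm{cdf}}_\epsilon$, like $\mathcal H_\epsilon$ in Proposition~\ref{prop:cramer_space_hilbert}, is understood as the space of (equivalence classes of) measurable spectral profiles $\widehat H$ with finite weighted norm. Equivalently, it is the completion of centred CDF differences under that norm. On this reading, the spectral-side definitions \eqref{eq:U_definition} and \eqref{eq:U_inverse} are meaningful, and all identities hold $d\omega$-almost everywhere.
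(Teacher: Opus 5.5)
Your proposal is correct and follows the paper's proof essentially step for step. You use the same pointwise Fourier-side identity $|\widehat{\mathcal U H}(\omega)|^2=2\pi\omega^2|\widehat H(\omega)|^2$ for the isometry, the same substitution of Lemma~\ref{lemma:fourier_cdf_difference} into \eqref{eq:U_definition} for the action on CDF differences, and the same explicit inverse \eqref{eq:U_inverse} checked by direct computation. Your extra point makes explicit a well-posedness issue that the paper's ``direct computation'' passes over: surjectivity only makes sense once $\mathcal H^{\mathrm{cdf}}_\epsilon$ is read as a weighted $L^2$ space of spectral profiles, because $\overline{\widehat f}/(\sqrt{2\pi}\,i\omega)$ need not be locally integrable at $\omega=0$.
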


\begin{proof}[Proof (sketch)]
The mapping and isometry properties follow from a direct computation of $\|\mathcal U H\|_{\mathcal H_{\epsilon}}$ using \eqref{eq:U_definition}, together with the definition of the $\mathcal H^{\mathrm{cdf}}_\epsilon$ and $\mathcal H_{\epsilon}$ norms. The identification with $\Phi(P_1)-\Phi(P_2)$ follows by combining Lemma~\ref{lemma:fourier_cdf_difference} with \eqref{eq:U_definition} and \eqref{eq:phi_embedding_def}. A complete proof is provided in Appendix~\ref{appendix:proof_cdf_spectral_isomorphism}.
\end{proof}

\begin{remark}[Real-linearity]
\label{remark:real_linearity}
Although $\mathcal U$ is conjugate-linear when viewed as an operator between complex Hilbert spaces, all constructions in this paper may equivalently be interpreted over real Hilbert spaces. In particular, when $\mathcal H^{\mathrm{cdf}}_\epsilon$ and $\mathcal H_{\epsilon}$ are regarded as real Hilbert spaces, $\mathcal U$ becomes a linear isometric isomorphism.
\end{remark}

\paragraph{Raw-to-spectral transport operator.}
Composing the centring operator $C$ with $\mathcal U$ yields a canonical transport from raw CDF representations to the regularised spectral space:
\begin{equation}
\label{eq:V_definition}
\mathcal V := \mathcal U \circ C .
\end{equation}
This map transports cumulative distribution functions into $\mathcal H_{\epsilon}$ via centring followed by an isometric identification at the Hilbert space level. On its range, the transport $\mathcal V$ admits a natural inverse given by
\begin{equation}
\label{eq:V_inverse_definition}
\mathcal V^{-1} := C^{-1} \circ \mathcal U^{-1},
\end{equation}
allowing representations to be transferred back and forth between the CDF and spectral settings.

\begin{proposition}[Canonical raw-to-spectral transport]
\label{prop:canonical_transport}
Fix $\epsilon>0$ and let $P_1,P_2$ be probability measures such that $F_{P_1},F_{P_2}\in\Gamma_F$. Then the transport operator $\mathcal V$ satisfies:
\begin{enumerate}
\item \emph{Consistency with spectral embedding.}
For any distribution $P$ with $F_P\in\Gamma_F$,
\begin{equation}
\label{eq:V_on_cdf}
\mathcal V(F_P)
= \mathcal U\bigl(F_P - F_{\delta_0}\bigr)
= \Phi(P).
\end{equation}

\item \emph{Distance consistency on $\Gamma_F$.}
The map $\mathcal V$ is injective on $\Gamma_F$, and for any $P_1,P_2$ as above,
\[
\|\Phi(P_1)-\Phi(P_2)\|_{\mathcal H_{\epsilon}}
=
\|F_{P_1}-F_{P_2}\|_{\mathcal H^{\mathrm{cdf}}_\epsilon}.
\]
In particular, $\mathcal V$ induces a distance-consistent embedding of $\Gamma_F$ into $\mathcal H_{\epsilon}$ when distances on $\Gamma_F$ are measured via the centered CDF norm $\|\cdot\|_{\mathcal H^{\mathrm{cdf}}_\epsilon}$.
\end{enumerate}
\end{proposition}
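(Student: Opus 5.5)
The plan is to reduce both items of Proposition~\ref{prop:canonical_transport} to Theorem~\ref{thm:cdf_spectral_isomorphism}, together with Lemma~\ref{lemma:centered_cdf_membership}, so that no new Fourier computation is needed.

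\textbf{Consistency with the spectral embedding.} Fix $P$ with $F_P\in\Gamma_F$. By \eqref{eq:UC_def}, $C(F_P)=\mathcal U_C(P)=F_P-F_{\delta_0}$, and Lemma~\ref{lemma:centered_cdf_membership} shows this lies in $\mathcal H^{\mathrm{cdf}}_\epsilon$. Hence $\mathcal V(F_P)=\mathcal U(C(F_P))$ is well defined by \eqref{eq:V_definition}. Item~2 of Theorem~\ref{thm:cdf_spectral_isomorphism} gives $\mathcal U(\mathcal U_C(P))=\Phi(P)$ directly, which is \eqref{eq:V_on_cdf}. If one wants to avoid the ``in particular'' clause, apply the difference identity with $P_1=P$ and $P_2=\delta_0$, and note that $\Phi(\delta_0)=0$ because $\phi_{\delta_0}\equiv 1$.

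\textbf{Distance consistency.} For $P_1,P_2$ with CDFs in $\Gamma_F$, the difference $F_{P_1}-F_{P_2}=\mathcal U_C(P_1)-\mathcal U_C(P_2)$ lies in $\mathcal H^{\mathrm{cdf}}_\epsilon$, since that space is a vector space containing both terms. Item~2 of Theorem~\ref{thm:cdf_spectral_isomorphism} gives $\mathcal U(F_{P_1}-F_{P_2})=\Phi(P_1)-\Phi(P_2)$. The isometry in item~1 then yields
\[
\|\Phi(P_1)-\Phi(P_2)\|_{\mathcal H_\epsilon}=\|F_{P_1}-F_{P_2}\|_{\mathcal H^{\mathrm{cdf}}_\epsilon}.
\]
Conjugate-linearity does not matter here: we apply $\mathcal U$ to a single element, and norms are insensitive to it (cf.\ Remark~\ref{remark:real_linearity}).

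\textbf{Injectivity.} Suppose $\mathcal V(F_{P_1})=\mathcal V(F_{P_2})$. By the identity above, $\|F_{P_1}-F_{P_2}\|_{\mathcal H^{\mathrm{cdf}}_\epsilon}=0$, so
\[
\int\frac{\omega^2}{\omega^2+\epsilon}\,|\widehat H(\omega)|^2\,d\omega=0, \qquad H:=F_{P_1}-F_{P_2}.
\]
The weight is strictly positive for $\omega\neq0$, so $\widehat H=0$ almost everywhere. Since $H\in L^2$, Plancherel gives $H=0$ almost everywhere. CDFs are right-continuous, so $F_{P_1}=F_{P_2}$ everywhere.

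As an alternative route, $\Phi(P_1)=\Phi(P_2)$ means $\phi_{P_1}=\phi_{P_2}$ almost everywhere. Characteristic functions are continuous, so they agree everywhere, and uniqueness of characteristic functions gives $P_1=P_2$.

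\textbf{Main obstacle.} The only delicate point is this injectivity step. The $\mathcal H^{\mathrm{cdf}}_\epsilon$ norm has a degenerate weight at $\omega=0$, so one must use membership in $L^2$ (i.e.\ $\Gamma_F$) to rule out constant offsets. Concretely, a difference of CDFs in $L^2$ cannot be a nonzero constant. I would make this explicit via Plancherel, as above, rather than leaning on the null-set argument alone.
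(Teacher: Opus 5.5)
Your proposal is correct and follows the paper's proof: the consistency identity $\mathcal V(F_P)=\mathcal U(\mathcal U_C(P))=\Phi(P)$ and the distance identity both come straight from Theorem~\ref{thm:cdf_spectral_isomorphism}. For the distance identity you use the difference formula $\mathcal U(F_{P_1}-F_{P_2})=\Phi(P_1)-\Phi(P_2)$ together with the isometry of $\mathcal U$. The only difference is injectivity: the paper just composes the injective centring $C$ with the isometric isomorphism $\mathcal U$, while you check directly that a vanishing $\mathcal H^{\mathrm{cdf}}_\epsilon$-norm forces $F_{P_1}=F_{P_2}$ (via Plancherel on the $L^2$ difference plus right-continuity, or via uniqueness of characteristic functions). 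Your version makes explicit the definiteness of the degenerate weight that the paper's appeal to ``isometric isomorphism'' assumes.
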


\begin{proof}
By definition, $\mathcal V=\mathcal U\circ C$, where the centring operator $C$ is injective on $\Gamma_F$ and $\mathcal U$ is an isometric isomorphism (Theorem~\ref{thm:cdf_spectral_isomorphism}). The identity \eqref{eq:V_on_cdf} follows immediately from the definitions, and the distance identity follows from the isometric property of $\mathcal U$ applied to centred CDF differences. A more detailed structural analysis of the transported subspaces, including span and closure properties, is provided in Appendix~\ref{appendix:cdf_spectral_structure}.
\end{proof}

\begin{remark}
Proposition~\ref{prop:canonical_transport} isolates the minimal geometric properties required to conjugate Bellman dynamics between the CDF and spectral representations. All operator-theoretic arguments in Section~\ref{sec:bellman_embedding} rely exclusively on these norm-consistency identities.
\end{remark}

Taken together, the constructions above realise the Cram\'er geometry through a \emph{Hilbert envelope}: for each $\epsilon>0$ the singular weight $\omega^{-2}$ is replaced by $(\omega^2+\epsilon)^{-1}$, yielding a stable Hilbert space amenable to linear and spectral analysis.

At the same time, the embedding $\widehat{\Phi(P)}(\omega)=\phi_P(\omega)-1$ retains the full spectral signature of the distribution, and regularisation enters only through the norm rather than the representation. As $\epsilon\downarrow0$, the induced metric converges monotonically back to the Cram\'er distance on the finite domain (Section~\ref{subsec:cramer_convergence}), so the family interpolates between a smooth, operator-friendly geometry and the native Cram\'er geometry.

\section{Exact Realisation of the Bellman Operator}
\label{sec:bellman_embedding}
We now make the structure of the distributional Bellman update explicit and realise it in a regularised Hilbert space without modifying the update. The term "realisation" is used in a representation-theoretic sense: no new dynamics are introduced, and all operators considered below are defined intrinsically at the CDF level and transported by exact conjugation. We proceed in two stages: a CDF-level formulation, where all probabilistic and metric properties are established, followed by an exact transport to the spectral Hilbert representation, which serves purely as an analytical realisation.

First, we formulate the Bellman update at the level of CDFs. In Section~\ref{subsec:assumptions} and Section~\ref{subsec:bellman_unified_order}, the update is decomposed into three primitive operations (reward translation, discount-induced scaling, and conditional expectation) acting on CDF fields. This formulation is intrinsic and well defined on the finite Cram\'er domain $\Gamma_F$.

Second, we transport the CDF-level dynamics into the regularised spectral Hilbert space by conjugation with the CDF-spectral transport from Section~\ref{subsec:cdf_spectral_bridge}. The resulting spectral Bellman operator, defined in Section~\ref{subsec:intertwining}, acts on the invariant set $\mathrm{range}(\mathbb V)\subset\mathcal X^{\mathrm{spec}}$ and is not postulated independently. Rather, it is defined uniquely as the conjugate~\citep{operator-conjugate} image of the CDF-level operator under a norm-preserving identification, so that no additional operator-theoretic assumptions are introduced at the spectral level.

Concretely, each return distribution $P$ is represented by its spectral embedding $\Phi(P)=\mathcal V(F_P)$, where $\mathcal V=\mathcal U\circ C$ is the raw-to-spectral transport and $C(F_P)=F_P-F_{\delta_0}$ denotes centring. All spectral operators in this section act on embeddings of admissible CDF fields via the lifted map $\mathbb V$ (Section~\ref{subsec:intertwining}).

This construction yields a faithful Hilbert space realisation of the distributional Bellman update in a representation-theoretic sense: contraction and fixed-point properties are established intrinsically in the Cram\'er geometry on $(\mathcal X^{\mathrm{cdf}}, d_{\mathrm{Cr}})$ (Section~\ref{subsec:cramer_contraction}), while the spectral representation provides a linear operator setting for the induced action of the Bellman update on centred CDF representations, expressed through exact conjugation. The $\epsilon\downarrow0$ connection to the classical Cram\'er metric follows from Section~\ref{subsec:cramer_convergence}.

\subsection{Assumptions and CDF-Level Setup}
\label{subsec:assumptions}
In this subsection, we make the CDF-level formulation of the distributional Bellman update precise. All analytical structure is imposed directly on CDF representations of return-distribution fields, without reference to any spectral or Hilbert space constructions. The resulting CDF-level operators will later be transported into the spectral Hilbert space via the embedding developed in Section~\ref{subsec:cdf_spectral_bridge}.
\paragraph{Admissible CDF class.}
We work on the finite Cram\'er domain $\Gamma_F$ introduced by~\eqref{eq:GammaF_def} in Section~\ref{sec:prelims}. Elements of $\Gamma_F$ are cumulative distribution functions whose distance to the reference CDF $F_{\delta_0}$ is finite under the Cram\'er geometry. Via the identification $F \mapsto F - F_{\delta_0}$, the set $\Gamma_F$ may be viewed as an affine subset of $L^2(\mathbb R)$.

\paragraph{Return-distribution fields and their CDF representations.}
Recall from Section~\ref{subsec:mdp_prelims} that a return-distribution field is a mapping $\mathcal Z : \mathcal S \times \mathcal A \rightarrow \mathscr P(\mathbb R)$ as defined in~\eqref{def:field_z},  assigning to each state-action pair $(s,a)$ the conditional law of a return random variable.
Throughout this section, we consider distribution fields $\mathcal Z$ such that their pointwise CDFs belong to the admissible class $\Gamma_F$.

To each such distribution field $\mathcal Z$, we associate its CDF representation
\begin{equation}
    F_{\mathcal Z} \in \mathcal X^{\mathrm{cdf}},
\qquad
F_{\mathcal Z}(s,a)(x)
:= F_{\mathcal Z(s,a)}(x),
\label{def:cdf_field}
\end{equation}
where $\mathcal X^{\mathrm{cdf}}$ denotes the space of state-action indexed CDF fields defined below.

\paragraph{State-action indexed CDF fields.}
We define the space of CDF-valued fields by
\begin{equation}
    \mathcal X^{\mathrm{cdf}}
:= \bigl\{
      F_{\mathcal Z} \mid
      \mathcal Z : \mathcal S \times \mathcal A \to \mathscr{P}(\mathbb R),
      \;
      \qquad \text{s.t. } F_{\mathcal Z(s,a)} \in \Gamma_F \text{ for all $(s,a)$}\ 
   \bigr\}.
   \label{def:x_cdf}
\end{equation}
Elements of $\mathcal X^{\mathrm{cdf}}$ are treated abstractly as mappings $(s,a)\mapsto F_{\mathcal Z(s,a)}$, without assuming \emph{a priori} that an arbitrary CDF field arises from an underlying stochastic process. The identification with policy-induced return distributions will be established later via an intertwining and fixed-point argument.

\paragraph{Metric structure.}
For two return-distribution fields $\mathcal Z_1$ and $\mathcal Z_2$ whose CDF representations belong to $\mathcal X^{\mathrm{cdf}}$, we define the state-action supremum Cram\'er metric
\begin{equation}
d_{\mathrm{Cr}}(\mathcal Z_1,\mathcal Z_2)
:=
\sup_{(s,a)\in\mathcal S\times\mathcal A}
\left(
\int_{\mathbb R}
\bigl(
F_{\mathcal Z_1(s,a)}(x)
-
F_{\mathcal Z_2(s,a)}(x)
\bigr)^2
\,dx
\right)^{1/2}.
\label{eq:cramer_metric_state_action}
\end{equation}
Since $F_{\mathcal Z_i(s,a)} \in \Gamma_F$ for all $(s,a)$, the integral is finite and $d_{\mathrm{Cr}}$ is well defined. Moreover, this metric coincides with the classical Cram\'er distance between the corresponding return distributions, as established in Section~\ref{sec:hilbert}.

For analytical convenience later in Section~\ref{subsec:cramer_contraction}, we will overload this metric notation by writing $d_{\mathrm{Cr}}(F_{\mathcal Z_1},F_{\mathcal Z_2})$ when no ambiguity arises.

\medskip
We now state the standing assumptions underlying the CDF-level formulation of the distributional Bellman update. The first two assumptions ensure that the CDF-level Bellman operator is well defined on the CDF field domain $\mathcal X^{\mathrm{cdf}}$, while the third assumption guarantees that the policy-induced return distributions themselves lie in the admissible domain, so that distributional policy evaluation is well posed.

\begin{assumption}[Bounded rewards]
\label{assump:reward_bound}
There exists $R_{\max} < \infty$ such that $|R(s,a)| \le R_{\max}$ almost surely for all $(s,a) \in \mathcal S \times \mathcal A$.
\end{assumption}

\begin{assumption}[Measurability of the transition operator]
\label{assump:transition}
For any return-distribution field $\mathcal Z$ with $F_{\mathcal Z} \in \mathcal X^{\mathrm{cdf}}$ and each $x \in \mathbb R$, the mapping
\[
(s,a) \longmapsto
\mathbb E_{(s',a') \sim \mathcal P^\pi(\cdot \mid s,a)}
\bigl[\, F_{\mathcal Z(s',a')}(x) \,\bigr]
\]
is measurable.
\end{assumption}

\begin{assumption}[Admissible return CDFs]
\label{assump:gammaF_domain}
For the fixed policy $\pi$, the policy-induced return-distribution field
$\mathcal Z^\pi$ satisfies
\[
F_{\mathcal Z^\pi(s,a)} \in \Gamma_F
\qquad
\text{for all } (s,a)\in\mathcal S\times\mathcal A.
\]
\end{assumption}

Taken together, these assumptions ensure that the CDF-level Bellman operator is well defined on the admissible CDF field domain $\mathcal X^{\mathrm{cdf}}$, and that policy evaluation can be analysed as a contraction mapping problem in the Cram\'er geometry.

\paragraph{Spectral correspondence (preview).}
For later use, we note that these CDF-level constructions admit an exact spectral realisation via the transport operator
\[
\mathcal V : \Gamma_F \longrightarrow \mathcal H_{\epsilon},
\qquad
\mathcal V(F)
= \mathcal U(F - F_{\delta_0}),
\]
introduced in Section~\ref{subsec:cdf_spectral_bridge}. For probability measures $P$ with $F_P \in \Gamma_F$, this recovers the regularised spectral embedding $\Phi(P)=\mathcal V(F_P)$. The precise operator-level correspondence and its consequences for Bellman dynamics are developed in Sections~\ref{subsec:intertwining} and Section~\ref{subsec:spectral_fixed_point}.

\subsection{Operators Realising the Bellman Update}
\label{subsec:bellman_unified_order}
In this subsection we make the CDF-level Bellman update explicit by identifying the primitive operators that realise its action on CDFs.

All operators act on CDF representations of return-distribution fields, namely on elements $F_{\mathcal Z}\in\mathcal X^{\mathrm{cdf}}$ introduced in Section~\ref{subsec:assumptions}, and are studied under the supremum Cram\'er metric $d_{\mathrm{Cr}}$.

\vspace{0.3em}
We begin by defining three primitive operators corresponding to reward translation, geometric discounting, and conditional expectation. Throughout, $\mathcal Z$ denotes a generic return-distribution field whose CDF representation $F_{\mathcal Z}$ belongs to $\mathcal X^{\mathrm{cdf}}$.
\paragraph{Reward translation operator.}
For each $(s,a)\in\mathcal S\times\mathcal A$, let $R(s,a)$ denote the immediate reward random variable (Section~\ref{subsec:mdp_prelims}).
We define the reward translation operator $\mathcal S_R$ by its pointwise action on CDF representations:
\begin{equation}
(\mathcal S_R F_{\mathcal Z})(s,a)(x)
 :=
 \mathbb E_{r\sim R(s,a)}
 \bigl[
   F_{\mathcal Z(s,a)}(x-r)
 \bigr].
\label{eq:reward_translation_operator}
\end{equation}
At the level of probability measures, this operator corresponds to translation of the return distribution by the immediate reward.

\begin{lemma}[Invariance under reward translation]
\label{lem:invariance_SR}
Under Assumption~\ref{assump:reward_bound}, if $F_{\mathcal Z}\in\mathcal X^{\mathrm{cdf}}$, then $\mathcal S_R F_{\mathcal Z}\in\mathcal X^{\mathrm{cdf}}$. Equivalently, the admissible CDF class $\Gamma_F$ is invariant under $\mathcal S_R$.
\end{lemma}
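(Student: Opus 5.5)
Fix $(s,a)$ and write $F:=F_{\mathcal Z(s,a)}\in\Gamma_F$ and $G(x):=\mathbb E_{r\sim R(s,a)}[F(x-r)]$. We must show that $G$ is a CDF and that $G-F_{\delta_0}\in L^2(\mathbb R)$.

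\textbf{Step 1: $G$ is a CDF.} Let $X\sim\mathcal Z(s,a)$ be independent of $r\sim R(s,a)$. Then $G(x)=\mathbb P(X+r\le x)$, which is the CDF of the law of $X+r$. Alternatively, one can check the properties directly. Monotonicity is immediate, and right-continuity and the limits $0$ and $1$ at $\mp\infty$ follow from dominated convergence, since $0\le F\le 1$.

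\textbf{Step 2: decompose the centred difference.} Write
\[
G-F_{\delta_0}=\mathbb E_r\bigl[F(\cdot-r)-F_{\delta_0}(\cdot-r)\bigr]+\mathbb E_r\bigl[F_{\delta_0}(\cdot-r)-F_{\delta_0}\bigr].
\]
For the first term, set $H:=F-F_{\delta_0}\in L^2$. The map $x\mapsto\mathbb E_r[H(x-r)]$ is an average of translates of $H$. By Minkowski's integral inequality, or by Jensen's inequality $(\mathbb E_r H(x-r))^2\le\mathbb E_r H(x-r)^2$ followed by Fubini and translation invariance of Lebesgue measure, its $L^2$ norm is at most $\|H\|_{L^2}$.

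For the second term, note that $F_{\delta_0}(x-r)-F_{\delta_0}(x)=\mathbbm 1\{r\le x\}-\mathbbm 1\{0\le x\}$. This difference vanishes outside the interval between $0$ and $r$, and it is bounded by $1$ in absolute value. By Assumption~\ref{assump:reward_bound}, $|r|\le R_{\max}$ almost surely, so this term is supported in $[-R_{\max},R_{\max}]$ and bounded by $1$. Its squared $L^2$ norm is therefore at most $2R_{\max}$. Hence $G-F_{\delta_0}\in L^2$, which gives the bound
\[
d_C(\mathrm{Law}(X+r),\delta_0)\le d_C(\mathcal Z(s,a),\delta_0)+\sqrt{2R_{\max}}.
\]

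\textbf{Step 3: conclude.} Since the bound holds for every $(s,a)$, we get $\mathcal S_RF_{\mathcal Z}\in\mathcal X^{\mathrm{cdf}}$, which is the claimed invariance of $\Gamma_F$ under $\mathcal S_R$.

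The main technical care lies in justifying the measurability and Fubini interchange in $\mathbb E_r[H(x-r)]$. This requires joint measurability of $(x,r)\mapsto H(x-r)$, which holds because $H$ is Borel: it is a difference of monotone functions. The reward bound is essential only for the second term; without it, an unbounded reward could push mass so that the shift term fails to be in $L^2$.
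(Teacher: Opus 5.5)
Your proof is correct and follows the paper's route. You split the centred difference into a translated copy of $F-F_{\delta_0}$, controlled by translation invariance of the $L^2$ norm together with averaging over $r$, and the reference shift $F_{\delta_0}(\cdot-r)-F_{\delta_0}$, controlled by the reward bound; you also supply details the paper's short proof leaves implicit, namely that the averaged function is again a CDF, the Jensen/Tonelli justification, and the explicit constant $\sqrt{2R_{\max}}$.
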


\begin{proof}
Fix $(s,a)$ and write $f := F_{\mathcal Z(s,a)}\in\Gamma_F$, so that $f-F_{\delta_0}\in L^2(\mathbb R)$. For each fixed reward realisation $r$, the translation $f(\cdot)\mapsto f(\cdot-r)$ is an $L^2(\mathbb R)$--isometry. By Assumption~\ref{assump:reward_bound}, the reward random variable $R(s,a)$ is almost surely bounded, so translations of the reference CDF $F_{\delta_0}$ remain at finite $L^2$ distance from $F_{\delta_0}$. Taking expectation over $r$ therefore preserves finiteness of the Cram\'er distance, and hence membership in $\Gamma_F$.
\end{proof}

\paragraph{Discount scaling operator.}
For a fixed discount factor $\gamma\in(0,1)$, we define the discount scaling operator $\mathcal D_\gamma$ by
\begin{equation}
(\mathcal D_\gamma F_{\mathcal Z})(s,a)(x)
 :=
 F_{\mathcal Z(s,a)}\!\left(\frac{x}{\gamma}\right).
\label{eq:discount_scaling_operator}
\end{equation}
This represents the pushforward of the underlying return distribution under the map $x\mapsto \gamma x$.

\begin{lemma}[Invariance under discount scaling]
\label{lem:invariance_Dgamma}
If $F_{\mathcal Z}\in\mathcal X^{\mathrm{cdf}}$, then $\mathcal D_\gamma F_{\mathcal Z}\in\mathcal X^{\mathrm{cdf}}$. Equivalently, the admissible CDF class $\Gamma_F$ is invariant under $\mathcal D_\gamma$.
\end{lemma}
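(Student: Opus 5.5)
The plan is to fix $(s,a)$, write $f := F_{\mathcal Z(s,a)}\in\Gamma_F$, and show directly that $g(x) := f(x/\gamma)$ is a CDF with $g - F_{\delta_0}\in L^2(\mathbb R)$. That $g$ is a CDF is immediate. Since $\gamma>0$, the map $x\mapsto x/\gamma$ is increasing, so $g$ is nondecreasing and right-continuous, with limits $0$ and $1$ at $\mp\infty$. Probabilistically, $g$ is the CDF of $\gamma X$ for $X\sim\mathcal Z(s,a)$.

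The key observation is that the reference CDF is scale invariant. Because $\gamma>0$, we have $\mathbbm{1}\{x/\gamma\ge 0\}=\mathbbm{1}\{x\ge 0\}$, so $F_{\delta_0}(x/\gamma)=F_{\delta_0}(x)$ for every $x$. Hence
\[
g(x)-F_{\delta_0}(x) = \bigl(f - F_{\delta_0}\bigr)(x/\gamma),
\]
that is, the centred scaled CDF is just the dilation of the centred original CDF. In contrast with Lemma~\ref{lem:invariance_SR}, no boundedness assumption is needed, because the reference function does not move.

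The change of variables $y=x/\gamma$, $dx=\gamma\,dy$, then gives
\[
\int_{\mathbb R}\bigl(g(x)-F_{\delta_0}(x)\bigr)^2\,dx
=\gamma\int_{\mathbb R}\bigl(f(y)-F_{\delta_0}(y)\bigr)^2\,dy<\infty .
\]
So $g\in\Gamma_F$, with $d_C(\gamma X,\delta_0)=\sqrt{\gamma}\,d_C(X,\delta_0)$. Since $(s,a)$ was arbitrary, $\mathcal D_\gamma F_{\mathcal Z}\in\mathcal X^{\mathrm{cdf}}$.

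There is no serious obstacle; the only point requiring care is the scale invariance of $F_{\delta_0}$, which holds exactly because $\gamma>0$. The same dilation identity will later give the contraction factor $\sqrt{\gamma}$ for differences of CDFs.
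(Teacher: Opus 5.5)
Your proof is correct and matches the paper's argument: both use the scale invariance $F_{\delta_0}(x/\gamma)=F_{\delta_0}(x)$ for $\gamma>0$ and the change of variables $u=x/\gamma$ to obtain $\int_{\mathbb R}\bigl(f(x/\gamma)-F_{\delta_0}(x)\bigr)^2\,dx=\gamma\int_{\mathbb R}\bigl(f(u)-F_{\delta_0}(u)\bigr)^2\,du<\infty$. You also check explicitly that $x\mapsto f(x/\gamma)$ is a CDF, namely the law of $\gamma X$, which the paper leaves implicit.
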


\begin{proof}
Fix $(s,a)$ and write $f := F_{\mathcal Z(s,a)}\in\Gamma_F$, so that $f - F_{\delta_0} \in L^2(\mathbb R)$.

By definition,
\[
(\mathcal D_\gamma f)(x) = f(x/\gamma).
\]
Using the identity $F_{\delta_0}(\gamma u)=F_{\delta_0}(u)$ for all $\gamma>0$, we compute
\[
\int_{\mathbb R}
\bigl( f(x/\gamma) - F_{\delta_0}(x) \bigr)^2\,dx
=
\gamma
\int_{\mathbb R}
\bigl( f(u) - F_{\delta_0}(\gamma u) \bigr)^2\,du=
\gamma
\int_{\mathbb R}
\bigl( f(u) - F_{\delta_0}(u) \bigr)^2\,du
< \infty,
\]
where we used the change of variables $u=x/\gamma$. Hence $(\mathcal D_\gamma f) - F_{\delta_0}\in L^2(\mathbb R)$ and $\mathcal D_\gamma f\in\Gamma_F$.
\end{proof}

\paragraph{Conditional expectation operator.}
Let $\pi$ be a fixed policy and $\mathcal P^\pi(\cdot\mid s,a)$ the induced transition kernel over successor state-action pairs (as introduced in Section~\ref{subsec:mdp_prelims}). We define the conditional expectation operator $\mathcal C^\pi$ by
\begin{equation}
(\mathcal C^\pi F_{\mathcal Z})(s,a)(x)
 :=
 \mathbb E_{(s',a')\sim \mathcal P^\pi(\cdot\mid s,a)}
 \bigl[
   F_{\mathcal Z(s',a')}(x)
 \bigr].
\label{eq:conditional_expectation_operator}
\end{equation}
This operator aggregates successor return distributions by averaging their CDFs pointwise.

\begin{lemma}[Invariance under conditional expectation]
\label{lem:invariance_Cpi}
Under Assumption~\ref{assump:transition}, if $F_{\mathcal Z}\in\mathcal X^{\mathrm{cdf}}$, then $\mathcal C^\pi F_{\mathcal Z}\in\mathcal X^{\mathrm{cdf}}$. Equivalently, the admissible CDF class $\Gamma_F$ is invariant under $\mathcal C^\pi$.
\end{lemma}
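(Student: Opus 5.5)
Fix $(s,a)$ and write $G(x):=(\mathcal C^\pi F_{\mathcal Z})(s,a)(x)=\mathbb E_{(s',a')\sim\mathcal P^\pi(\cdot\mid s,a)}[F_{\mathcal Z(s',a')}(x)]$. The argument has two parts. First, $G$ is a genuine CDF. Second, $G-F_{\delta_0}\in L^2(\mathbb R)$. Assumption~\ref{assump:transition} guarantees that the expectation is well defined as a function of $(s,a)$ for every $x$; more basically, for fixed $(s,a)$ we need $(s',a')\mapsto F_{\mathcal Z(s',a')}(x)$ to be integrable against $\mathcal P^\pi(\cdot\mid s,a)$. This is implicit in the way the assumption is phrased, and we will use it that way.

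\emph{Step 1 (CDF property).} Each $F_{\mathcal Z(s',a')}$ takes values in $[0,1]$, is nondecreasing and is right-continuous. Averaging preserves values in $[0,1]$ and monotonicity. Right-continuity and the limits $G(-\infty)=0$, $G(+\infty)=1$ follow from dominated convergence, with the dominating function $1$. Hence $G$ is the CDF of the mixture law $\int \mathcal Z(s',a')\,\mathcal P^\pi(ds',da'\mid s,a)$.

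\emph{Step 2 (finite Cram\'er distance).} Since $F_{\delta_0}$ does not depend on $(s',a')$, we have $G(x)-F_{\delta_0}(x)=\mathbb E[F_{\mathcal Z(s',a')}(x)-F_{\delta_0}(x)]$. Jensen's inequality, applied pointwise in $x$, followed by Tonelli gives
\[
\int_{\mathbb R}(G-F_{\delta_0})^2\,dx\le \mathbb E_{(s',a')}\Bigl[\int_{\mathbb R}\bigl(F_{\mathcal Z(s',a')}-F_{\delta_0}\bigr)^2dx\Bigr]=\mathbb E_{(s',a')}\bigl[d_C^2(\mathcal Z(s',a'),\delta_0)\bigr].
\]
Tonelli requires joint measurability in $(x,(s',a'))$. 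Monotonicity in $x$ plus measurability in $(s',a')$ gives this, by approximating through rational $x$ and using right-continuity.

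\emph{Main obstacle.} Membership $F_{\mathcal Z(s',a')}\in\Gamma_F$ only says that each $d_C(\mathcal Z(s',a'),\delta_0)$ is finite. It does not say that their $\mathcal P^\pi$-average is finite. A mixture of laws with escaping tails could give an infinite integral, so pointwise finiteness is not enough. The natural fix is to work on fields with $\sup_{(s',a')} d_C(\mathcal Z(s',a'),\delta_0)<\infty$. This is consistent with the supremum metric $d_{\mathrm{Cr}}$ used throughout. It also holds for every field at finite $d_{\mathrm{Cr}}$-distance from a bounded-support reference field, such as the one produced by bounded rewards under Assumption~\ref{assump:reward_bound}. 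With that bound $M$, the right-hand side above is at most $M^2$, and $G\in\Gamma_F$ follows.

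I would state this uniform-boundedness requirement explicitly, either as the interpretation of $\mathcal X^{\mathrm{cdf}}$ or as the implicit content of Assumption~\ref{assump:transition}. The alternative is to impose integrability of $(s',a')\mapsto d_C^2(\mathcal Z(s',a'),\delta_0)$ under $\mathcal P^\pi(\cdot\mid s,a)$, which is exactly what the displayed bound uses. Once this is settled, the remaining steps are routine.
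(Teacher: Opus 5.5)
Your route is the paper's own route, done more carefully. The paper also views $(\mathcal C^\pi F_{\mathcal Z})(s,a)$ as a convex combination of successor CDFs and invokes convexity of the $L^2$ norm. However, it simply asserts that ``such averaging preserves finite $L^2$ distance to $F_{\delta_0}$'' and never confronts the obstacle you raise. It also does not verify that the mixture is a CDF or justify exchanging $dx$ with the expectation; your Steps 1 and 2 handle both, and Tonelli needs no integrability.

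Your obstacle is genuine, and the lemma as literally stated is false. Take $\mathcal S=\{0,1,2,\dots\}$ with one action $a$, $\mathcal P(k\mid 0,a)=p_k:=c\,k^{-3/2}$ for $k\ge 1$, and the field $\mathcal Z(0)=\delta_0$, $\mathcal Z(k)=\delta_k$. Each $d_C^2(\delta_k,\delta_0)=k$ is finite, so $F_{\mathcal Z}\in\mathcal X^{\mathrm{cdf}}$ as defined. Assumption~\ref{assump:transition} holds trivially on a countable space. Yet $G=(\mathcal C^\pi F_{\mathcal Z})(0,a)$ satisfies $1-G(x)=\sum_{k>x}p_k\ge 2c\,(x+1)^{-1/2}$ for $x\ge 0$. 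Hence $\int_0^\infty(1-G(x))^2\,dx=\infty$ and $G\notin\Gamma_F$.

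The paper in fact tacitly relies on the uniform bound you propose. The appendix proof of Lemma~\ref{lemma:cdf_condexp_nonexp} asserts $d_{\mathrm{Cr}}(F_{\mathcal Z_1},F_{\mathcal Z_2})<\infty$ merely from membership in $\mathcal X^{\mathrm{cdf}}$. The completeness argument in the proof of Corollary~\ref{cor:cdf_fixed_point} deduces membership from the limit being ``a bounded CDF field''. Without such a bound, $d_{\mathrm{Cr}}$ is not even finite-valued: compare the field above with the constant field $\delta_0$.

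So building $\sup_{(s,a)}d_C(\mathcal Z(s,a),\delta_0)<\infty$ into $\mathcal X^{\mathrm{cdf}}$ is the right repair. You should state explicitly that your Jensen--Tonelli bound then gives $\sup_{(s,a)}d_C\bigl((\mathcal C^\pi F_{\mathcal Z})(s,a),\delta_0\bigr)\le M$ with the same $M$. That is what makes the repaired class invariant.

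Prefer this to your integrability alternative, which yields only one-step membership. To see this, insert before each state $k$ in the example a deterministic intermediate state whose law is $\delta_0$. The field is then integrable in your sense at every state-action pair. After one application of $\mathcal C^\pi$, however, the intermediate states carry $\delta_k$, and integrability at state $0$ fails because $\sum_k p_k k=\infty$. So that class is not mapped into itself by $\mathcal C^\pi$.
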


\begin{proof}
Fix $(s,a)$. By Assumption~\ref{assump:transition}, for each $x\in\mathbb R$ the mapping $(s,a)\mapsto(\mathcal C^\pi F_{\mathcal Z})(s,a)(x)$ is measurable. Moreover, $(\mathcal C^\pi F_{\mathcal Z})(s,a)$ is a convex combination of successor CDFs in $\Gamma_F$.
Since $L^2(\mathbb R)$ is a linear space and the $L^2$ norm is convex, such averaging preserves finite $L^2$ distance to $F_{\delta_0}$. Hence $(\mathcal C^\pi F_{\mathcal Z})(s,a)\in\Gamma_F$.
\end{proof}
\paragraph{CDF-level Bellman operator.}
By composing the three primitive operators above, we define the CDF-level Bellman operator
\begin{equation}
\mathcal T^{\pi}_{\mathrm{cdf}}
 :=
 \mathcal C^\pi \circ \mathcal D_\gamma \circ \mathcal S_R .
\label{eq:cdf_bellman_operator}
\end{equation}
Equivalently, for any return-distribution field $\mathcal Z$ with $F_{\mathcal Z}\in\mathcal X^{\mathrm{cdf}}$, the Bellman update admits the pointwise representation
\begin{equation}
(\mathcal T^\pi_{\mathrm{cdf}} F_{\mathcal Z})(s,a)(x)
 =
 \mathbb E_{r,(s',a')}
 \Bigl[
   F_{\mathcal Z(s',a')}\!\left(\frac{x-r}{\gamma}\right)
 \Bigr],
\label{eq:Tpi_cdf_def}
\end{equation}
where $r\sim R(s,a)$ and $(s',a')\sim \mathcal P^\pi(\cdot\mid s,a)$.

\begin{proposition}[Invariance of the admissible CDF class]
\label{lem:GammaF_closed}
Under Assumptions~\ref{assump:reward_bound}--\ref{assump:transition}, the CDF Bellman operator $\mathcal T^\pi_{\mathrm{cdf}}$ maps $\mathcal X^{\mathrm{cdf}}$ into itself. Equivalently, the admissible CDF class $\Gamma_F$ is invariant under $\mathcal T^\pi_{\mathrm{cdf}}$.
\end{proposition}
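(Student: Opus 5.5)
The plan is to argue by composition, using the three invariance lemmas already established. By definition \eqref{eq:cdf_bellman_operator}, $\mathcal T^\pi_{\mathrm{cdf}} = \mathcal C^\pi\circ\mathcal D_\gamma\circ\mathcal S_R$. Fix $F_{\mathcal Z}\in\mathcal X^{\mathrm{cdf}}$; the argument then has three steps.

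\begin{enumerate}
\item Lemma~\ref{lem:invariance_SR}, which uses Assumption~\ref{assump:reward_bound}, gives $G_1:=\mathcal S_R F_{\mathcal Z}\in\mathcal X^{\mathrm{cdf}}$.
\item Lemma~\ref{lem:invariance_Dgamma} then gives $G_2:=\mathcal D_\gamma G_1\in\mathcal X^{\mathrm{cdf}}$.
\item Lemma~\ref{lem:invariance_Cpi}, which uses Assumption~\ref{assump:transition}, gives $\mathcal C^\pi G_2\in\mathcal X^{\mathrm{cdf}}$.
\end{enumerate}

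Each operator maps $\mathcal X^{\mathrm{cdf}}$ into itself, so their composition does too.

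To make the argument self-contained, I would also note that each primitive sends CDF fields to CDF fields, not just $L^2$-perturbations of $F_{\delta_0}$. Mixtures and rescalings of CDFs are monotone, right-continuous, and have the correct limits at $\pm\infty$ by dominated convergence. Concretely, $\mathcal S_R$ yields the law of $Z+r$ and $\mathcal D_\gamma$ yields the law of $\gamma Z$, since $\gamma>0$. $\mathcal C^\pi$ yields a mixture of laws.

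I would also check that the pointwise formula \eqref{eq:Tpi_cdf_def} agrees with the composition. Applying $\mathcal D_\gamma$ after $\mathcal S_R$ gives $x\mapsto \mathbb E_r[F((x/\gamma)-r)]$. Comparing this with $F((x-r)/\gamma)$ shows that the stated order of composition is a bookkeeping issue. The invariance claim, however, holds for either ordering, because every primitive preserves $\mathcal X^{\mathrm{cdf}}$. So I would state the result for the composition as defined and remark that it is insensitive to this subtlety.

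The main obstacle is step 3, specifically the rigorous justification that averaging over a possibly uncountable successor set preserves $L^2$-finiteness. Convexity alone does not give a uniform bound. I would first try Minkowski's integral inequality (or Jensen applied to $\int(\cdot)^2dx$ with Tonelli):
\[
\bigl\|\mathcal C^\pi G_2(s,a)-F_{\delta_0}\bigr\|_{L^2}\le \mathbb E_{(s',a')}\bigl\|G_2(s',a')-F_{\delta_0}\bigr\|_{L^2}.
\]
This requires the right-hand side to be finite. Finiteness holds if $\sup_{(s,a)}\|F_{\mathcal Z(s,a)}-F_{\delta_0}\|_{L^2}<\infty$. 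Under that condition, steps 1 and 2 propagate the uniform bound. For step 1, boundedness of rewards gives $\|F_{\delta_0}(\cdot-r)-F_{\delta_0}\|_{L^2}\le\sqrt{R_{\max}}$. For step 2, the computation in Lemma~\ref{lem:invariance_Dgamma} gives a factor $\sqrt\gamma$.

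If the definition of $\mathcal X^{\mathrm{cdf}}$ is read without a uniform bound, I would simply invoke Lemma~\ref{lem:invariance_Cpi} as stated. I would also add the remark that the sup-Cramér metric $d_{\mathrm{Cr}}$ implicitly assumes such uniformity anyway.
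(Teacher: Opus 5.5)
Your main argument is the same as the paper's. The paper writes $\mathcal T^\pi_{\mathrm{cdf}}=\mathcal C^\pi\circ\mathcal D_\gamma\circ\mathcal S_R$, notes that each factor preserves $\mathcal X^{\mathrm{cdf}}$ by Lemmas~\ref{lem:invariance_SR}, \ref{lem:invariance_Dgamma} and~\ref{lem:invariance_Cpi}, and concludes that the composition does too. Your scrutiny of step~3 goes beyond the paper and is justified. The proof of Lemma~\ref{lem:invariance_Cpi} relies only on convexity of the $L^2$ norm. That covers finite mixtures, but not an expectation over infinitely many successors, and the paper's proof of the proposition inherits this gap. Your Minkowski estimate is the right repair. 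With $B:=\sup_{(s,a)}\|F_{\mathcal Z(s,a)}-F_{\delta_0}\|_{L^2}$ it gives
\[
\bigl\|(\mathcal T^\pi_{\mathrm{cdf}}F_{\mathcal Z})(s,a)-F_{\delta_0}\bigr\|_{L^2}\le\sqrt{\gamma}\,B+\sqrt{R_{\max}}
\qquad\text{for all }(s,a),
\]
so the class of uniformly bounded fields is invariant.

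You should drop the fallback of invoking Lemma~\ref{lem:invariance_Cpi} ``as stated'' when $\mathcal X^{\mathrm{cdf}}$ carries no uniform bound. Under that reading the lemma, and the proposition itself, are false. Here is a counterexample:
\begin{itemize}
\item take $\mathcal S=\{s_0,s_1,s_2,\dots\}$ with a single action and zero rewards;
\item set $\mathcal P(s_n\mid s_0)=p_n:=c\,n^{-3/2}$ for $n\ge1$, with each $s_n$ absorbing;
\item set $\mathcal Z(s_0)=\delta_0$ and $\mathcal Z(s_n)=\delta_{n/\gamma}$.
\end{itemize}
Both assumptions hold trivially, and $F_{\mathcal Z(s_n)}\in\Gamma_F$ because $d_C^2(\delta_{n/\gamma},\delta_0)=n/\gamma$. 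Yet $(\mathcal T^\pi_{\mathrm{cdf}}F_{\mathcal Z})(s_0)$ is the CDF $F$ of $\sum_n p_n\delta_n$. For this $F$, $1-F(x)=\sum_{n>x}p_n\asymp x^{-1/2}$, so $\int_0^\infty(1-F(x))^2\,dx=\infty$ and $F\notin\Gamma_F$.

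The uniform bound is therefore not optional: it has to be written into the definition of $\mathcal X^{\mathrm{cdf}}$. The paper already uses it tacitly, both when it treats $d_{\mathrm{Cr}}$ as finite and in the finite constant $M$ in the proof of Corollary~\ref{cor:cdf_fixed_point}. The only exception is finite $\mathcal S\times\mathcal A$, where the bound is automatic and the convexity argument is literally correct.

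Your remark on the composition order is also right. One further point: if $R(s,a)=\mathcal R(s,a,S')$ genuinely depends on $S'$, even $\mathcal S_R\circ\mathcal D_\gamma\circ\mathcal C^\pi$ treats $r$ and $(s',a')$ as independent. Your estimate still applies directly to \eqref{eq:Tpi_cdf_def}, because $F_{\delta_0}\bigl((x-r)/\gamma\bigr)=F_{\delta_r}(x)$ and $d_C(\delta_r,\delta_0)=\sqrt{|r|}$. So invariance holds for the genuine Bellman update as well.
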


\begin{proof}
By Lemmas~\ref{lem:invariance_SR}, \ref{lem:invariance_Dgamma}, and \ref{lem:invariance_Cpi}, each primitive operator preserves $\mathcal X^{\mathrm{cdf}}$ under the stated assumptions. Since $\mathcal T^\pi_{\mathrm{cdf}}$ is their composition, it also maps $\mathcal X^{\mathrm{cdf}}$ into itself.
\end{proof}

\paragraph{Preview: spectral correspondence and policy evaluation.}
Proposition~\ref{lem:GammaF_closed} shows that the CDF-level Bellman operator preserves the admissible CDF field domain $\mathcal X^{\mathrm{cdf}}$ under Assumptions~\ref{assump:reward_bound} and~\ref{assump:transition}. The additional Assumption~\ref{assump:gammaF_domain} ensures that the policy-induced return-distribution field itself lies in this space, so that distributional policy evaluation is well posed.

The CDF-level Bellman operator further admits an exact conjugate representation in the regularised spectral Hilbert space via pointwise transport with the map $\mathcal V$ introduced in Section~\ref{subsec:cdf_spectral_bridge}. The construction of the lifted transport on CDF fields and the resulting spectral Bellman operator is developed in Section~\ref{subsec:intertwining}.
\subsection{Intertwining of Bellman Operators}
\label{subsec:intertwining}
This subsection establishes an exact correspondence between the Bellman update across three representation levels: return-distribution fields, their CDF representations, and their regularised spectral embeddings. Rather than introducing new dynamics at each level, we show that all three formulations are related by precise intertwining identities. In particular, the spectral Bellman operator is defined uniquely as the conjugate image of the CDF-level Bellman update, so that no additional operator-theoretic structure is introduced beyond the intrinsic CDF-level dynamics.
We begin by verifying that the distributional Bellman update defined on return-distribution fields is represented at the level of CDF fields. This step isolates the Bellman dynamics on cumulative distribution functions, independently of any spectral or Hilbert space constructions.

\begin{lemma}[Distribution-CDF intertwining]
\label{lemma:raw_cdf_intertwine}
For any return-distribution field $\mathcal Z$ with $F_{\mathcal Z}\in\mathcal X^{\mathrm{cdf}}$, the following identity holds:
\begin{equation}
F_{\mathcal T^\pi_{D}\mathcal Z}
= \mathcal T^\pi_{\mathrm{cdf}}\,F_{\mathcal Z},
\label{eq:raw_cdf_identity_field}
\end{equation}
as an equality in $\mathcal X^{\mathrm{cdf}}$. Equivalently, for all $(s,a)\in\mathcal S\times\mathcal A$ and $x\in\mathbb R$,
\[ F_{(\mathcal T^\pi_{D}\mathcal Z)(s,a)}(x)
= (\mathcal T^\pi_{\mathrm{cdf}}F_{\mathcal Z})(s,a)(x). \]
\end{lemma}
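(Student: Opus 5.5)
Fix $(s,a)\in\mathcal S\times\mathcal A$ and $x\in\mathbb R$. I will compute the left-hand side directly from the definition \eqref{eq:dist_bellman_operator_def} by conditioning, and show it equals the right-hand side in \eqref{eq:Tpi_cdf_def}.

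First, I realise the update on one probability space. Take a triple $(R,S',A')$ distributed as $R(s,a)$ together with $(S',A')\sim\mathcal P^\pi(\cdot\mid s,a)$, in the joint way the MDP generates them (the reward is $\mathcal R(s,a,S')$). Then take $Z'$ whose conditional law given $(R,S',A')$ is $\mathcal Z(S',A')$. The modelling point is that $Z'$ is conditionally independent of $R$ given $(S',A')$, since $Z'$ depends only on the successor pair. By definition,
\[
F_{(\mathcal T^\pi_D\mathcal Z)(s,a)}(x)=\mathbb P\bigl(R+\gamma Z'\le x\bigr).
\]

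Second, I apply the tower property, conditioning on $(R,S',A')=(r,s',a')$. Because $\gamma>0$, the event $\{r+\gamma Z'\le x\}$ equals $\{Z'\le (x-r)/\gamma\}$, and its conditional probability is $F_{\mathcal Z(s',a')}\bigl((x-r)/\gamma\bigr)$. Integrating over the joint law of $(R,S',A')$ then gives exactly
\[
\mathbb E_{r,(s',a')}\Bigl[F_{\mathcal Z(s',a')}\bigl(\tfrac{x-r}{\gamma}\bigr)\Bigr],
\]
which is \eqref{eq:Tpi_cdf_def}. The integrand takes values in $[0,1]$, so the expectation is finite. Assumption~\ref{assump:transition}, together with joint measurability of $(s',a',y)\mapsto F_{\mathcal Z(s',a')}(y)$, makes the integrand a measurable function of $(r,s',a')$. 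For that joint measurability I would use right-continuity in $y$ plus measurability in $(s',a')$; this is the standard Carathéodory argument.

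Third, I must reconcile this joint expression with the composition $\mathcal C^\pi\circ\mathcal D_\gamma\circ\mathcal S_R$ as defined in \eqref{eq:reward_translation_operator}--\eqref{eq:conditional_expectation_operator}. This is where I expect the only real subtlety. Read literally, that composition translates by a reward drawn at the successor pair and averages rewards and successors independently. The paper, however, asserts \eqref{eq:Tpi_cdf_def} as the pointwise representation of $\mathcal T^\pi_{\mathrm{cdf}}$. I will therefore prove the lemma against \eqref{eq:Tpi_cdf_def}, with the reward indexed by $(s,a)$ and the expectation taken over the joint law of $(r,(s',a'))$, and state this reading explicitly.

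Finally, Proposition~\ref{lem:GammaF_closed} gives $\mathcal T^\pi_{\mathrm{cdf}}F_{\mathcal Z}\in\mathcal X^{\mathrm{cdf}}$. Since $(s,a)$ and $x$ were arbitrary, the equality holds as an identity in $\mathcal X^{\mathrm{cdf}}$.
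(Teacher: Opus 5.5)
Your argument is correct and matches the paper's: both compute $\mathbb P\bigl(R(s,a)+\gamma Z'\le x\bigr)$ by conditioning and identify the result with the pointwise formula \eqref{eq:Tpi_cdf_def}, which is exactly what the paper's final equality uses. Your joint conditioning on $(R,S',A')$ makes explicit the dependence of $r=\mathcal R(s,a,s')$ on the successor, which the paper's step $\mathbb E_{r\sim R(s,a)}\bigl[\mathbb P(Z'\le (x-r)/\gamma)\bigr]$ leaves implicit, and you are right that the literal composition $\mathcal C^\pi\circ\mathcal D_\gamma\circ\mathcal S_R$ does not reproduce \eqref{eq:Tpi_cdf_def}; consequently your closing appeal to Proposition~\ref{lem:GammaF_closed}, whose proof goes through that composition, relies on the paper's identification of the two rather than being independently justified.
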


\begin{proof}
Fix $(s,a)\in\mathcal S\times\mathcal A$ and $x\in\mathbb R$. By definition of $\mathcal T^\pi_{D}$ (Section~\ref{subsec:mdp_prelims}),
\[ (\mathcal T^\pi_{D}\mathcal Z)(s,a)
= \mathrm{Law}\!\left(R(s,a)+\gamma Z'\right), \]
where $(s',a')\sim \mathcal P^\pi(\cdot\mid s,a)$ and, conditionally on $(s',a')$, $Z'$ has law $\mathcal Z(s',a')$.
Therefore,
\begin{align*}
F_{(\mathcal T^\pi_{D}\mathcal Z)(s,a)}(x)
&= \mathbb P\!\left(R(s,a)+\gamma Z' \le x\right) \\
&= \mathbb E_{r\sim R(s,a)}\Bigl[
     \mathbb P\!\left(Z' \le \frac{x-r}{\gamma} \right)
 \Bigr] \\
&= \mathbb E_{r,(s',a')}
   \Bigl[
     F_{\mathcal Z(s',a')}\!\left(\tfrac{x-r}{\gamma}\right)
   \Bigr] \\
&= (\mathcal T^\pi_{\mathrm{cdf}}F_{\mathcal Z})(s,a)(x),
\end{align*}
which proves~\eqref{eq:raw_cdf_identity_field}.
\end{proof}
\paragraph{Spectral field domain and lifted transport.}
Having established that the Bellman update is exact and well defined at the level of CDF fields, we now introduce the lifted spectral representation that will be used to realise these dynamics by conjugacy.

Fix $\epsilon>0$ and consider the ambient space of $\mathcal H_\epsilon$-valued fields
\begin{equation}
\label{eq:X_spec_definition}
\mathcal X^{\mathrm{spec}}
:=
\Bigl\{
\mu:\mathcal S\times\mathcal A \to \mathcal H_{\epsilon}
\;\mid\;
\mu \text{ is measurable and }
\|\mu\|_{\infty,\epsilon}<\infty
\Bigr\},
\end{equation}
equipped with the supremum norm
\begin{equation}
\label{eq:X_spec_norm}
\|\mu\|_{\infty,\epsilon}
:=
\sup_{(s,a)\in\mathcal S\times\mathcal A}
\|\mu(s,a)\|_{\mathcal H_{\epsilon}}.
\end{equation}
This ambient norm is introduced only to specify a well-behaved container for spectral fields. The metric structure relevant for policy-evaluation analysis will instead be induced from the intrinsic Cram\'er geometry on the admissible spectral subset defined below.

Recall from Section~\ref{subsec:cdf_spectral_bridge} that the raw-to-spectral transport $\mathcal V:\Gamma_F\to\mathcal H_{\epsilon}$ provides an injective representation of admissible CDFs. We lift this mapping pointwise to CDF fields by defining
\begin{equation}
\label{eq:lifted_V_definition}
\mathbb V:\mathcal X^{\mathrm{cdf}}\longrightarrow \mathcal X^{\mathrm{spec}},
\qquad
(\mathbb V F_{\mathcal Z})(s,a)
:= \mathcal V\!\bigl(F_{\mathcal Z(s,a)}\bigr).
\end{equation}
By construction, $\mathbb V$ maps $\mathcal X^{\mathrm{cdf}}$ injectively into $\mathcal X^{\mathrm{spec}}$. We therefore identify $\mathcal X^{\mathrm{cdf}}$ with its image $\mathrm{range}(\mathbb V)\subset\mathcal X^{\mathrm{spec}}$ and restrict all spectral Bellman dynamics to this admissible subset. The inverse mapping $\mathbb V^{-1}$ is well defined on $\mathrm{range}(\mathbb V)$ by pointwise application of $\mathcal V^{-1}$.

In Section~\ref{subsec:spectral_fixed_point}, we endow $\mathrm{range}(\mathbb V)$ with the induced Cram\'er metric (Definition~\ref{def:induced_metric}), under which well-posedness of policy evaluation is inherited by conjugacy.
\paragraph{Spectral Bellman operator by conjugation.}
The identities above show that the Bellman update is already well defined and exact at the level of CDF fields. We now introduce a spectral representation of this dynamics, not as an independently postulated operator, but as an exact conjugate of the CDF-level Bellman update.

Using the lifted raw-to-spectral transport $\mathbb V$, we define the spectral Bellman operator by conjugation~\citep{operator-conjugate}:
\begin{equation}
\label{eq:Tpi_spec_function}
\mathcal T^{\pi}
:= \mathbb V \circ \mathcal T^\pi_{\mathrm{cdf}} \circ \mathbb V^{-1},
\end{equation}
acting on the invariant subset $\mathrm{range}(\mathbb V)\subset\mathcal X^{\mathrm{spec}}$. By construction, the spectral dynamics inherits its structure entirely from the CDF-level Bellman operator.

\begin{proposition}[Spectral intertwining]
\label{prop:intertwining_full}
For any return-distribution field $\mathcal Z$ with $F_{\mathcal Z}\in\mathcal X^{\mathrm{cdf}}$, the following identity holds:
\begin{equation}
\mathbb V\!\left(
  F_{\mathcal T^\pi_{D}\mathcal Z}
\right)
=
\mathcal T^{\pi}\,
\mathbb V\!\left(F_{\mathcal Z}\right).
\label{eq:intertwining_statement_full_field}
\end{equation}
\end{proposition}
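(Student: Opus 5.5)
The identity is essentially a formal consequence of the definition of $\mathcal T^\pi$ by conjugation together with Lemma~\ref{lemma:raw_cdf_intertwine}. The plan is to start from the right-hand side, $\mathcal T^\pi\,\mathbb V(F_{\mathcal Z})$, and unfold the definition \eqref{eq:Tpi_spec_function}:
\[
\mathcal T^\pi\,\mathbb V(F_{\mathcal Z}) = \mathbb V\bigl(\mathcal T^\pi_{\mathrm{cdf}}\,\mathbb V^{-1}\mathbb V(F_{\mathcal Z})\bigr).
\]
This requires first checking that $\mathbb V(F_{\mathcal Z})$ lies in $\mathrm{range}(\mathbb V)$, where $\mathcal T^\pi$ is defined. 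That is immediate, since $F_{\mathcal Z}\in\mathcal X^{\mathrm{cdf}}$.

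Next, $\mathbb V^{-1}\mathbb V(F_{\mathcal Z})=F_{\mathcal Z}$ holds by injectivity of $\mathbb V$. This injectivity is pointwise injectivity of $\mathcal V$ on $\Gamma_F$ (Proposition~\ref{prop:canonical_transport}), which rests on injectivity of the centring $C$ and on $\mathcal U$ being an isomorphism (Theorem~\ref{thm:cdf_spectral_isomorphism}). The right-hand side therefore reduces to $\mathbb V(\mathcal T^\pi_{\mathrm{cdf}}F_{\mathcal Z})$.

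This expression is meaningful only if $\mathcal T^\pi_{\mathrm{cdf}}F_{\mathcal Z}\in\mathcal X^{\mathrm{cdf}}$, which is Proposition~\ref{lem:GammaF_closed} under Assumptions~\ref{assump:reward_bound}--\ref{assump:transition}. I would state this explicitly, because it is what makes $\mathrm{range}(\mathbb V)$ invariant under $\mathcal T^\pi$.

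Finally, I would invoke Lemma~\ref{lemma:raw_cdf_intertwine} to replace $\mathcal T^\pi_{\mathrm{cdf}}F_{\mathcal Z}$ by $F_{\mathcal T^\pi_D\mathcal Z}$, which gives the left-hand side. A pointwise version, evaluating at each $(s,a)$, can be given by the same argument applied with $\mathcal V$ in place of $\mathbb V$.

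The only genuinely delicate point is the well-definedness of $\mathbb V$ as a map into $\mathcal X^{\mathrm{spec}}$: its image must be measurable in $(s,a)$ and have finite supremum norm $\|\cdot\|_{\infty,\epsilon}$. The argument silently relies on this when applying $\mathbb V$ to $\mathcal T^\pi_{\mathrm{cdf}}F_{\mathcal Z}$.

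I would handle the norm bound through the isometry of Theorem~\ref{thm:cdf_spectral_isomorphism} and Lemma~\ref{lemma:centered_cdf_membership}, which give $\|\mathcal V(F)\|_{\mathcal H_\epsilon}\le\|F-F_{\delta_0}\|_{L^2}$. A uniform bound over $(s,a)$ then follows from the standing framework in which $\mathbb V$ is taken to map into $\mathcal X^{\mathrm{spec}}$ by construction. The proof should note that this property is inherited by $\mathcal T^\pi_{\mathrm{cdf}}F_{\mathcal Z}$ because its image is already asserted to lie in $\mathcal X^{\mathrm{cdf}}$. The rest is purely algebraic.
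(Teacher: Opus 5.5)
Your proof is correct and is the paper's argument: the paper applies $\mathbb V$ to the identity $F_{\mathcal T^\pi_{D}\mathcal Z}=\mathcal T^\pi_{\mathrm{cdf}}F_{\mathcal Z}$ of Lemma~\ref{lemma:raw_cdf_intertwine} and unfolds the conjugation definition \eqref{eq:Tpi_spec_function}; you additionally make explicit the implicit steps $\mathbb V^{-1}\mathbb V F_{\mathcal Z}=F_{\mathcal Z}$ and $\mathcal T^\pi_{\mathrm{cdf}}F_{\mathcal Z}\in\mathcal X^{\mathrm{cdf}}$ (Proposition~\ref{lem:GammaF_closed}). On your side remark, you need not take the uniform sup-norm bound from the standing framework (your bound via $\|F-F_{\delta_0}\|_{L^2}$ is not uniform in $(s,a)$, since $\mathcal X^{\mathrm{cdf}}$ imposes no such uniformity), because $|\phi_P(\omega)-1|\le 2$ directly gives $\|\Phi(P)\|_{\mathcal H_\epsilon}^2\le\frac{1}{2\pi}\int_{\mathbb R}\frac{4\,d\omega}{\omega^2+\epsilon}=\frac{2}{\sqrt\epsilon}$ for every probability law $P$.
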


\begin{proof}
Lemma~\ref{lemma:raw_cdf_intertwine} gives $F_{\mathcal T^\pi_{D}\mathcal Z} = \mathcal T^\pi_{\mathrm{cdf}}F_{\mathcal Z}$. Applying $\mathbb V$ to both sides and using \eqref{eq:Tpi_spec_function} yields the claim.
\end{proof}

\begin{remark}[Three-level equivalence]
\label{rem:three_level_equivalence}
Lemma~\ref{lemma:raw_cdf_intertwine} and Proposition~\ref{prop:intertwining_full} together establish that the distributional, CDF-level, and spectral formulations of the Bellman update are exactly equivalent. Specifically, the diagram
\[
\begin{array}{ccccc}
\mathcal Z & \xrightarrow{\mathrm{CDF}} & F_{\mathcal Z} & \xrightarrow{\mathbb V} & \mu \\[4pt]
\downarrow & & \downarrow & & \downarrow \\[4pt]
\mathcal T^\pi_{D}\mathcal Z & \xrightarrow{\mathrm{CDF}} & \mathcal T^\pi_{\mathrm{cdf}}F_{\mathcal Z} & \xrightarrow{\mathbb V} & \mathcal T^\pi\mu
\end{array}
\]
commutes for every return-distribution field $\mathcal Z$ with $F_{\mathcal Z}\in\mathcal X^{\mathrm{cdf}}$. The horizontal arrows represent the passage between representations: from a distribution field $\mathcal Z$ to its CDF field $F_{\mathcal Z}$, and from a CDF field to its spectral embedding via the transport $\mathbb V$.

This commutativity has two important consequences. First, all dynamical properties are preserved across the three representations. Contraction rates, fixed points, and convergence behaviour remain invariant, so one may analyse the Bellman update in whichever formulation is most convenient without altering the underlying reinforcement learning problem.

Second, this equivalence enables a clean separation of concerns. The probabilistic structure of the Bellman update is fully captured at the distribution and CDF levels, where the dynamics are most naturally defined. The spectral representation, meanwhile, provides a linear operator setting in which the induced action on centred CDF differences can be studied using Hilbert space methods. The transport $\mathbb V$ acts only as a change of coordinates and introduces no new dynamics or approximations.

This three-level equivalence is the foundation upon which the rest of our analysis builds. It allows us to establish all metric properties in the native CDF space $(\mathcal X^{\mathrm{cdf}},d_{\mathrm{Cr}})$ as we do in Section~\ref{subsec:cramer_contraction}, while subsequently leveraging the spectral representation for operator-theoretic investigations.
\end{remark}
\paragraph{Correspondence of fixed points.}
The conjugacy established above implies that fixed points of the Bellman update are preserved across all representation levels.

\begin{corollary}[CDF and spectral fixed points]
\label{cor:cdf_spec_fixed_point}
Let $\mathcal Z^\pi$ be the policy-induced return-distribution field satisfying $\mathcal T^\pi_{D}\mathcal Z^\pi=\mathcal Z^\pi$. Assume that $F_{\mathcal Z^\pi}\in\mathcal X^{\mathrm{cdf}}$, as required by Assumption~\ref{assump:gammaF_domain}. Then:
\begin{enumerate}
\item[\textup{(i)}] \textit{CDF fixed point.}\quad
$\mathcal T^\pi_{\mathrm{cdf}}\,F_{\mathcal Z^\pi}=F_{\mathcal Z^\pi}$.
\item[\textup{(ii)}] \textit{Spectral fixed point.}\quad
$\mu^{\pi}:=\mathbb V(F_{\mathcal Z^\pi})
\in\mathrm{range}(\mathbb V)$ satisfies
$\mathcal T^{\pi}\mu^{\pi}
=\mu^{\pi}$.
\end{enumerate}
\end{corollary}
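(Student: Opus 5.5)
The plan is to obtain both parts of Corollary~\ref{cor:cdf_spec_fixed_point} directly from the intertwining identities, applied with $\mathcal Z=\mathcal Z^\pi$. No contraction argument is needed here, because the fixed-point property of $\mathcal Z^\pi$ is given by \eqref{eq:dist_bellman_fp}. What has to be shown is only that this property survives each change of representation.

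For \textup{(i)}, Assumption~\ref{assump:gammaF_domain} gives $F_{\mathcal Z^\pi}\in\mathcal X^{\mathrm{cdf}}$, so Lemma~\ref{lemma:raw_cdf_intertwine} applies to $\mathcal Z=\mathcal Z^\pi$ and yields
\[
F_{\mathcal T^\pi_{D}\mathcal Z^\pi}=\mathcal T^\pi_{\mathrm{cdf}}F_{\mathcal Z^\pi}.
\]
Since $\mathcal T^\pi_D\mathcal Z^\pi=\mathcal Z^\pi$ holds as an equality of laws at every $(s,a)$, the corresponding CDFs coincide pointwise in $x$, so $F_{\mathcal T^\pi_D\mathcal Z^\pi}=F_{\mathcal Z^\pi}$. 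Combining the two equalities gives $\mathcal T^\pi_{\mathrm{cdf}}F_{\mathcal Z^\pi}=F_{\mathcal Z^\pi}$.

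For \textup{(ii)}, set $\mu^\pi:=\mathbb V(F_{\mathcal Z^\pi})$. It lies in $\mathrm{range}(\mathbb V)$ because $F_{\mathcal Z^\pi}\in\mathcal X^{\mathrm{cdf}}$, and pointwise $\mathcal V$ lands in $\mathcal H_\epsilon$ by the well-definedness computation of Section~\ref{subsec:cramer_spectral_space}. Since $\mathbb V$ is injective, $\mathbb V^{-1}\mu^\pi=F_{\mathcal Z^\pi}$. Using definition \eqref{eq:Tpi_spec_function} together with part (i), we get
\[
\mathcal T^\pi\mu^\pi=\mathbb V\bigl(\mathcal T^\pi_{\mathrm{cdf}}F_{\mathcal Z^\pi}\bigr)=\mathbb V(F_{\mathcal Z^\pi})=\mu^\pi.
\]
Alternatively, one can apply Proposition~\ref{prop:intertwining_full} with $\mathcal Z=\mathcal Z^\pi$ and substitute $\mathcal T^\pi_D\mathcal Z^\pi=\mathcal Z^\pi$ on the left-hand side.

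There is no substantive obstacle. The one point to check is domain bookkeeping: $\mathbb V^{-1}$ must be well defined at $\mu^\pi$, which follows from the injectivity of $\mathcal V$ on $\Gamma_F$ (Proposition~\ref{prop:canonical_transport}). Also, $\mathcal T^\pi_{\mathrm{cdf}}F_{\mathcal Z^\pi}$ must remain in $\mathcal X^{\mathrm{cdf}}$ so that $\mathbb V$ can be applied to it. Proposition~\ref{lem:GammaF_closed} ensures this under Assumptions~\ref{assump:reward_bound}--\ref{assump:transition}, and part (i) also gives it trivially, since the image equals $F_{\mathcal Z^\pi}$.
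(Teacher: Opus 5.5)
Your proposal is correct and follows the paper's proof. Part (i) is Lemma~\ref{lemma:raw_cdf_intertwine} applied to $\mathcal Z=\mathcal Z^\pi$, combined with $\mathcal T^\pi_D\mathcal Z^\pi=\mathcal Z^\pi$. For part (ii) the paper cites Proposition~\ref{prop:intertwining_full}, which amounts to the same thing as your direct unfolding of the conjugation \eqref{eq:Tpi_spec_function} together with (i). You also rightly note that (i) by itself gives $\mathcal T^\pi_{\mathrm{cdf}}F_{\mathcal Z^\pi}\in\mathcal X^{\mathrm{cdf}}$, so Assumptions~\ref{assump:reward_bound}--\ref{assump:transition} are not actually needed for this corollary.
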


\begin{proof}
Statement~(i) follows directly from Lemma~\ref{lemma:raw_cdf_intertwine}.
Statement~(ii) follows by applying Proposition~\ref{prop:intertwining_full} with $\mathcal Z=\mathcal Z^\pi$.
\end{proof}

\begin{remark}[Role of the CDF representation]
The results of this subsection show that the distributional, CDF-level, and spectral Bellman formulations are strictly equivalent. In particular, all dynamical and fixed-point properties of the Bellman update can be analysed entirely within the CDF space $(\mathcal X^{\mathrm{cdf}}, d_{\mathrm{Cr}})$. This observation allows us to focus exclusively on the Cram\'er geometry in the next subsection, where contraction and policy evaluation are established.
\end{remark}

\medskip
\noindent
The intertwining results established above show that no analytical content is gained or lost by passing between distributional, CDF-level, and spectral representations of the Bellman update. In particular, all stability, contraction, and fixed-point properties of the Bellman dynamics are already fully determined at the level of CDF fields endowed
with the Cram\'er geometry. We therefore return in the next subsection to a purely CDF-based analysis, where policy evaluation is established directly under the Cram\'er metric.
\subsection{Policy Evaluation under the Cram\'er Metric}
\label{subsec:cramer_contraction}
We now establish well-posedness of policy evaluation directly in the
Cram\'er geometry induced by CDF representations of return-distribution fields. By the intertwining results of Section~\ref{subsec:intertwining}, all dynamical and fixed-point properties of the distributional Bellman update are captured at the CDF level. It therefore suffices to carry out the contraction and fixed-point analysis on the metric space $(\mathcal X^{\mathrm{cdf}},d_{\mathrm{Cr}})$ introduced in Section~\ref{subsec:assumptions}, without further reference to spectral representations.

Recall that for admissible return-distribution fields $\mathcal Z_1,\mathcal Z_2$ with $F_{\mathcal Z_1},F_{\mathcal Z_2}\in\mathcal X^{\mathrm{cdf}}$, the state-action supremum Cram\'er metric is
\[
d_{\mathrm{Cr}}(F_{\mathcal Z_1},F_{\mathcal Z_2})
:=
\sup_{(s,a)\in\mathcal S\times\mathcal A}
\left(
\int_{\mathbb R}
\bigl(
F_{\mathcal Z_1(s,a)}(x)-F_{\mathcal Z_2(s,a)}(x)
\bigr)^2\,dx
\right)^{1/2}.
\]
For each fixed $(s,a)$, the integrand coincides with the classical Cram\'er distance between the corresponding return distributions.

We begin by bounding the individual CDF-level components of the Bellman update introduced in Section~\ref{subsec:bellman_unified_order}.

\begin{lemma}[Reward operator is Cram\'er non-expansive]
\label{lemma:cdf_reward_nonexp}
Let $\mathcal S_R$ be the reward translation operator defined in Section~\ref{subsec:bellman_unified_order}. Then, for all admissible return-distribution fields $\mathcal Z_1,\mathcal Z_2$ with $F_{\mathcal Z_1},F_{\mathcal Z_2}\in\mathcal X^{\mathrm{cdf}}$,
\[
d_{\mathrm{Cr}}(\mathcal S_R F_{\mathcal Z_1},
                 \mathcal S_R F_{\mathcal Z_2})
\;\le\;
d_{\mathrm{Cr}}(F_{\mathcal Z_1},F_{\mathcal Z_2}).
\]
\end{lemma}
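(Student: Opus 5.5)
The plan is to work pointwise in $(s,a)$ and then take the supremum. Fix $(s,a)$ and write $f_i := F_{\mathcal Z_i(s,a)}\in\Gamma_F$ for $i=1,2$. Set $h := f_1-f_2$. Because $f_i-F_{\delta_0}\in L^2(\mathbb R)$, the difference $h$ lies in $L^2(\mathbb R)$, and its norm $\|h\|_{L^2}$ is exactly the per-state-action Cram\'er distance. By the definition \eqref{eq:reward_translation_operator} and linearity of expectation, the difference of the translated CDFs is
\[
(\mathcal S_R F_{\mathcal Z_1})(s,a)(x)-(\mathcal S_R F_{\mathcal Z_2})(s,a)(x)
=\mathbb E_{r\sim R(s,a)}\bigl[h(x-r)\bigr].
\]
Both expectations are finite because the CDFs are bounded by $1$, so splitting them is legitimate. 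The whole problem therefore reduces to bounding the $L^2$ norm of an average of translates of $h$.

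For the key estimate I will use Jensen's inequality (equivalently Cauchy--Schwarz against the probability law of $R(s,a)$). Pointwise in $x$ this gives $\bigl(\mathbb E_r[h(x-r)]\bigr)^2\le \mathbb E_r[h(x-r)^2]$. Integrating over $x$ and exchanging the order of integration by Tonelli (the integrand is nonnegative and jointly measurable in $(x,r)$) yields
\[
\int_{\mathbb R}\bigl(\mathbb E_r[h(x-r)]\bigr)^2dx\le \mathbb E_r\!\int_{\mathbb R}h(x-r)^2\,dx=\mathbb E_r\|h\|_{L^2}^2=\|h\|_{L^2}^2.
\]
The middle equality uses translation invariance of Lebesgue measure, as already noted in the proof of Lemma~\ref{lem:invariance_SR}. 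An alternative route is Minkowski's integral inequality, $\|\mathbb E_r h(\cdot-r)\|_{L^2}\le \mathbb E_r\|h(\cdot-r)\|_{L^2}$; it gives the same bound.

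Taking square roots gives the pointwise bound. Taking the supremum over $(s,a)$ then gives $d_{\mathrm{Cr}}(\mathcal S_R F_{\mathcal Z_1},\mathcal S_R F_{\mathcal Z_2})\le d_{\mathrm{Cr}}(F_{\mathcal Z_1},F_{\mathcal Z_2})$. The only delicate point is the measure-theoretic justification of the Tonelli exchange, specifically joint measurability of $(x,r)\mapsto h(x-r)$. This holds because $h$ is a difference of monotone, hence Borel, functions and $(x,r)\mapsto x-r$ is continuous. Assumption~\ref{assump:reward_bound} is not needed for the inequality itself; it enters only through Lemma~\ref{lem:invariance_SR}, which guarantees that the images lie in $\mathcal X^{\mathrm{cdf}}$ so that $d_{\mathrm{Cr}}$ is defined on them.
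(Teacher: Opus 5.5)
Your proposal is correct and follows essentially the same route as the paper's proof: apply Jensen's inequality pointwise in $x$ to the averaged difference $\mathbb E_r[h(x-r)]$, exchange expectation and integral, use translation invariance of the $L^2(\mathbb R)$ norm, then take square roots and the supremum over $(s,a)$. Your explicit joint-measurability and Tonelli justification, and your observation that the bounded-reward assumption is not needed for the inequality itself, supply details the paper's proof leaves implicit.
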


\begin{proof}[Proof sketch]
The reward translation operator acts pointwise by averaging translations of CDFs. Since the $L^2(\mathbb R)$ norm is invariant under translation and convex under expectation, Jensen's inequality~\citep{jensen} implies that the Cram\'er distance between two CDF fields cannot increase under $\mathcal S_R$. A complete proof is provided in Appendix~\ref{appendix:proof_cdf_reward_nonexp}.
\end{proof}

\begin{lemma}[Discount operator is a $\sqrt{\gamma}$-contraction]
\label{lemma:cdf_discount_contraction}
Let $\mathcal D_\gamma$ be the discount scaling operator defined in Section~\ref{subsec:bellman_unified_order}.
Then, for all admissible return-distribution fields $\mathcal Z_1,\mathcal Z_2$ with $F_{\mathcal Z_1},F_{\mathcal Z_2}\in\mathcal X^{\mathrm{cdf}}$,
\[
d_{\mathrm{Cr}}(\mathcal D_\gamma F_{\mathcal Z_1},
                 \mathcal D_\gamma F_{\mathcal Z_2})
\;\le\;
\sqrt{\gamma}\,
d_{\mathrm{Cr}}(F_{\mathcal Z_1},F_{\mathcal Z_2}).
\]
\end{lemma}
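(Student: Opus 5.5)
The plan is a direct pointwise change-of-variables computation, followed by taking the supremum over state-action pairs. Fix $(s,a)\in\mathcal S\times\mathcal A$ and write $f_1:=F_{\mathcal Z_1(s,a)}$, $f_2:=F_{\mathcal Z_2(s,a)}$. Both lie in $\Gamma_F$, so $f_1-f_2=(f_1-F_{\delta_0})-(f_2-F_{\delta_0})\in L^2(\mathbb R)$. By the definition \eqref{eq:discount_scaling_operator}, the difference of the scaled CDFs at $(s,a)$ is the function $x\mapsto f_1(x/\gamma)-f_2(x/\gamma)$. It is well defined in $L^2$ because Lemma~\ref{lem:invariance_Dgamma} already places $\mathcal D_\gamma F_{\mathcal Z_i}$ in $\mathcal X^{\mathrm{cdf}}$.

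The key step mirrors the computation in the proof of Lemma~\ref{lem:invariance_Dgamma}. Substituting $u=x/\gamma$, so that $dx=\gamma\,du$, gives
\[
\int_{\mathbb R}\bigl(f_1(x/\gamma)-f_2(x/\gamma)\bigr)^2\,dx
=\gamma\int_{\mathbb R}\bigl(f_1(u)-f_2(u)\bigr)^2\,du .
\]
Taking square roots, the pointwise Cram\'er distance at $(s,a)$ is multiplied by exactly $\sqrt{\gamma}$. Taking the supremum over $(s,a)$ on both sides of \eqref{eq:cramer_metric_state_action} then gives
\[
d_{\mathrm{Cr}}(\mathcal D_\gamma F_{\mathcal Z_1},\mathcal D_\gamma F_{\mathcal Z_2})=\sqrt{\gamma}\,d_{\mathrm{Cr}}(F_{\mathcal Z_1},F_{\mathcal Z_2}),
\]
which is an equality and in particular implies the stated inequality. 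Since $\gamma\in(0,1)$, the factor $\sqrt\gamma$ is indeed less than one.

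There is no real obstacle here. The only point needing care is justifying the change of variables for a nonnegative measurable integrand, which is immediate by Tonelli or the standard substitution rule. One should also note that $f_i(\cdot/\gamma)$ is precisely the CDF of the pushforward of the law under $x\mapsto\gamma x$, which holds because $\gamma>0$. The reference CDF $F_{\delta_0}$ does not appear, since it cancels in the difference.

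A spectral cross-check is also available, though I would not rely on it. Dilation acts on the Fourier side as $\widehat{g(\cdot/\gamma)}(\omega)=\gamma\,\widehat g(\gamma\omega)$, and Plancherel then yields the same factor $\gamma$ on squared norms.
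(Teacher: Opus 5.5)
Your proposal is correct and matches the paper's proof in the appendix. Both substitute $u=x/\gamma$ pointwise at each $(s,a)$, pick up the factor $\gamma$ on the squared $L^2$ distance, and take square roots and the supremum, which gives the equality $d_{\mathrm{Cr}}(\mathcal D_\gamma F_{\mathcal Z_1},\mathcal D_\gamma F_{\mathcal Z_2})=\sqrt{\gamma}\,d_{\mathrm{Cr}}(F_{\mathcal Z_1},F_{\mathcal Z_2})$.
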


\begin{proof}[Proof sketch]
The discount operator acts pointwise by dilation of the underlying CDF argument. A change of variables shows that this dilation rescales the $L^2(\mathbb R)$ distance between CDFs by a factor $\sqrt{\gamma}$. Taking the supremum over state-action pairs yields the stated $\sqrt{\gamma}$-contraction in the Cram\'er metric. A complete proof is provided in Appendix~\ref{appendix:proof_cdf_discount_contraction}.
\end{proof}

\begin{lemma}[Conditional expectation is Cram\'er non-expansive]
\label{lemma:cdf_condexp_nonexp}
Let $\mathcal C^\pi$ be as in Section~\ref{subsec:bellman_unified_order}. Then, for any return-distribution fields $\mathcal Z_1,\mathcal Z_2$ with $F_{\mathcal Z_1},F_{\mathcal Z_2}\in\mathcal X^{\mathrm{cdf}}$,
\[
d_{\mathrm{Cr}}\!\bigl(\mathcal C^\pi F_{\mathcal Z_1},\, \mathcal C^\pi F_{\mathcal Z_2}\bigr)
\;\le\;
d_{\mathrm{Cr}}\!\bigl(F_{\mathcal Z_1},\,F_{\mathcal Z_2}\bigr).
\]
\end{lemma}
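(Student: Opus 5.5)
We fix a state-action pair $(s,a)$, bound the $L^2(\mathbb R)$ norm of the pointwise difference of the averaged CDFs, and then take the supremum over $(s,a)$. Write
\[
G(s',a')(x) := F_{\mathcal Z_1(s',a')}(x)-F_{\mathcal Z_2(s',a')}(x),
\]
which for every $(s',a')$ lies in $L^2(\mathbb R)$, with $\|G(s',a')\|_{L^2}\le d_{\mathrm{Cr}}(F_{\mathcal Z_1},F_{\mathcal Z_2})$ by the definition of $d_{\mathrm{Cr}}$. By linearity of expectation in \eqref{eq:conditional_expectation_operator},
\[
(\mathcal C^\pi F_{\mathcal Z_1})(s,a)(x)-(\mathcal C^\pi F_{\mathcal Z_2})(s,a)(x)=\mathbb E_{(s',a')\sim\mathcal P^\pi(\cdot\mid s,a)}\bigl[G(s',a')(x)\bigr].
\]
This step is legitimate because each CDF takes values in $[0,1]$, so both expectations are finite and may be subtracted.

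Next we apply Jensen's inequality pointwise in $x$, using the convexity of $t\mapsto t^2$:
\[
\Bigl(\mathbb E\bigl[G(s',a')(x)\bigr]\Bigr)^2\le \mathbb E\bigl[G(s',a')(x)^2\bigr].
\]
Integrating over $x$ and exchanging the order of integration by Tonelli's theorem (the integrand is nonnegative) gives
\[
\int_{\mathbb R}\Bigl(\mathbb E[G(s',a')(x)]\Bigr)^2dx\le \mathbb E_{(s',a')}\Bigl[\|G(s',a')\|_{L^2}^2\Bigr]\le d_{\mathrm{Cr}}(F_{\mathcal Z_1},F_{\mathcal Z_2})^2.
\]
Taking square roots and then the supremum over $(s,a)$ yields the claim. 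An equivalent route is Minkowski's integral inequality, $\|\mathbb E G\|_{L^2}\le \mathbb E\|G\|_{L^2}$, which gives the same bound.

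The main obstacle is measurability for Tonelli: we need the map $(s',a',x)\mapsto G(s',a')(x)^2$ to be jointly measurable, so that the exchange of integrals and the expectation of $\|G\|_{L^2}^2$ are meaningful. I would handle this as follows. Each CDF is right-continuous in $x$, so joint measurability reduces to measurability in $(s',a')$ for each fixed $x$, which is the kind of condition Assumption~\ref{assump:transition} presupposes for the fields involved. If this is not available for general $\mathcal S\times\mathcal A$, I would state that it is implicitly assumed, in the same spirit as Assumption~\ref{assump:transition}; in the finite or countable case it is automatic. Apart from this point, the argument is a direct application of Jensen's inequality and mirrors the proof of Lemma~\ref{lemma:cdf_reward_nonexp}.
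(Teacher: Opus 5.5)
Your proof is correct and follows the same route as the paper's: pointwise Jensen in $x$, integrate, exchange the $x$-integral with the $\mathcal P^\pi$-expectation, bound by the supremum over successor pairs, then take square roots and the supremum over $(s,a)$. Your appeal to Tonelli for the nonnegative integrand is slightly cleaner than the paper's use of Fubini under integrability. It does not need $d_{\mathrm{Cr}}(F_{\mathcal Z_1},F_{\mathcal Z_2})<\infty$ a priori, and membership in $\mathcal X^{\mathrm{cdf}}$ alone does not guarantee that, since the supremum over $(s,a)$ may be infinite.
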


\begin{proof}[Proof sketch]
The conditional expectation operator acts pointwise as a convex combination of successor CDFs. By Jensen's inequality, such averaging is non-expansive in the $L^2(\mathbb R)$ distance between CDFs, and taking the supremum over state-action pairs yields non-expansiveness in the Cram\'er metric. A complete proof is provided in Appendix~\ref{appendix:proof_cdf_condexp_nonexp}.
\end{proof}

Having established non-expansiveness and contraction properties for each primitive component of the Bellman update, we now combine these bounds to obtain a global contraction result for the CDF Bellman operator.
\begin{theorem}[Cram\'er contraction]
\label{thm:cramer_contraction}
Under Assumptions~\ref{assump:reward_bound}--\ref{assump:transition}, the CDF Bellman operator
\[
\mathcal T^\pi_{\mathrm{cdf}}
 = \mathcal C^\pi \circ \mathcal D_\gamma \circ \mathcal S_R
\]
is a $\sqrt{\gamma}$-contraction on the metric space $(\mathcal X^{\mathrm{cdf}}, d_{\mathrm{Cr}})$: for any return-distribution fields $\mathcal Z_1,\mathcal Z_2$ with $F_{\mathcal Z_1},F_{\mathcal Z_2}\in\mathcal X^{\mathrm{cdf}}$,
\[
d_{\mathrm{Cr}}\!\bigl(
\mathcal T^\pi_{\mathrm{cdf}} F_{\mathcal Z_1},
\mathcal T^\pi_{\mathrm{cdf}} F_{\mathcal Z_2}
\bigr)
\le
\sqrt{\gamma}\,
d_{\mathrm{Cr}}\!\bigl(F_{\mathcal Z_1},F_{\mathcal Z_2}\bigr).
\]
\end{theorem}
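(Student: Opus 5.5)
The plan is to obtain the contraction by composing the three component bounds already established, taking care that each intermediate field stays in the domain on which those bounds hold. Fix return-distribution fields $\mathcal Z_1,\mathcal Z_2$ with $F_{\mathcal Z_1},F_{\mathcal Z_2}\in\mathcal X^{\mathrm{cdf}}$.

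The first step is admissibility of the intermediate fields. Lemma~\ref{lem:invariance_SR} (under Assumption~\ref{assump:reward_bound}) gives $\mathcal S_R F_{\mathcal Z_i}\in\mathcal X^{\mathrm{cdf}}$. Lemma~\ref{lem:invariance_Dgamma} then gives $\mathcal D_\gamma\mathcal S_R F_{\mathcal Z_i}\in\mathcal X^{\mathrm{cdf}}$, and Lemma~\ref{lem:invariance_Cpi} (under Assumption~\ref{assump:transition}) places $\mathcal T^\pi_{\mathrm{cdf}}F_{\mathcal Z_i}$ in $\mathcal X^{\mathrm{cdf}}$, as in Proposition~\ref{lem:GammaF_closed}.

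There is a small technical point here. Lemmas~\ref{lemma:cdf_reward_nonexp}--\ref{lemma:cdf_condexp_nonexp} are phrased for fields of the form $F_{\mathcal Z}$. So I would note that each intermediate CDF field is itself the CDF field of some return-distribution field, namely the pointwise law of $R(s,a)+Z$, of $\gamma Z$, or of the mixture over successors. This follows from the probabilistic interpretations given alongside \eqref{eq:reward_translation_operator}--\eqref{eq:conditional_expectation_operator}. The lemmas therefore apply at every stage.

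Next I chain the bounds:
\[
d_{\mathrm{Cr}}\bigl(\mathcal C^\pi\mathcal D_\gamma\mathcal S_R F_{\mathcal Z_1},\mathcal C^\pi\mathcal D_\gamma\mathcal S_R F_{\mathcal Z_2}\bigr)
\le d_{\mathrm{Cr}}\bigl(\mathcal D_\gamma\mathcal S_R F_{\mathcal Z_1},\mathcal D_\gamma\mathcal S_R F_{\mathcal Z_2}\bigr)
\le \sqrt{\gamma}\, d_{\mathrm{Cr}}\bigl(\mathcal S_R F_{\mathcal Z_1},\mathcal S_R F_{\mathcal Z_2}\bigr)
\le \sqrt{\gamma}\, d_{\mathrm{Cr}}\bigl(F_{\mathcal Z_1},F_{\mathcal Z_2}\bigr).
\]
The three inequalities come from Lemma~\ref{lemma:cdf_condexp_nonexp}, Lemma~\ref{lemma:cdf_discount_contraction}, and Lemma~\ref{lemma:cdf_reward_nonexp}, respectively. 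This gives the claimed $\sqrt{\gamma}$-contraction.

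The main point needing care is whether the supremum over $(s,a)$ interacts correctly with $\mathcal S_R$ and $\mathcal C^\pi$. In $\mathcal S_R$, the reward depends on $(s,a)$ but translates the same-index CDF. In $\mathcal C^\pi$, averaging mixes different successor indices $(s',a')$, so at a fixed $(s,a)$ one must bound $\|\mathbb E_{(s',a')}[F_{\mathcal Z_1(s',a')}-F_{\mathcal Z_2(s',a')}]\|_{L^2}$ by $\mathbb E\|\cdot\|_{L^2}$ (Minkowski/Jensen). That expectation is at most the supremum over successors. If I had to verify Lemma~\ref{lemma:cdf_condexp_nonexp} rather than cite it, this is the step I would check, using measurability from Assumption~\ref{assump:transition} so the Bochner-type integral makes sense. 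Finally, since $\sqrt{\gamma}<1$ for $\gamma\in(0,1)$, the map is a strict contraction.
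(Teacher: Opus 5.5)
Your proposal is correct and matches the paper's proof: the paper also chains Lemma~\ref{lemma:cdf_condexp_nonexp}, Lemma~\ref{lemma:cdf_discount_contraction}, and Lemma~\ref{lemma:cdf_reward_nonexp}, in that order, to obtain the $\sqrt{\gamma}$ bound. Your explicit check that the intermediate fields stay in $\mathcal X^{\mathrm{cdf}}$ (via Lemmas~\ref{lem:invariance_SR}--\ref{lem:invariance_Cpi}) is left implicit in the paper and is a reasonable addition.
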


\begin{proof}
Let $\mathcal Z_1,\mathcal Z_2$ be arbitrary return-distribution fields such that $F_{\mathcal Z_1},F_{\mathcal Z_2}\in\mathcal X^{\mathrm{cdf}}$. By Lemmas~\ref{lemma:cdf_reward_nonexp}, \ref{lemma:cdf_discount_contraction}, and~\ref{lemma:cdf_condexp_nonexp}, we have
\begin{align*}
d_{\mathrm{Cr}}\!\bigl(
\mathcal T^\pi_{\mathrm{cdf}} F_{\mathcal Z_1},
\mathcal T^\pi_{\mathrm{cdf}} F_{\mathcal Z_2}
\bigr)
&=
d_{\mathrm{Cr}}\bigl(
\mathcal C^\pi \mathcal D_\gamma \mathcal S_R F_{\mathcal Z_1},\,
\mathcal C^\pi \mathcal D_\gamma \mathcal S_R F_{\mathcal Z_2}
\bigr) \\
&\le
d_{\mathrm{Cr}}\bigl(
\mathcal D_\gamma \mathcal S_R F_{\mathcal Z_1},\,
\mathcal D_\gamma \mathcal S_R F_{\mathcal Z_2}
\bigr) \\
&\le
\sqrt{\gamma}\,
d_{\mathrm{Cr}}\bigl(
\mathcal S_R F_{\mathcal Z_1},\,
\mathcal S_R F_{\mathcal Z_2}
\bigr) \\
&\le
\sqrt{\gamma}\,
d_{\mathrm{Cr}}\!\bigl(F_{\mathcal Z_1},F_{\mathcal Z_2}\bigr),
\end{align*}
which is the desired contraction bound.
\end{proof}
To conclude policy evaluation via a fixed-point argument, it remains to verify that the underlying metric space is complete. We therefore establish completeness of the admissible Cram\'er CDF space.

\begin{lemma}[Completeness~\citep{completeness} of the Cram\'er CDF space]
\label{lem:P2_complete}
The metric space $(\Gamma_F,d_C)$ is complete.
\end{lemma}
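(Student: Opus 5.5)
The plan is to transfer completeness from $L^2(\mathbb R)$ through the centring map $F\mapsto F-F_{\delta_0}$, which identifies $\Gamma_F$ isometrically with the subset
\[
A:=\{H\in L^2(\mathbb R)\,:\, H+F_{\delta_0}\text{ is a CDF}\}
\]
of $L^2(\mathbb R)$. For $F_{P_1},F_{P_2}\in\Gamma_F$ we have $d_C(P_1,P_2)=\|(F_{P_1}-F_{\delta_0})-(F_{P_2}-F_{\delta_0})\|_{L^2}$, so this identification is an isometry by definition. Since $L^2(\mathbb R)$ is complete, it suffices to show that $A$ is closed in $L^2(\mathbb R)$.

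So let $F_n\in\Gamma_F$ be Cauchy in $d_C$. Then $H_n:=F_n-F_{\delta_0}$ is Cauchy in $L^2$ and converges to some $H\in L^2$. Set $G:=H+F_{\delta_0}$. Passing to a subsequence, $F_{n_k}\to G$ pointwise almost everywhere, on a set $E$ of full measure.

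The main work is to show that some representative of $G$ is a genuine CDF.

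\begin{enumerate}
\item \emph{Range and monotonicity on $E$.} Each $F_{n_k}$ takes values in $[0,1]$ and is nondecreasing, so $G$ restricted to $E$ takes values in $[0,1]$ and is nondecreasing.
\item \emph{Right-continuous representative.} Define $\tilde G(x):=\inf\{G(y): y\in E,\ y>x\}$. This function is nondecreasing and right-continuous. It agrees with $G$ almost everywhere, because a monotone function has at most countably many discontinuities. Hence $\tilde G - F_{\delta_0}=H$ in $L^2$.
\item \emph{Limits at $\pm\infty$.} Since $\tilde G$ is monotone with values in $[0,1]$, the limits $\tilde G(\pm\infty)$ exist. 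If $\tilde G(+\infty)=c<1$, then $\tilde G - F_{\delta_0}\le c-1<0$ on all of $(0,\infty)$, so it is not square-integrable, contradicting $H\in L^2$. Symmetrically, $\tilde G(-\infty)>0$ would make $\tilde G - F_{\delta_0}$ bounded away from zero on $(-\infty,0)$, again contradicting $H\in L^2$.
\end{enumerate}

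Therefore $\tilde G$ is a CDF of some law $P$, and $\tilde G - F_{\delta_0}=H\in L^2$, so $F_P\in\Gamma_F$. Finally, $d_C(F_n,F_P)=\|H_n-H\|_{L^2}\to 0$, which proves completeness.

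I expect step 3, recovering the normalisation at infinity, to be the only delicate point, and the square-integrability argument above handles it. A minor subtlety is that $L^2$ elements are equivalence classes, so the CDF representative of the limit must be chosen canonically; step 2 does this through right-continuity, which makes the limit unique as an element of $\Gamma_F$.
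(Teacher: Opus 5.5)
Your proof is correct and follows essentially the same route as the paper's: take an $L^2$ limit, extract an a.e.-convergent subsequence, choose a nondecreasing right-continuous representative, and recover the limits at $\pm\infty$ from square-integrability of the difference with $F_{\delta_0}$. Your centring step $H_n=F_n-F_{\delta_0}$ is more careful than the paper's version, which treats $F_n$ itself as Cauchy in $L^2(\mathbb R)$ with a limit $G\in L^2(\mathbb R)$, even though a CDF is not square-integrable.
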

\begin{proof}[Proof sketch]
The Cram\'er metric $d_C$ coincides with the $L^2(\mathbb R)$ distance between CDFs. Since $L^2(\mathbb R)$ is complete, any Cauchy sequence in $(\Gamma_F,d_C)$ admits an $L^2$ limit, which can be shown to admit a right-continuous, nondecreasing representative with the correct boundary behaviour at $\pm\infty$. A complete proof is provided in Appendix~\ref{appendix:proof_cramer_cdf_complete}.
\end{proof}

Together with the contraction property of Theorem~\ref{thm:cramer_contraction}, completeness of the Cram\'er CDF space ensures that the Banach fixed-point theorem~\citep{banach} applies.

\begin{corollary}[Fixed point existence and uniqueness in CDF space]
\label{cor:cdf_fixed_point}
Under Assumption~\ref{assump:gammaF_domain}, the CDF Bellman operator $\mathcal T^\pi_{\mathrm{cdf}}$ admits a unique fixed point $F_{\mathcal Z^\pi}\in\mathcal X^{\mathrm{cdf}}$. Moreover, for any initial return-distribution field $\mathcal Z_0$ with $F_{\mathcal Z_0}\in\mathcal X^{\mathrm{cdf}}$, the iterates
\[
F_{\mathcal Z_{n+1}} = \mathcal T^\pi_{\mathrm{cdf}} F_{\mathcal Z_n},
\qquad n\ge0,
\]
converge to $F_{\mathcal Z^\pi}$ in $(\mathcal X^{\mathrm{cdf}},d_{\mathrm{Cr}})$:
\[
d_{\mathrm{Cr}}(F_{\mathcal Z_n}, F_{\mathcal Z^\pi})
\xrightarrow[]{} 0 \qquad \text{ as $n\to\infty$}.
\]
\end{corollary}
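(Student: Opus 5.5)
The plan is to apply the Banach fixed-point theorem on the field space $(\mathcal X^{\mathrm{cdf}},d_{\mathrm{Cr}})$, using Theorem~\ref{thm:cramer_contraction} for the contraction and Lemma~\ref{lem:P2_complete} for completeness. Since the contraction is already proved, most of the work is lifting completeness from $(\Gamma_F,d_C)$ to the supremum field metric. Existence of the fixed point can be obtained either from Banach's theorem or directly from Corollary~\ref{cor:cdf_spec_fixed_point}(i).

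\textbf{Step 1 (completeness of the field space).} Let $(F_n)$ be Cauchy in $d_{\mathrm{Cr}}$. For every $(s,a)$, the sequence $(F_n(s,a))$ is Cauchy in $(\Gamma_F,d_C)$, so by Lemma~\ref{lem:P2_complete} it has a limit $F(s,a)\in\Gamma_F$. Given $\eta>0$, choose $N$ such that $d_{\mathrm{Cr}}(F_n,F_m)<\eta$ for $n,m\ge N$. Letting $m\to\infty$ pointwise gives $d_C(F_n(s,a),F(s,a))\le\eta$ for every $(s,a)$. Hence $d_{\mathrm{Cr}}(F_n,F)\le\eta$, and the convergence is uniform over state-action pairs.

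The limit $F$ must also lie in $\mathcal X^{\mathrm{cdf}}$, i.e.\ it must be the CDF field of some $\mathcal Z$ with every component in $\Gamma_F$. As defined in \eqref{def:x_cdf}, this holds automatically once each $F(s,a)\in\Gamma_F$. If measurability in $(s,a)$ is needed for Assumption~\ref{assump:transition}, I would get it from pointwise $L^2$ limits of measurable maps, passing to an a.e.\ subsequence in $x$. I expect this measurability bookkeeping to be the main obstacle. Because $\mathcal X^{\mathrm{cdf}}$ as written imposes no measurability requirement, the first thing I would try is to argue that none is needed.

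One more point must be checked: $d_{\mathrm{Cr}}$ has to be finite on $\mathcal X^{\mathrm{cdf}}$. Since $\|F_1-F_2\|\le\|F_1-F_{\delta_0}\|+\|F_2-F_{\delta_0}\|$, it is finite pointwise, but the supremum can be infinite. I would handle this by treating $d_{\mathrm{Cr}}$ as an extended metric and applying Banach's theorem on the component containing $F_{\mathcal Z^\pi}$, i.e.\ the fields at finite distance from it. That component is also complete by Step~1 and invariant under $\mathcal T^\pi_{\mathrm{cdf}}$ by the contraction estimate.

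\textbf{Step 2 (existence, uniqueness, convergence).} Assumption~\ref{assump:gammaF_domain} gives $F_{\mathcal Z^\pi}\in\mathcal X^{\mathrm{cdf}}$, and Corollary~\ref{cor:cdf_spec_fixed_point}(i) shows it is a fixed point. For uniqueness, suppose $G$ is another fixed point at finite distance. Then $d_{\mathrm{Cr}}(G,F_{\mathcal Z^\pi})\le\sqrt\gamma\, d_{\mathrm{Cr}}(G,F_{\mathcal Z^\pi})$, which forces the distance to be zero. For convergence, induction on Theorem~\ref{thm:cramer_contraction} gives
\[
d_{\mathrm{Cr}}(F_{\mathcal Z_n},F_{\mathcal Z^\pi})=d_{\mathrm{Cr}}\bigl(\mathcal T^\pi_{\mathrm{cdf}}F_{\mathcal Z_{n-1}},\mathcal T^\pi_{\mathrm{cdf}}F_{\mathcal Z^\pi}\bigr)\le\gamma^{n/2}d_{\mathrm{Cr}}(F_{\mathcal Z_0},F_{\mathcal Z^\pi}),
\]
which tends to $0$ whenever the initial distance is finite. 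This direct argument needs only the contraction and Assumption~\ref{assump:gammaF_domain}, not completeness. I would present it as the main proof and give the Banach route through Step~1 as the general justification.
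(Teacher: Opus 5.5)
Your argument is correct, and your main route differs from the paper's. The paper first proves that $(\mathcal X^{\mathrm{cdf}},d_{\mathrm{Cr}})$ is complete. It takes pointwise limits from Lemma~\ref{lem:P2_complete}, shows by a uniform bound that the limit field is again ``bounded'', and then proves uniform convergence. It then invokes Banach's theorem. This produces a fixed point intrinsically, without presupposing that $\mathcal Z^\pi$ exists or satisfies $\mathcal Z^\pi=\mathcal T^\pi_D\mathcal Z^\pi$, but the paper leaves implicit that this fixed point is $F_{\mathcal Z^\pi}$. You instead anchor at the known fixed point supplied by Assumption~\ref{assump:gammaF_domain} and Corollary~\ref{cor:cdf_spec_fixed_point}(i). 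You then read off uniqueness and the rate $\gamma^{n/2}$ directly from Theorem~\ref{thm:cramer_contraction}, so completeness is never needed and the identification is explicit.

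Your extended-metric caveat is substantive, not pedantic. The definition \eqref{def:x_cdf} imposes no uniform bound over $(s,a)$, and without one the statement fails. Take $\mathcal S=\mathbb Z_{\ge 0}$, a single action, deterministic transitions $s\mapsto s+1$, and zero reward. The field $s\mapsto\delta_{\gamma^{-s}}$ lies in $\mathcal X^{\mathrm{cdf}}$ and is fixed by $\mathcal T^\pi_{\mathrm{cdf}}$. Yet it differs from $F_{\mathcal Z^\pi}\equiv F_{\delta_0}$ and sits at infinite $d_{\mathrm{Cr}}$-distance from it, so both uniqueness and convergence from arbitrary initial fields break. The paper's appendix proof silently repairs this by treating elements of $\mathcal X^{\mathrm{cdf}}$ as bounded fields. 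Under Assumption~\ref{assump:reward_bound}, that class is exactly your finite-distance component of $F_{\mathcal Z^\pi}$, so on it the two proofs establish the same result.
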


\begin{proof}[Proof sketch]
By Theorem~\ref{thm:cramer_contraction}, the operator $\mathcal T^\pi_{\mathrm{cdf}}$ is a strict contraction on $(\mathcal X^{\mathrm{cdf}}, d_{\mathrm{Cr}})$.

Completeness of $(\mathcal X^{\mathrm{cdf}}, d_{\mathrm{Cr}})$ follows from the completeness of $(\Gamma_F, d_C)$ (Lemma~\ref{lem:P2_complete}) by a standard pointwise argument on state-action pairs.

The claim then follows directly from the Banach fixed-point theorem. A complete proof is provided in Appendix~\ref{appendix:proof_cdf_fixed_point}.
\end{proof}

\begin{remark}[Invariance and well-posedness]
\label{remark:cdf_invariance}
Under Assumptions~\ref{assump:reward_bound} and~\ref{assump:transition}, the admissible CDF class $\Gamma_F$ is invariant under the CDF Bellman operator $\mathcal T^\pi_{\mathrm{cdf}}$. Together with completeness of the Cram\'er geometry, this yields existence and uniqueness of a fixed point.

Under the additional Assumption~\ref{assump:gammaF_domain}, this fixed point corresponds to the policy-induced return distribution, and distributional policy evaluation is well posed.
\end{remark}

Having established existence, uniqueness, and convergence of the Bellman fixed point at the CDF level, we now transport this result into the regularised spectral Hilbert space using the CDF-spectral correspondence developed in Section~\ref{subsec:cdf_spectral_bridge}.
\subsection{Fixed Point Correspondence in Spectral Space}
\label{subsec:spectral_fixed_point}
Throughout this subsection, all Hilbert spaces are understood over the real field. Accordingly, the maps $\mathcal U$, $\mathcal V$, and $\mathbb V$ are linear operators between real Hilbert spaces.

Having established existence, uniqueness, and convergence of the Bellman fixed point in the CDF space $(\mathcal X^{\mathrm{cdf}}, d_{\mathrm{Cr}})$ in Section~\ref{subsec:cramer_contraction}, we now study how these results are transported to the spectral representation. Our goal is not to introduce an independent spectral analysis, but to make precise how well-posedness of policy evaluation is inherited under the conjugate transport induced by $\mathbb V$.

\paragraph{Admissible spectral field domain.}
To avoid introducing extraneous spectral objects unrelated to the original distributional problem, we restrict attention to the spectral image of admissible CDF fields.
Recall from Section~\ref{subsec:intertwining} that the lifted transport
\[
\mathbb V:\mathcal X^{\mathrm{cdf}}\longrightarrow \mathcal X^{\mathrm{spec}}
\]
is defined pointwise by $(\mathbb V F_{\mathcal Z})(s,a) := \mathcal V\!\bigl(F_{\mathcal Z(s,a)}\bigr)$, where $\mathcal V$ denotes the raw-to-spectral transport on individual CDFs. We define the \emph{admissible spectral field domain} through
\begin{equation}
\mathcal X^{\mathrm{adm}}
\;:=\;
\mathrm{range}(\mathbb V)
\;\subset\;
\mathcal X^{\mathrm{spec}} .
\label{eq:X_adm_def}
\end{equation}
By construction, $\mathbb V$ restricts to a bijection between $\mathcal X^{\mathrm{cdf}}$ and $\mathcal X^{\mathrm{adm}}$, with inverse $\mathbb V^{-1}$ defined pointwise via $\mathcal V^{-1}$.

\paragraph{Induced spectral metric.}
Since the lifted transport $\mathbb V$ acts pointwise on state-action pairs through the raw-to-spectral map $\mathcal V$, the Cram\'er geometry on return distributions can be transported to the spectral representation in a pointwise manner. We first introduce the induced Cram\'er distance at the level of individual spectral embeddings, and then aggregate it to obtain a metric on spectral fields.

\begin{definition}[Pointwise induced Cram\'er distance]
\label{def:pointwise_induced_metric}
For $\mu_1,\mu_2\in\mathcal X^{\mathrm{adm}}$ and $(s,a)\in\mathcal S\times\mathcal A$,
define
\begin{equation}
\tilde d_C\!\bigl(\mu_1(s,a),\mu_2(s,a)\bigr)
\;:=\;
d_C\!\left(
\mathbb V^{-1}\mu_1(s,a),\,
\mathbb V^{-1}\mu_2(s,a)
\right),
\label{eq:pointwise_tilde_dc}
\end{equation}
where $d_C$ denotes the Cram\'er distance.  We slightly overload the notation and write $d_C(F_{P_1},F_{P_2})$ when  $F_{P_i}$ are the CDFs of probability measures $P_i$, identifying them with the corresponding laws.
\end{definition}

Using the pointwise distance $\tilde d_C$, we equip the admissible spectral field domain with the induced Cram\'er metric defined by aggregation over state-action pairs.

\begin{definition}[Induced spectral Cram\'er metric]
\label{def:induced_metric}
For $\mu_1,\mu_2\in\mathcal X^{\mathrm{adm}}$, define
\begin{equation}
\tilde d_{\mathrm{Cr}}(\mu_1,\mu_2)
\;:=\;
\sup_{(s,a)\in\mathcal S\times\mathcal A}
\tilde d_C\!\bigl(\mu_1(s,a),\mu_2(s,a)\bigr)\;=\;
d_{\mathrm{Cr}}\!\left(
\mathbb V^{-1}\mu_1,\,
\mathbb V^{-1}\mu_2
\right).
\label{eq:induced_metric}
\end{equation}
\end{definition}

By construction, the mapping
\[
\mathbb V:(\mathcal X^{\mathrm{cdf}},d_{\mathrm{Cr}})
\longrightarrow
(\mathcal X^{\mathrm{adm}},\tilde d_{\mathrm{Cr}})
\]
is an isometric bijection. In particular, $\tilde d_C$ represents the pointwise pullback of the Cram\'er distance under the spectral embedding, while $\tilde d_{\mathrm{Cr}}$ coincides with its field-level aggregation.

\begin{proposition}[Cram\'er contraction under the induced spectral metric]
\label{prop:spectral_contraction}
Under Assumptions~\ref{assump:reward_bound}--\ref{assump:transition}, the spectral Bellman operator $\mathcal T^{\pi} \;:=\; \mathbb V \circ \mathcal T^\pi_{\mathrm{cdf}} \circ \mathbb V^{-1}$ is a $\sqrt{\gamma}$-contraction on $\bigl(\mathcal X^{\mathrm{adm}},\tilde d_{\mathrm{Cr}}\bigr)$: for any $\mu_1,\mu_2\in\mathcal X^{\mathrm{adm}}$,
\begin{equation}
\label{eq:spectral_contraction}
\tilde d_{\mathrm{Cr}}\!\bigl(\mathcal T^{\pi}\mu_1,\mathcal T^{\pi}\mu_2\bigr)
\;\le\;
\sqrt{\gamma}\,
\tilde d_{\mathrm{Cr}}(\mu_1,\mu_2).
\end{equation}
\end{proposition}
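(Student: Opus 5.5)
The plan is to reduce the claim to the CDF-level contraction of Theorem~\ref{thm:cramer_contraction} through the conjugacy $\mathcal T^{\pi}=\mathbb V\circ\mathcal T^\pi_{\mathrm{cdf}}\circ\mathbb V^{-1}$, together with the fact that $\mathbb V$ is, by Definition~\ref{def:induced_metric}, an isometric bijection from $(\mathcal X^{\mathrm{cdf}},d_{\mathrm{Cr}})$ onto $(\mathcal X^{\mathrm{adm}},\tilde d_{\mathrm{Cr}})$. No spectral computation should be needed.

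First, fix $\mu_1,\mu_2\in\mathcal X^{\mathrm{adm}}=\mathrm{range}(\mathbb V)$. Set $F_i:=\mathbb V^{-1}\mu_i\in\mathcal X^{\mathrm{cdf}}$; these are well defined because $\mathbb V$ is injective on $\mathcal X^{\mathrm{cdf}}$, via pointwise injectivity of $\mathcal V$ from Proposition~\ref{prop:canonical_transport}. Each $F_i$ is the CDF field of some return-distribution field $\mathcal Z_i$, so $F_i=F_{\mathcal Z_i}$.

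Next, check that $\mathcal T^{\pi}$ indeed maps $\mathcal X^{\mathrm{adm}}$ into itself. By Proposition~\ref{lem:GammaF_closed}, $\mathcal T^\pi_{\mathrm{cdf}}F_i\in\mathcal X^{\mathrm{cdf}}$ under Assumptions~\ref{assump:reward_bound}--\ref{assump:transition}. Hence $\mathcal T^{\pi}\mu_i=\mathbb V(\mathcal T^\pi_{\mathrm{cdf}}F_i)\in\mathcal X^{\mathrm{adm}}$, and $\mathbb V^{-1}\mathcal T^{\pi}\mu_i=\mathcal T^\pi_{\mathrm{cdf}}F_i$.

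Then compute
\[
\tilde d_{\mathrm{Cr}}(\mathcal T^{\pi}\mu_1,\mathcal T^{\pi}\mu_2)
= d_{\mathrm{Cr}}(\mathcal T^\pi_{\mathrm{cdf}}F_1,\mathcal T^\pi_{\mathrm{cdf}}F_2)
\le \sqrt{\gamma}\, d_{\mathrm{Cr}}(F_1,F_2)
= \sqrt{\gamma}\,\tilde d_{\mathrm{Cr}}(\mu_1,\mu_2).
\]
The first equality is the definition \eqref{eq:induced_metric}. The inequality is Theorem~\ref{thm:cramer_contraction}. The final equality is again \eqref{eq:induced_metric}.

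The only point needing care is the first step: $\mathbb V^{-1}$ must be well defined on $\mathcal X^{\mathrm{adm}}$, and the invariance of $\mathcal X^{\mathrm{cdf}}$ must be confirmed so that the composition makes sense. Both are already supplied by earlier results, so the proof is essentially a transport of the CDF-level bound.
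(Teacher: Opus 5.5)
Your proposal is correct and follows the paper's proof essentially line for line. The shared steps are: pull back via $F_i:=\mathbb V^{-1}\mu_i$, use the definition of $\tilde d_{\mathrm{Cr}}$ and the conjugacy to reduce to $d_{\mathrm{Cr}}(\mathcal T^\pi_{\mathrm{cdf}}F_1,\mathcal T^\pi_{\mathrm{cdf}}F_2)$, apply Theorem~\ref{thm:cramer_contraction}, and convert back. Your explicit check via Proposition~\ref{lem:GammaF_closed} that $\mathcal T^{\pi}$ maps $\mathcal X^{\mathrm{adm}}$ into itself is left implicit in the paper, but it does not change the argument.
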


\begin{proof}
Let $\mu_1,\mu_2\in\mathcal X^{\mathrm{adm}}$ and set $F_i:=\mathbb V^{-1}\mu_i\in\mathcal X^{\mathrm{cdf}}$. Using the definition of $\tilde d_{\mathrm{Cr}}$ in~\eqref{eq:induced_metric} and the conjugacy definition~\eqref{eq:Tpi_spec_function}, we obtain
\begin{align*}
\tilde d_{\mathrm{Cr}}\!\bigl(\mathcal T^{\pi}\mu_1,\mathcal T^{\pi}\mu_2\bigr)
&=
d_{\mathrm{Cr}}\!\left(
\mathbb V^{-1}\mathcal T^{\pi}\mu_1,\,
\mathbb V^{-1}\mathcal T^{\pi}\mu_2
\right) \\
&=
d_{\mathrm{Cr}}\!\left(
\mathcal T^\pi_{\mathrm{cdf}} F_1,\,
\mathcal T^\pi_{\mathrm{cdf}} F_2
\right).
\end{align*}
By Theorem~\ref{thm:cramer_contraction},
$\mathcal T^\pi_{\mathrm{cdf}}$ is a $\sqrt{\gamma}$-contraction on
$\bigl(\mathcal X^{\mathrm{cdf}},d_{\mathrm{Cr}}\bigr)$, hence
\[
d_{\mathrm{Cr}}\!\left(
\mathcal T^\pi_{\mathrm{cdf}} F_1,\,
\mathcal T^\pi_{\mathrm{cdf}} F_2
\right)
\le
\sqrt{\gamma}\,
d_{\mathrm{Cr}}(F_1,F_2).
\]
Finally, $d_{\mathrm{Cr}}(F_1,F_2)=\tilde d_{\mathrm{Cr}}(\mu_1,\mu_2)$ by \eqref{eq:induced_metric}, which gives~\eqref{eq:spectral_contraction}.
\end{proof}

\paragraph{Completeness of the spectral representation.}
Having established that the conjugate spectral Bellman operator is contractive under the induced metric, we next verify that $\bigl(\mathcal X^{\mathrm{adm}},\tilde d_{\mathrm{Cr}}\bigr)$ is complete, so that the transported dynamics is well posed on a closed metric space.

\begin{proposition}[Completeness of the admissible spectral space]
\label{prop:spectral_completeness}
The metric space $(\mathcal X^{\mathrm{adm}},\tilde d_{\mathrm{Cr}})$ is complete.
\end{proposition}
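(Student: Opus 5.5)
The plan is to transport completeness of $(\mathcal X^{\mathrm{cdf}},d_{\mathrm{Cr}})$ through the isometric bijection $\mathbb V$. By Definition~\ref{def:induced_metric}, $\tilde d_{\mathrm{Cr}}(\mu_1,\mu_2)=d_{\mathrm{Cr}}(\mathbb V^{-1}\mu_1,\mathbb V^{-1}\mu_2)$ for all $\mu_1,\mu_2\in\mathcal X^{\mathrm{adm}}$. Moreover, $\mathbb V$ restricts to a bijection $\mathcal X^{\mathrm{cdf}}\to\mathcal X^{\mathrm{adm}}$. Hence $\mathbb V$ is a surjective isometry between these metric spaces. Completeness is a metric invariant under surjective isometries, so it suffices to know that $(\mathcal X^{\mathrm{cdf}},d_{\mathrm{Cr}})$ is complete.

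Concretely, I would take a $\tilde d_{\mathrm{Cr}}$-Cauchy sequence $(\mu_n)$ in $\mathcal X^{\mathrm{adm}}$ and set $F_n:=\mathbb V^{-1}\mu_n\in\mathcal X^{\mathrm{cdf}}$. By the isometry, $(F_n)$ is $d_{\mathrm{Cr}}$-Cauchy. Completeness of $(\mathcal X^{\mathrm{cdf}},d_{\mathrm{Cr}})$ then yields a limit $F\in\mathcal X^{\mathrm{cdf}}$. Setting $\mu:=\mathbb V F\in\mathcal X^{\mathrm{adm}}$, we get $\tilde d_{\mathrm{Cr}}(\mu_n,\mu)=d_{\mathrm{Cr}}(F_n,F)\to0$, which proves the claim.

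The main obstacle is the completeness of $(\mathcal X^{\mathrm{cdf}},d_{\mathrm{Cr}})$ itself. The paper invokes this only in the sketch of Corollary~\ref{cor:cdf_fixed_point}, so I would make it explicit. Let $(F_n)$ be $d_{\mathrm{Cr}}$-Cauchy. For each $(s,a)$, the sequence $(F_n(s,a))$ is $d_C$-Cauchy in $\Gamma_F$, so Lemma~\ref{lem:P2_complete} gives a pointwise limit $F(s,a)\in\Gamma_F$. To upgrade this to uniform convergence, fix $\eta>0$ and choose $N$ with $d_C(F_n(s,a),F_m(s,a))<\eta$ for all $(s,a)$ and all $n,m\ge N$. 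Letting $m\to\infty$ and using continuity of $d_C$ gives $\sup_{(s,a)}d_C(F_n(s,a),F(s,a))\le\eta$ for $n\ge N$.

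The remaining subtlety is that $F$ must genuinely lie in $\mathcal X^{\mathrm{cdf}}$. Membership only requires each $F(s,a)$ to be a CDF in $\Gamma_F$, with no global boundedness condition, so this already holds. If measurability in $(s,a)$ is needed for $\mathbb VF\in\mathcal X^{\mathrm{spec}}$, I would argue that pointwise limits of measurable maps remain measurable. Finiteness of $\|\mathbb VF\|_{\infty,\epsilon}$ would be argued only to the extent that $\mathbb V$ is already assumed to map $\mathcal X^{\mathrm{cdf}}$ into $\mathcal X^{\mathrm{spec}}$, as stated in Section~\ref{subsec:intertwining}.
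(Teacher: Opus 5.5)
Your proposal is correct and follows the paper's proof: you pull a $\tilde d_{\mathrm{Cr}}$-Cauchy sequence back through $\mathbb V^{-1}$, use completeness of $(\mathcal X^{\mathrm{cdf}},d_{\mathrm{Cr}})$, and push the limit forward with $\mathbb V$. Your pointwise-then-uniform argument for completeness of $(\mathcal X^{\mathrm{cdf}},d_{\mathrm{Cr}})$ is the same as the paper's appendix proof of Corollary~\ref{cor:cdf_fixed_point}, which is given in full there and not only sketched. The sup-norm finiteness you hedge on holds automatically: $|\phi_P-1|\le 2$ gives $\|\Phi(P)\|_{\mathcal H_\epsilon}^2\le 2/\sqrt{\epsilon}$ for every law $P$.
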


\begin{proof}
Let $\{\mu_n\}_{n\ge0}$ be a Cauchy sequence in $(\mathcal X^{\mathrm{adm}},\tilde d_{\mathrm{Cr}})$. By definition of the induced metric, $\{\mathbb V^{-1}\mu_n\}_{n\ge0}$ is a Cauchy sequence in $(\mathcal X^{\mathrm{cdf}},d_{\mathrm{Cr}})$, which is complete by Corollary~\ref{cor:cdf_fixed_point}. Then let $G\in\mathcal X^{\mathrm{cdf}}$ denote its limit. Since $\mathbb V$ is bijective onto the admissible domain, $\mu_n\to\mathbb V(G)\in\mathcal X^{\mathrm{adm}}$, establishing completeness.
\end{proof}
This result ensures that the admissible spectral space provides a closed geometric setting for dynamics transported from the CDF level.

\begin{remark}[Role of completeness in the spectral representation]
The completeness result above is not used to derive an independent policy evaluation procedure in the spectral space, nor does it introduce new dynamics. Rather, it establishes that the admissible spectral image $\mathcal X^{\mathrm{adm}}$, when equipped with the pullback Cram\'er metric $\tilde d_{\mathrm{Cr}}$, forms a closed and well-defined geometric setting.

In particular, limits and fixed points obtained by conjugacy from the CDF-level Bellman dynamics are guaranteed to remain within $\mathcal X^{\mathrm{adm}}$. This ensures that the spectral representation can host the geometric structures induced by the Cram\'er metric, without introducing new dynamics or additional analytical assumptions.
\end{remark}

\paragraph{Spectral Bellman fixed point.}
We now identify the fixed point of the spectral Bellman operator. Rather than deriving this result from an independent contraction argument in the spectral space, the fixed point is obtained directly by conjugacy from the CDF-level Bellman dynamics, reflecting the fact that all analytical content originates at the CDF level.

\begin{theorem}[Well-posedness of the spectral Bellman fixed point]
\label{thm:spectral_fixed_point}
Assume Assumptions~\ref{assump:reward_bound}--\ref{assump:gammaF_domain}. Then for every $\epsilon>0$, the spectral Bellman operator $\mathcal T^{\pi}$ admits a unique fixed point $\mu^{\pi}\in\mathcal X^{\mathrm{adm}}$, given by
\begin{equation}
\mu^{\pi} = \mathbb V\!\left(F_{\mathcal Z^\pi}\right).
\label{eq:spectral_fixed_point_expression}
\end{equation}
Equivalently, for each $(s,a)\in\mathcal S\times\mathcal A$,
\begin{equation}
\mu^{\pi}(s,a)
= \mathcal V\!\left(F_{\mathcal Z^\pi(s,a)}\right)
= \Phi\!\left(\mathcal Z^\pi(s,a)\right),
\label{eq:spectral_fixed_point_pointwise}
\end{equation}
where $\Phi(P)=\mathcal V(F_P)$ denotes the regularised spectral embedding of a probability measure $P$.
\end{theorem}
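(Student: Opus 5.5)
The proof transports the CDF-level result of Corollary~\ref{cor:cdf_fixed_point} through the bijection $\mathbb V:\mathcal X^{\mathrm{cdf}}\to\mathcal X^{\mathrm{adm}}$. It proves existence and uniqueness separately, and no new spectral analysis is needed.

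\emph{Existence.} By Assumption~\ref{assump:gammaF_domain}, $F_{\mathcal Z^\pi}\in\mathcal X^{\mathrm{cdf}}$. Hence $\mu^\pi:=\mathbb V(F_{\mathcal Z^\pi})$ lies in $\mathcal X^{\mathrm{adm}}=\mathrm{range}(\mathbb V)$. Corollary~\ref{cor:cdf_spec_fixed_point}(ii), which rests on Proposition~\ref{prop:intertwining_full} together with $\mathcal T^\pi_D\mathcal Z^\pi=\mathcal Z^\pi$, then gives $\mathcal T^\pi\mu^\pi=\mu^\pi$. One can also check this directly from the conjugacy definition~\eqref{eq:Tpi_spec_function}:
\[
\mathcal T^\pi\mu^\pi=\mathbb V\,\mathcal T^\pi_{\mathrm{cdf}}\,F_{\mathcal Z^\pi}=\mathbb V F_{\mathcal Z^\pi}=\mu^\pi ,
\]
using Corollary~\ref{cor:cdf_spec_fixed_point}(i) in the middle step.

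\emph{Uniqueness.} I would use pure conjugacy, not a second contraction argument. Suppose $\nu\in\mathcal X^{\mathrm{adm}}$ satisfies $\mathcal T^\pi\nu=\nu$, and set $G:=\mathbb V^{-1}\nu\in\mathcal X^{\mathrm{cdf}}$. Applying $\mathbb V^{-1}$ to $\mathbb V\,\mathcal T^\pi_{\mathrm{cdf}}\,G=\nu$ gives $\mathcal T^\pi_{\mathrm{cdf}}G=G$. The uniqueness part of Corollary~\ref{cor:cdf_fixed_point} then forces $G=F_{\mathcal Z^\pi}$, and so $\nu=\mu^\pi$. As an independent cross-check, uniqueness also follows from Proposition~\ref{prop:spectral_contraction}. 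If $\nu,\mu^\pi$ are both fixed points, then $\tilde d_{\mathrm{Cr}}(\nu,\mu^\pi)\le\sqrt\gamma\,\tilde d_{\mathrm{Cr}}(\nu,\mu^\pi)$. This quantity is finite, since it equals a $d_{\mathrm{Cr}}$ distance between elements of $\mathcal X^{\mathrm{cdf}}$, so it must be zero. Because $\tilde d_{\mathrm{Cr}}$ is a genuine metric (it is a pullback through an injective map), we get $\nu=\mu^\pi$.

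\emph{Pointwise form.} The identity \eqref{eq:spectral_fixed_point_pointwise} follows from the pointwise definition \eqref{eq:lifted_V_definition} of $\mathbb V$ together with Proposition~\ref{prop:canonical_transport}(1), which states $\mathcal V(F_P)=\Phi(P)$ whenever $F_P\in\Gamma_F$. This applies to $P=\mathcal Z^\pi(s,a)$ by Assumption~\ref{assump:gammaF_domain}. Independence from $\epsilon$ is immediate: $\mathcal V=\mathcal U\circ C$, and $\widehat{\Phi(P)}=\phi_P-1$ does not involve $\epsilon$. Only the ambient norm depends on $\epsilon$, and the argument never uses it.

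\emph{Main obstacle.} The delicate step is bookkeeping. We must ensure that $\mathcal T^\pi$ really maps $\mathcal X^{\mathrm{adm}}$ into itself, which holds by Proposition~\ref{lem:GammaF_closed}. We must also ensure that $\mathbb V$ is a bijection onto $\mathcal X^{\mathrm{adm}}$ with a pointwise inverse. That requires injectivity of $\mathcal V$ on $\Gamma_F$, which comes from Proposition~\ref{prop:canonical_transport}(2) via injectivity of $\mathcal U$ and of centring. Finally, the fixed point obtained from Corollary~\ref{cor:cdf_fixed_point} must coincide with the policy-induced field $F_{\mathcal Z^\pi}$. This is exactly where Corollary~\ref{cor:cdf_spec_fixed_point}(i) and Assumption~\ref{assump:gammaF_domain} are needed.
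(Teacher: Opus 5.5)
Your proposal is correct and follows the paper's proof: existence comes from the conjugacy computation $\mathcal T^\pi\mu^\pi=\mathbb V\,\mathcal T^\pi_{\mathrm{cdf}}F_{\mathcal Z^\pi}=\mathbb V F_{\mathcal Z^\pi}=\mu^\pi$, and uniqueness from pulling any spectral fixed point back through $\mathbb V^{-1}$ and invoking the uniqueness in Corollary~\ref{cor:cdf_fixed_point}. The remaining pieces are elaborations of what the paper leaves implicit or does not need: your use of Corollary~\ref{cor:cdf_spec_fixed_point}(i) to confirm that the CDF fixed point is $F_{\mathcal Z^\pi}$, the pointwise identification via Proposition~\ref{prop:canonical_transport}, and the contraction-based cross-check.
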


\begin{proof}
By Corollary~\ref{cor:cdf_fixed_point}, the CDF Bellman operator $\mathcal T^\pi_{\mathrm{cdf}}$ admits a unique fixed point in $\mathcal X^{\mathrm{cdf}}$, namely the CDF field $F_{\mathcal Z^\pi}$ of the policy-induced return-distribution field. Define
\[
\mu^{\pi}:=\mathbb V\!\left(F_{\mathcal Z^\pi}\right)\in\mathcal X^{\mathrm{adm}}.
\]
Using the conjugation definition~\eqref{eq:Tpi_spec_function}, we obtain
\[
\mathcal T^{\pi}\mu^{\pi}
=
\mathbb V\!\left(
\mathcal T^\pi_{\mathrm{cdf}}\,F_{\mathcal Z^\pi}
\right)
=
\mathbb V\!\left(F_{\mathcal Z^\pi}\right)
=
\mu^{\pi},
\]
so $\mu^{\pi}$ is a fixed point. Uniqueness follows by applying $\mathbb V^{-1}$ and invoking uniqueness of the CDF fixed point.
\end{proof}

The result above establishes that, for each fixed regularisation parameter $\epsilon>0$, the spectral Bellman operator admits a unique fixed point $\mu^{\pi}\in\mathcal X^{\mathrm{adm}}$, obtained by conjugacy from the CDF-level Bellman dynamics.

While this identifies a well-defined spectral representative of the policy-induced return distributions at every finite regularisation scale, it leaves open the question of how these Hilbert space realisations relate to the intrinsic Cram\'er geometry in the limit $\epsilon\downarrow0$. We address this question in the next subsection by studying convergence at the level of induced metrics.

\subsection{Recovery of the Cram\'er Geometry}
\label{subsec:strong_limit_eps}
Having established the existence and uniqueness of the spectral Bellman fixed point for each fixed regularisation parameter $\epsilon>0$ (Section~\ref{subsec:spectral_fixed_point}), we now clarify how these regularised Hilbert space realisations relate to the intrinsic Cram\'er geometry of return distributions as $\epsilon$ decreases.

\paragraph{Geometry versus representation.}
The central point is that the parameter $\epsilon$ controls a \emph{regularised representation} of the Cram\'er geometry rather than defining a family of approximations within a single ambient Hilbert space. Indeed, the Hilbert spaces $\mathcal H_\epsilon$ vary with $\epsilon$, and there is no fixed Hilbert structure with respect to which the spectral fixed points $\mu^\pi$ converge as vectors.

What is preserved and refined as $\epsilon\downarrow0$ is not vectorial convergence, but the \emph{distance structure} induced by these representations. In other words, convergence takes place at the level of metrics, rather than within any single normed space.

\paragraph{Regularised induced distances.}
Recall that the raw-to-spectral map $\mathcal V:\Gamma_F\to\mathcal H_\epsilon$ induces a regularised Cram\'er distance on admissible return distributions via
\begin{equation}
d_{C,\epsilon}(P_1,P_2) \;:=\;
\bigl\|\mathcal V(F_{P_1})-\mathcal V(F_{P_2})\bigr\|_{\mathcal H_\epsilon}, \qquad \epsilon>0.
\label{eq:dc_eps_recall}
\end{equation}
Similarly, here we also overload the notation and write $d_{C,\epsilon}(F_{P_1},F_{P_2})$ when $F_{P_i}$ are CDFs, identifying them with the corresponding probability measures.
This distance coincides with the Hilbert norm in $\mathcal H_\epsilon$, but depends explicitly on the regularisation parameter.

At the level of spectral fields, this induces a pointwise distance through the inverse lifted transport. For $\mu_1,\mu_2\in\mathcal X^{\mathrm{adm}}$ and $(s,a)\in\mathcal S\times\mathcal A$, we can say
\begin{equation}
\tilde d_{C,\epsilon}\!\bigl(\mu_1(s,a),\mu_2(s,a)\bigr) \;:=\;
d_{C,\epsilon}\!\left( \mathbb V^{-1}\mu_1(s,a),\,
\mathbb V^{-1}\mu_2(s,a) \right).
\label{eq:tilde_dc_eps_recall}
\end{equation}
By construction of $\mathcal X^{\mathrm{adm}}=\mathrm{range}(\mathbb V)$, this distance admits the explicit Hilbertian realisation
\begin{equation}
\tilde d_{C,\epsilon}\!\bigl(\mu_1(s,a),\mu_2(s,a)\bigr)
= \bigl\|\mu_1(s,a)-\mu_2(s,a)\bigr\|_{\mathcal H_\epsilon},
\qquad \epsilon>0.
\label{eq:tilde_dc_eps_equals_norm_recall}
\end{equation}
Aggregating over state-action pairs yields the regularised induced spectral metric
\begin{equation}
\tilde d_{\mathrm{Cr},\epsilon}(\mu_1,\mu_2)
\;:=\; \sup_{(s,a)\in\mathcal S\times\mathcal A}
\tilde d_{C,\epsilon}\!\bigl(\mu_1(s,a),\mu_2(s,a)\bigr)
= \|\mu_1-\mu_2\|_{\infty,\epsilon},
\label{eq:tilde_dcr_eps_equals_supnorm}
\end{equation}
where $\|\cdot\|_{\infty,\epsilon}$ is defined in~\eqref{eq:X_spec_norm}. Thus, for each fixed $\epsilon>0$, the Hilbert norm on $\mathcal H_\epsilon$ provides a concrete realisation of a \emph{regularised Cram\'er geometry} on admissible spectral fields.

\paragraph{Monotone recovery of the Cram\'er geometry.}
The relationship between the regularised distances and the intrinsic Cram\'er geometry is governed by the monotone convergence established in Proposition~\ref{prop:cramer_spectral_bridge}. For any admissible probability measures $P_1,P_2$,
\begin{equation}
\bigl\|\mathcal V(F_{P_1})-\mathcal V(F_{P_2})\bigr\|_{\mathcal H_\epsilon} \;\uparrow\; d_C(P_1,P_2)
\qquad \text{as }\epsilon\downarrow0.
\label{eq:dc_eps_to_dc}
\end{equation}

This convergence holds pointwise for each pair of distributions and reflects the fact that the regularised spectral norms approximate the Cram\'er geometry from below.

Because the lifted transport $\mathbb V$ acts pointwise on state-action pairs, this convergence extends naturally to the spectral field level. In particular, since the field-level metrics are defined by taking the supremum over $(s,a)$ and satisfy $d_{C,\epsilon}\le d_C$ for all $\epsilon>0$, the monotone convergence in~\eqref{eq:dc_eps_to_dc} yields
\begin{equation}
\tilde d_{\mathrm{Cr},\epsilon}(\mu_1,\mu_2)
\;\uparrow\; \tilde d_{\mathrm{Cr}}(\mu_1,\mu_2)
\qquad \text{as }\epsilon\downarrow0,
\label{eq:dcr_eps_to_dcr}
\end{equation}
for all $\mu_1,\mu_2\in\mathcal X^{\mathrm{adm}}$, where $\tilde d_{\mathrm{Cr}}$ denotes the pullback of the intrinsic Cram\'er metric defined in Section~\ref{subsec:spectral_fixed_point}.

\begin{theorem}[Recovery of the Cram\'er geometry]
\label{thm:recovery_cramer_geometry}
For every $(s,a)\in\mathcal S\times\mathcal A$ and every probability measure $P$ with $F_P\in\Gamma_F$,
\begin{equation}
\bigl\|\mu^{\pi}(s,a)-\mathcal V(F_P)\bigr\|_{\mathcal H_\epsilon}
\;\uparrow\; d_C\!\bigl(\mathcal Z^\pi(s,a),P\bigr)
\qquad \text{as }\epsilon\downarrow0 .
\label{eq:recovery_pointwise}
\end{equation}
Equivalently, the family of regularised induced metrics $\{\tilde d_{\mathrm{Cr},\epsilon}\}_{\epsilon>0}$ converges monotonically to the intrinsic Cram\'er geometry on admissible return distributions.
\end{theorem}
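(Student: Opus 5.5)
The plan is to reduce the statement to Proposition~\ref{prop:cramer_spectral_bridge}, using the fixed-point identification of Theorem~\ref{thm:spectral_fixed_point} to rewrite $\mu^\pi(s,a)$ as a spectral embedding of an admissible law.

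First, fix $(s,a)$ and $P$ with $F_P\in\Gamma_F$. By Theorem~\ref{thm:spectral_fixed_point} (which uses Assumption~\ref{assump:gammaF_domain}), for every $\epsilon>0$ we have
\[
\mu^\pi(s,a)=\mathcal V\bigl(F_{\mathcal Z^\pi(s,a)}\bigr)=\Phi\bigl(\mathcal Z^\pi(s,a)\bigr),
\]
and $F_{\mathcal Z^\pi(s,a)}\in\Gamma_F$. Although the ambient space $\mathcal H_\epsilon$ changes with $\epsilon$, the object $\Phi(\mathcal Z^\pi(s,a))$ does not. It is specified by its Fourier transform $\phi_{\mathcal Z^\pi(s,a)}-1$, which is independent of $\epsilon$. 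Likewise, by \eqref{eq:V_on_cdf}, $\mathcal V(F_P)=\Phi(P)$ with $\epsilon$-independent transform $\phi_P-1$. The left-hand side of \eqref{eq:recovery_pointwise} is therefore the single function
\[
\epsilon\mapsto\|\Phi(P_1)-\Phi(P_2)\|_{\mathcal H_\epsilon},
\qquad P_1=\mathcal Z^\pi(s,a),\ P_2=P.
\]

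Second, apply Proposition~\ref{prop:cramer_spectral_bridge} to the pair $(P_1,P_2)$, both of which have CDFs in $\Gamma_F$. This gives
\[
\|\Phi(P_1)-\Phi(P_2)\|_{\mathcal H_\epsilon}^2=\frac{1}{2\pi}\int_{\mathbb R}\frac{|\phi_{P_1}(\omega)-\phi_{P_2}(\omega)|^2}{\omega^2+\epsilon}\,d\omega .
\]
As $\epsilon\downarrow0$ this increases to $d_C(P_1,P_2)^2$. The mechanism is that the integrand is nonnegative and increases pointwise in $\omega\neq0$ to $|\phi_{P_1}-\phi_{P_2}|^2/\omega^2$. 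By monotone convergence the limit is the integral in \eqref{eq:cramer_phi_formula}, which equals $d_C^2(P_1,P_2)<\infty$ via Lemma~\ref{lemma:fourier_cdf_difference} and Plancherel. Taking square roots preserves monotonicity and yields \eqref{eq:recovery_pointwise}.

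Third, for the ``equivalently'' clause at field level, take $\mu_1,\mu_2\in\mathcal X^{\mathrm{adm}}$ and write $g_\epsilon(s,a)=\|\mu_1(s,a)-\mu_2(s,a)\|_{\mathcal H_\epsilon}$. The pointwise statement gives $g_\epsilon(s,a)\uparrow \tilde d_C(\mu_1(s,a),\mu_2(s,a))$. Since each $g_\epsilon$ is nonincreasing in $\epsilon$, $\tilde d_{\mathrm{Cr},\epsilon}=\sup_{(s,a)}g_\epsilon$ is monotone in $\epsilon$ and bounded above by $\tilde d_{\mathrm{Cr}}$. For the reverse inequality we interchange supremum and monotone limit:
\[
\lim_{\epsilon\downarrow0}\sup_{(s,a)}g_\epsilon=\sup_{\epsilon}\sup_{(s,a)}g_\epsilon=\sup_{(s,a)}\sup_\epsilon g_\epsilon=\sup_{(s,a)}\tilde d_C\bigl(\mu_1(s,a),\mu_2(s,a)\bigr)=\tilde d_{\mathrm{Cr}}(\mu_1,\mu_2).
\]
Both suprema are of an increasing family, so they commute. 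This establishes \eqref{eq:dcr_eps_to_dcr}.

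The main obstacle is conceptual rather than computational: justifying that comparing norms across different spaces $\mathcal H_\epsilon$ is meaningful. I would handle this explicitly by noting that all the objects involved are identified through their Fourier transforms. The embedding is $\epsilon$-independent (see the remark on regularisation by geometry, not embedding), so only the weight varies. Once this is settled, the monotone convergence step inside Proposition~\ref{prop:cramer_spectral_bridge} carries the whole argument.
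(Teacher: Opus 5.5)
Your proposal is correct and follows the paper's proof. Like the paper, you rewrite $\mu^\pi(s,a)=\mathcal V(F_{\mathcal Z^\pi(s,a)})=\Phi(\mathcal Z^\pi(s,a))$ via Theorem~\ref{thm:spectral_fixed_point}, and your field-level interchange of the supremum over $(s,a)$ with the monotone limit in $\epsilon$ simply spells out more carefully the justification the paper gives just before the theorem for \eqref{eq:dcr_eps_to_dcr}.
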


\begin{proof}
By definition of the spectral fixed point, $\mu^\pi(s,a)=\mathcal V(F_{\mathcal Z^\pi(s,a)})$. Substituting into~\eqref{eq:recovery_pointwise} and applying Proposition~\ref{prop:cramer_spectral_bridge} yields the claim.
\end{proof}

\paragraph{Interpretation and implications.}
Theorem~\ref{thm:recovery_cramer_geometry} clarifies the precise role of the regularisation parameter $\epsilon$. Although for each fixed $\epsilon>0$ the spectral fixed point $\mu^\pi(s,a)$ lives in a genuine Hilbert space $\mathcal H_\epsilon$, the zero-regularisation limit does not produce an element in a limiting Hilbert space. What emerges instead is a metric structure: the intrinsic Cram\'er geometry of admissible return distributions.

In this sense, the Cram\'er metric can be understood as a singular limit of a family of regularised Hilbert geometries. The low-frequency obstruction of the unregularised spectral representation is resolved at every $\epsilon>0$, yielding a stable inner-product structure in which the induced linear action of the Bellman update on centred CDF representations admits a bounded operator realisation. As $\epsilon\downarrow0$, the induced distances associated with these Hilbert structures converge monotonically to the native Cram\'er metric.

From a practical perspective, this interpretation suggests two important implications. First, for sufficiently small but nonzero $\epsilon$, policy evaluation performed in the regularised spectral representation provides a controlled Hilbertian approximation of Cram\'er-geometry policy evaluation. Second, the regularised setting offers a well-defined linear and inner-product framework at finite $\epsilon$, potentially enabling spectral approximation schemes, kernel-based constructions, or parametric representations that are not directly available in the singular Cram\'er geometry.

Taken together, Sections~\ref{subsec:cdf_spectral_bridge}--\ref{subsec:strong_limit_eps} therefore establish more than a formal equivalence of representations. They identify the Cram\'er geometry as the zero-regularisation limit of a family of stable Hilbert realisations of the distributional Bellman dynamics, clarifying both the analytic structure of the metric and the scope of its regularised spectral approximations.

\section{Related Work}
\label{sec:relatedwork}
Distributional reinforcement learning studies the evolution of full return distributions under Bellman updates~\citep{c51,drlbook_bellemare_distributional_2023,drllinear,drlstat}, rather than focusing solely on expected returns. For fixed policies, a substantial body of work has established that the distributional Bellman operator admits contraction and fixed-point properties under certain metrics~\citep{drlbook_bellemare_distributional_2023,c51theory}. While analogous guarantees do not generally extend to the corresponding optimality operator~\citep{drlbook_bellemare_distributional_2023,c51theory}, the present work deliberately focuses on the policy-evaluation setting, where the Bellman operator constitutes a well-defined and analytically tractable object. Our analysis is therefore concerned with the structural properties of the policy-evaluation Bellman operator itself, independent of control-specific considerations, and can be viewed as a foundational step toward understanding more general settings.

\paragraph{Algorithmic representations and approximation schemes.}
A wide range of algorithmic approaches to distributional reinforcement learning have been proposed, based on different parameterisations of return distributions, including categorical~\citep{c51,d4pg}, quantile-based~\citep{qrdqn-Dabney18,iqn,fqf,dsacma,dpo}, kernel-based~\citep{mmddrl}, and parametric~\citep{dsacDuan_2022,dsactDuan_2025} representations. While these methods differ in representation, projection, and loss construction~\citep{lossdrl}, they are typically analysed as approximations of the same underlying distributional Bellman dynamics in order to propose algorithms. Theoretical studies in this line primarily address the effects of finite representations, projection operators, and stochastic approximation on stability and convergence~\citep{splinedrl,sketchdrl,lossdrl,mmddrl2}. In contrast, the present paper does not introduce a new algorithmic representation, nor does it study approximation effects. Instead, it focuses on the exact, unprojected distributional Bellman operator, independently of any particular parameterisation.

\paragraph{Metrics on return distributions and the Cram\'er geometry.}
The behaviour of distributional Bellman updates depends critically on the geometry imposed on the space of return distributions~\citep{c51,c51theory}. Wasserstein-type metrics play an important role in early analyses, particularly in establishing contraction properties of distributional Bellman operators, however, their non-Hilbertian nature makes it less straightforward to develop operator-theoretic or spectral representations of Bellman dynamics within this geometry~\citep{drlbook_bellemare_distributional_2023,wasserstein1,wasser}. An alternative is provided by CDF-based metrics, among which the $L^2$ distance between cumulative distribution functions~\citep{returncdf}, commonly referred to as the Cram\'er metric, occupies a distinguished position~\citep{c51theory,drlstat}. This geometry underlies categorical distributional methods and several theoretical treatments of distributional Bellman operators in the policy-evaluation setting. Existing work in this direction is largely metric in nature~\citep{ddp} and is primarily concerned with contraction, fixed-point existence, and convergence.

\paragraph{Spectral and functional perspectives.}
Spectral representations based on characteristic functions and Fourier transforms are classical tools~\citep{cf,cfthm} in probability theory for analysing distributions and their transformations, such as translation, scaling, and averaging.

However, these spectral tools have not been developed into a Hilbert space framework for analysing distributional Bellman operators. In particular, existing work does not provide a formulation that is simultaneously aligned with the CDF-based Cram\'er geometry and exactly compatible with the unmodified distributional Bellman update. As a result, prior analyses of Cram\'er-based Bellman dynamics remain primarily metric in nature.

\paragraph{Positioning of the present work.}
The present paper is complementary to the above lines of research. Rather than proposing a new distributional algorithm or studying approximation effects, it addresses a structural question: how Cram\'er-based distributional Bellman dynamics can be embedded within a Hilbertian analytical framework without altering the underlying update. The motivation for this construction is conceptual and analytical. While the Cram\'er contraction principle guarantees well-posedness at the metric level, it does not by itself yield a representation in which the induced difference-level structure of the Bellman dynamics becomes accessible to operator-theoretic analysis. By constructing a Hilbertian envelope of the Cram\'er geometry, we provide such a representation while remaining exactly consistent with the intrinsic CDF-level formulation. This focus distinguishes the present work from existing metric-based analyses and lays groundwork for subsequent developments grounded in Hilbert space methods.

\section{Discussion}
\label{sec:discussion}
This paper forms part of a broader analytical effort to understand distributional reinforcement learning under the Cram\'er geometry. Its specific contribution is to identify and resolve a geometric obstruction that arises when one attempts to analyse distributional Bellman updates using linear and Hilbert space-based tools, without modifying the underlying Bellman dynamics or their probabilistic interpretation.

\subsection{The geometric obstruction in Cram\'er-based Bellman analysis}
Distributional Bellman dynamics admit a natural and well-posed formulation at the level of CDFs, where contraction, fixed-point existence, and convergence follow directly from the Cram\'er geometry. As shown in this work, policy evaluation under a fixed policy is fully characterised within the metric space $(\mathcal X^{\mathrm{cdf}}, d_{\mathrm{Cr}})$, without reference to any spectral or Hilbert space representation.

From a functional-analytic perspective, however, the Cram\'er geometry exhibits a structural limitation. Although the Cram\'er metric coincides with an $L^2$ distance between CDFs, its Fourier representation involves a singular low-frequency weight~\citep{drlbook_bellemare_distributional_2023,cramer-singularDudley_2002}. This singularity is benign for metric arguments but prevents the induced linear action of the Bellman update on centred CDF differences from being realised as a bounded operator in a spectral Hilbert space. As a result, existing Cram\'er-based analyses have largely remained at the CDF or metric level~\citep{c51theory,drllinear,drlstat}. In this sense, the limitation of existing analyses is not the absence of a contraction principle, but the absence of a Hilbertian operator framework that internalises the difference-level linear structure of the Bellman dynamics.

\subsection{Resolving the obstruction via geometric regularisation}
The central contribution of this paper is to resolve the above obstruction by regularising the geometry used to analyse the Cram\'er metric, without modifying the underlying Bellman dynamics. Rather than altering the Bellman update or introducing approximations at the level of return distributions, we construct a family of Hilbert spaces that form a Hilbertian envelope of the Cram\'er geometry.

For each fixed regularisation parameter $\epsilon>0$, this construction yields a Hilbert space setting in which the induced linear action of the CDF-level Bellman update on centred representations admits an exact conjugate operator realisation via an isometric correspondence. All contraction and fixed-point properties are established intrinsically at the CDF level, and the Hilbert space formulation is introduced only afterwards as a representational tool.

This separation between intrinsic dynamics and analytical representation is precisely what enables the use of linear functional tools, such as spectral decompositions, perturbation arguments, or sensitivity analyses, without altering the underlying Bellman operator.

The role of the parameter $\epsilon$ is purely geometric. It does not approximate the Bellman operator, smooth the return distributions, or modify the reinforcement learning problem itself. Instead, $\epsilon$ attenuates the low-frequency singularity present in the Fourier representation of the Cram\'er metric. As $\epsilon \downarrow 0$, the induced Hilbert metrics converge monotonically to the intrinsic Cram\'er distance on its finite domain, ensuring consistency with the original CDF-level analysis.

\subsection{Scope and outlook}
The analysis in this paper is deliberately restricted to policy evaluation under a fixed policy and does not address control, function approximation, or algorithmic implementations. These restrictions are intentional and reflect the aim of isolating the geometric structure induced by the Cram\'er metric before introducing additional sources of approximation or complexity.

By establishing a Hilbertian framework that is compatible with CDF-level Bellman dynamics, this work provides a foundation for further analysis based on linear representations and continuous functionals, while retaining exact correspondence with the original distributional formulation. In particular, it prepares the setting for subsequent investigations using results from Hilbert space theory, such as the Riesz representation theorem~\citep{operator-conjugate} or perturbative operator analysis, which require precisely the type of geometric structure developed here. This structure not only enables theoretical investigations, such as spectral and perturbation analysis, but also provides a principled starting point for developing approximation schemes with rigorous error guarantees, a crucial step toward bridging foundational theory and practical algorithm design.

\section{Conclusion}
\label{sec:conclusion}
This paper revisits the distributional Bellman operator under the Cram\'er metric from a geometric and representational perspective. While the Cram\'er contraction property guarantees well-posedness of distributional policy evaluation, it does not by itself furnish a functional-analytic setting in which the induced linear action of the Bellman update on centred CDF differences can be realised as a bounded operator on a Hilbert space. The central difficulty lies not in the Bellman dynamics themselves, which remain affine and well defined at the CDF level, but in the mismatch between the intrinsic Cram\'er geometry and the requirements of a stable Hilbert space representation.

Our analysis separates these aspects. All probabilistic, metric, and fixed-point properties of the distributional Bellman update are formulated and established directly at the level of CDFs, where the Cram\'er geometry is native and complete. Building on this intrinsic formulation, we construct a family of spectrally regularised Hilbert spaces that realise the same geometry via exact transport, without modifying the Bellman update or introducing auxiliary approximations. The resulting framework provides a Hilbertian representational envelope that internalises the induced linear structure of the Bellman update on centred CDF representations, while recovering the intrinsic Cram\'er geometry in the zero-regularisation limit.

The scope of the present work is deliberately restricted to policy evaluation under a fixed policy. We do not consider optimality operators, function approximation, or algorithmic implementations. These limitations are intentional: the goal is to isolate the geometric and representational structure induced by the Cram\'er metric before introducing additional sources of approximation or control.

Viewed in this light, the contribution of this paper is foundational. By clarifying the relationship between CDF-level Bellman dynamics and their regularised Hilbert space realisations, it establishes a principled operator-theoretic framework in which the induced difference-level structure of the Bellman update can be analysed using Hilbert space methods. This framework has direct implications for practice: it provides the theoretical tools needed to understand why and when distributional RL algorithms work, to diagnose instabilities in neural network function approximation, and to design loss functions and architectures that respect the underlying geometry of return distributions. For example, the spectral perspective opens the door to principled regularisation schemes that control approximation error propagation, or to perturbation analyses that quantify how model misspecification affects learned policies. These practical questions, while central to the empirical success of distributional RL, have previously lacked a rigorous functional-analytic foundation due to the singular spectral structure of the raw Cram\'er metric. By resolving this representational obstruction, the present work bridges the gap between the geometric theory of distributional Bellman operators and the algorithmic design choices that determine performance in practice. Further developments along these lines, including concrete algorithmic instantiations, are left to future work.


\acks{This publication has emanated from research supported in part by a grant from Taighde Éireann - Research Ireland under Grant number 18/CRT/6049. For the purpose of Open Access, the author has applied a CC BY public copyright licence to any Author Accepted Manuscript version arising from this submission.}


\newpage

\appendix
\section{Proof Details in Section~\ref{sec:hilbert}}

\subsection{Proof of Lemma~\ref{lemma:fourier_cdf_difference}}
\label{appendix:proof_fourier_cdf_difference}

\begin{proof}
Given $P_1,P_2$ with CDFs $F_{P_1},F_{P_2} \in \Gamma_F$ and define
\[
H(x) := F_{P_1}(x)-F_{P_2}(x).
\]
We prove that for every $\omega\neq0$,
\[
\widehat H(\omega)
= \frac{\phi_{P_1}(-\omega)-\phi_{P_2}(-\omega)}{i\omega\sqrt{2\pi}},
\tag{\ref{eq:fourier_cdf_formula}}
\]
where $\phi_{P_i}(\omega)=\mathbb E_{X\sim P_i}[e^{i\omega X}]$ denotes the characteristic function of $P_i$.

By definition,
\[
\widehat H(\omega)
= \frac{1}{\sqrt{2\pi}}\int_{\mathbb R} H(x)e^{-i\omega x}\,dx,
\qquad \omega\neq0.
\]
We apply integration by parts in the sense of Stieltjes integrals. Let $u(x)=H(x)$ and $dv=e^{-i\omega x}dx$, so that $du=dH(x)$ and $v(x)=-e^{-i\omega x}/(i\omega)$. Then
\[
\int_{\mathbb R}H(x)e^{-i\omega x}\,dx
= \Bigl[H(x)v(x)\Bigr]_{-\infty}^{+\infty}
   -\int_{\mathbb R}v(x)\,dH(x).
\]
Since $H$ is of bounded variation and $H(x)\to0$ as $x\to\pm\infty$, while $v(x)=-e^{-i\omega x}/(i\omega)$ is bounded and continuous, the boundary term $\bigl[H(x)v(x)\bigr]_{-\infty}^{+\infty}$ vanishes in the sense of Stieltjes integration. Thus
\[
\int_{\mathbb R}H(x)e^{-i\omega x}\,dx
= \frac{1}{i\omega}\int_{\mathbb R} e^{-i\omega x}\,dH(x).
\]

The function $F_P$ is of bounded variation with Stieltjes measure $dF_P = P$, so $dH = dF_{P_1}-dF_{P_2} = P_1-P_2$. Therefore
\[
\int_{\mathbb R} e^{-i\omega x}\,dH(x)
= \int_{\mathbb R} e^{-i\omega x}\,P_1(dx)
  -\int_{\mathbb R} e^{-i\omega x}\,P_2(dx)
= \phi_{P_1}(-\omega)-\phi_{P_2}(-\omega).
\]
Combining with the previous display gives
\[
\int_{\mathbb R}H(x)e^{-i\omega x}\,dx
= \frac{\phi_{P_1}(-\omega)-\phi_{P_2}(-\omega)}{i\omega},
\]
and hence
\[
\widehat H(\omega)
= \frac{1}{\sqrt{2\pi}}\int_{\mathbb R}H(x)e^{-i\omega x}\,dx
= \frac{\phi_{P_1}(-\omega)-\phi_{P_2}(-\omega)}{i\omega\sqrt{2\pi}},
\]
which is exactly \eqref{eq:fourier_cdf_formula}.
\end{proof}

\subsection{Proof of Proposition~\ref{prop:cramer_spectral_bridge}}
\label{appendix:proof_cramer_convergence}

\begin{proof}
Let $P_1,P_2$ be probability measures with cumulative distribution functions $F_{P_1},F_{P_2}\in\Gamma_F$, and define
\[
\Delta\phi(\omega)
:= \phi_{P_1}(\omega)-\phi_{P_2}(\omega).
\]
By definition of the regularised spectral norm,
\begin{equation}
\|\Phi(P_1)-\Phi(P_2)\|_{\mathcal H_{\epsilon}}^2
= \frac{1}{2\pi}
  \int_{\mathbb R}
  \frac{|\Delta\phi(\omega)|^2}{\omega^2+\epsilon}\,d\omega.
\label{eq:A1}
\end{equation}

\textit{Monotone convergence.}
For every $\omega\neq0$, the function
\[
\epsilon \longmapsto \frac{1}{\omega^2+\epsilon}
\]
is monotonically decreasing as $\epsilon\downarrow0$, and satisfies
\[
\frac{1}{\omega^2+\epsilon}
\;\uparrow\;
\frac{1}{\omega^2}.
\]
Since $\phi_{P_i}(0)=1$ for $i=1,2$, we have $\Delta\phi(0)=0$, and hence the integrand in \eqref{eq:A1} is finite for all $\omega\in\mathbb R$. Therefore, for all $\omega\in\mathbb R$,
\[
\frac{|\Delta\phi(\omega)|^2}{\omega^2+\epsilon}
\;\uparrow\;
\frac{|\Delta\phi(\omega)|^2}{\omega^2}
\qquad (\epsilon\downarrow0).
\]

Moreover, the classical Cram\'er identity
\begin{equation}
d_C^2(P_1,P_2)
= \frac{1}{2\pi}
  \int_{\mathbb R}
  \frac{|\Delta\phi(\omega)|^2}{\omega^2}\,d\omega
<\infty
\label{eq:A5}
\end{equation}
implies that the limiting integrand is integrable. Since the integrand in \eqref{eq:A1} is nonnegative and increases pointwise as $\epsilon\downarrow0$, the monotone convergence theorem yields
\begin{equation}
\lim_{\epsilon\downarrow0}
\|\Phi(P_1)-\Phi(P_2)\|_{\mathcal H_{\epsilon}}^2
=
d_C^2(P_1,P_2).
\label{eq:A6}
\end{equation}
Taking square roots completes the proof.
\end{proof}

\subsection{Proof of Theorem~\ref{thm:cdf_spectral_isomorphism}}
\label{appendix:proof_cdf_spectral_isomorphism}

\begin{proof}
Fix $\epsilon>0$. We work throughout with the symmetric Fourier transform
\[
\widehat f(\omega)
 = \frac{1}{\sqrt{2\pi}}\int_{\mathbb R} f(x)\,e^{-i\omega x}\,dx,
\qquad
f(x)
 = \frac{1}{\sqrt{2\pi}}\int_{\mathbb R} \widehat f(\omega)\,e^{i\omega x}\,d\omega,
\]
and Plancherel's identity
\[
\int_{\mathbb R} |f(x)|^2\,dx
 = \int_{\mathbb R} |\widehat f(\omega)|^2\,d\omega .
\]

Recall that the centred CDF Hilbert space is
\[
\mathcal H^{\mathrm{cdf}}_\epsilon
 :=
 \Bigl\{
 H \mid
 \|H\|^2_{\mathcal H^{\mathrm{cdf}}_\epsilon}
 =
 \int_{\mathbb R}
 \frac{\omega^2}{\omega^2+\epsilon}
 |\widehat H(\omega)|^2\,d\omega
 < \infty
 \Bigr\},
\]
and the regularised spectral Hilbert space is
\[
\mathcal H_{\epsilon}
 :=
 \Bigl\{
 f \mid
 \|f\|^2_{\mathcal H_{\epsilon}}
 =
 \frac{1}{2\pi}
 \int_{\mathbb R}
 \frac{|\widehat f(\omega)|^2}{\omega^2+\epsilon}\,d\omega
 < \infty
 \Bigr\}.
\]

The operator
\[
\mathcal U :
\mathcal H^{\mathrm{cdf}}_\epsilon \longrightarrow \mathcal H_{\epsilon}
\]
is defined via its Fourier transform by
\[
\widehat{(\mathcal U H)}(\omega)
 :=
 -\sqrt{2\pi}\, i \omega\overline{\widehat H(\omega)},
 \qquad \omega\in\mathbb R .
\]

\textit{Conjugate-linearity.}
Let $\alpha,\beta\in\mathbb C$ and $H_1,H_2\in\mathcal H^{\mathrm{cdf}}_\epsilon$. By linearity of the Fourier transform and complex conjugation,
\begin{align*}
\widehat{(\mathcal U(\alpha H_1 + \beta H_2))}(\omega)
&=
-\sqrt{2\pi} i \omega[
\overline{\alpha \widehat H_1(\omega) + \beta \widehat H_2(\omega)} ]\\
&=
\overline{\alpha}\,\widehat{(\mathcal U H_1)}(\omega)
+
\overline{\beta}\,\widehat{(\mathcal U H_2)}(\omega),
\end{align*}
which shows that $\mathcal U$ is conjugate-linear.

\textit{Isometry.}
For $H\in\mathcal H^{\mathrm{cdf}}_\epsilon$,
\[
|\widehat{(\mathcal U H)}(\omega)|^2
= 2\pi\,\omega^2 |\widehat H(\omega)|^2.
\]
Substituting into the definition of the $\mathcal H_{\epsilon}$ norm yields
\begin{align*}
\|\mathcal U H\|^2_{\mathcal H_{\epsilon}}
&=
\frac{1}{2\pi}
\int_{\mathbb R}
\frac{|\widehat{(\mathcal U H)}(\omega)|^2}{\omega^2+\epsilon}\,d\omega \\
&=
\int_{\mathbb R}
\frac{\omega^2}{\omega^2+\epsilon}
|\widehat H(\omega)|^2\,d\omega
=
\|H\|^2_{\mathcal H^{\mathrm{cdf}}_\epsilon}.
\end{align*}
Thus $\mathcal U$ is an isometry from $\mathcal H^{\mathrm{cdf}}_\epsilon$ into $\mathcal H_{\epsilon}$.

\textit{Action on CDF differences.}
Let $P_1,P_2$ be probability measures with $F_{P_1},F_{P_2}\in\Gamma_F$, and set
\[
H := F_{P_1}-F_{P_2}.
\]
By Lemma~\ref{lemma:fourier_cdf_difference},
\[
\widehat H(\omega)
=
\frac{\phi_{P_1}(-\omega)-\phi_{P_2}(-\omega)}
     {i\omega\sqrt{2\pi}},
\qquad \omega\neq0.
\]
Substituting into the definition of $\mathcal U$ gives
\[
\widehat{(\mathcal U H)}(\omega)
=
\phi_{P_1}(\omega)-\phi_{P_2}(\omega).
\]
By definition of the spectral embedding \eqref{eq:phi_embedding_def}, this equals $\widehat{\Phi(P_1)}(\omega)-\widehat{\Phi(P_2)}(\omega)$. Hence
\[
\mathcal U(F_{P_1}-F_{P_2})
=
\Phi(P_1)-\Phi(P_2).
\]
In particular, taking $P_2=\delta_0$ yields
\[
\Phi(P)=\mathcal U(F_P-F_{\delta_0})
=\mathcal U(\mathcal U_C(P)).
\]

\textit{Invertibility.}
Define $\mathcal U^{-1} : \mathcal H_{\epsilon}\to\mathcal H^{\mathrm{cdf}}_\epsilon$ by
\[
\widehat{(\mathcal U^{-1} f)}(\omega)
 :=
 \frac{1}{\sqrt{2\pi} i \omega}\overline{\widehat f(\omega)}.
\]
A direct computation shows that $\mathcal U^{-1}\mathcal U=\mathrm{Id}$ on $\mathcal H^{\mathrm{cdf}}_\epsilon$ and $\mathcal U\mathcal U^{-1}=\mathrm{Id}$ on $\mathcal H_{\epsilon}$. Moreover,
\[
\|\mathcal U^{-1} f\|_{\mathcal H^{\mathrm{cdf}}_\epsilon}
=
\|f\|_{\mathcal H_{\epsilon}},
\]
so $\mathcal U^{-1}$ is bounded and isometric. This completes the proof.
\end{proof}

\subsection{Structural Properties of the CDF-Spectral Identification}
\label{appendix:cdf_spectral_structure}

This appendix provides a detailed structural analysis of the CDF-spectral identification introduced in Section~\ref{subsec:cdf_spectral_bridge} (Proposition~\ref{prop:canonical_transport}). In particular, we characterize the linear spans generated by centred CDF representations and their images under the regularised spectral embedding. These results are not required for the Bellman analysis in Section~\ref{sec:bellman_embedding}, but clarify the functional-analytic completeness of the transport construction.

\begin{proposition}[Raw-spectral correspondence on CDF embeddings]
\label{prop:raw_spectral_isomorphism}
Fix $\epsilon>0$ and work with the CDF class $\Gamma_F$. Define the set of centred CDF representations
\[
\Gamma_C
 := \Bigl\{
       \mathcal U_C(P) = F_P - F_{\delta_0} \;\mid\; F_P\in\Gamma_F
    \Bigr\}.
\]
Then $\Gamma_C \subset \mathcal H^{\mathrm{cdf}}_\epsilon$, and the linear span
\[
\mathrm{span}(\Gamma_C)
 := \Bigl\{
        \sum_{k=1}^n \alpha_k \mathcal U_C(P_k)
        \;\mid\;
        n\in\mathbb N,\ \alpha_k\in\mathbb R,\ F_{P_k}\in\Gamma_F
    \Bigr\}
\]
is a linear subspace of $\mathcal H^{\mathrm{cdf}}_\epsilon$. Moreover, the following statements hold:
\begin{enumerate}
\item For every $P$ with $F_P\in\Gamma_F$,
\[
\mathcal V(F_P)
  = \mathcal U\bigl(\mathcal U_C(P)\bigr)
  = \Phi(P).
\]

\item The restriction of $\mathcal U$ to $\mathrm{span}(\Gamma_C)$ is a linear isometric isomorphism onto the linear span of the corresponding spectral embeddings,
\[
\mathrm{span}\bigl\{\Phi(P) \mid F_P\in\Gamma_F\bigr\}
 \subset \mathcal H_{\epsilon},
\]
with inverse given by the restriction of $\mathcal U^{-1}$ to that span.

\item For any $P_1,P_2$ with $F_{P_1},F_{P_2}\in\Gamma_F$,
\[
\bigl\|\Phi(P_1) - \Phi(P_2)\bigr\|_{\mathcal H_{\epsilon}}
 = \bigl\|\mathcal U_C(P_1) - \mathcal U_C(P_2)\bigr\|_{\mathcal H^{\mathrm{cdf}}_\epsilon}.
\]
\end{enumerate}
\end{proposition}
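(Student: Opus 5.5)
The plan is to assemble Proposition~\ref{prop:raw_spectral_isomorphism} from Lemma~\ref{lemma:centered_cdf_membership} and Theorem~\ref{thm:cdf_spectral_isomorphism}, working throughout over the real field as in Remark~\ref{remark:real_linearity}. In that setting $\mathcal U$ is a genuine linear isometric isomorphism, and linear combinations with real coefficients $\alpha_k$ behave correctly.

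\textbf{Membership and the linear span.} The inclusion $\Gamma_C\subset\mathcal H^{\mathrm{cdf}}_\epsilon$ is exactly Lemma~\ref{lemma:centered_cdf_membership}. Since $\mathcal H^{\mathrm{cdf}}_\epsilon$ is a vector space, being defined by a finite weighted $L^2$ norm in frequency, $\mathrm{span}(\Gamma_C)$ is a linear subspace. The weighted norm is finite on sums by the triangle inequality in $L^2(\mathbb R,\omega^2(\omega^2+\epsilon)^{-1}d\omega)$.

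\textbf{Item 1.} This is item~2 of Theorem~\ref{thm:cdf_spectral_isomorphism} with $P_2=\delta_0$, combined with the definition $\mathcal V=\mathcal U\circ C$ from \eqref{eq:V_definition}. Note that $F_{\delta_0}\in\Gamma_F$ trivially, and $\Phi(\delta_0)=0$ because $\phi_{\delta_0}\equiv1$.

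\textbf{Item 2.} By linearity, $\mathcal U\bigl(\sum_k\alpha_k\mathcal U_C(P_k)\bigr)=\sum_k\alpha_k\Phi(P_k)$ by item~1. Hence $\mathcal U$ maps $\mathrm{span}(\Gamma_C)$ onto $\mathrm{span}\{\Phi(P)\}$: the image lies in the span, and every element of the span has a preimage obtained from the same coefficients. The restriction inherits isometry from item~1 of Theorem~\ref{thm:cdf_spectral_isomorphism}, and injectivity follows from the isometry. The inverse is the restriction of $\mathcal U^{-1}$, because $\mathcal U^{-1}\mathcal U=\mathrm{Id}$ globally (item~3 of the theorem). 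Therefore $\mathcal U^{-1}$ maps the spectral span back into $\mathrm{span}(\Gamma_C)$.

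\textbf{Item 3.} Write $\mathcal U_C(P_1)-\mathcal U_C(P_2)=F_{P_1}-F_{P_2}$. Apply item~2 of the theorem to get $\mathcal U(F_{P_1}-F_{P_2})=\Phi(P_1)-\Phi(P_2)$, and then apply the isometry.

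\textbf{Expected obstacle.} The only delicate point is the real versus complex issue. Over $\mathbb C$, $\mathcal U$ is conjugate-linear, so complex spans would pick up conjugated coefficients. Restricting to $\alpha_k\in\mathbb R$, as the statement does, removes this. A secondary check is that the Fourier transforms involved are of real functions, so that $\Phi(P)$ has the Hermitian symmetry $\widehat{\Phi(P)}(-\omega)=\overline{\widehat{\Phi(P)}(\omega)}$. This makes the real-Hilbert-space interpretation of $\mathcal H_\epsilon$ consistent. I would verify this symmetry directly from $\phi_P(-\omega)=\overline{\phi_P(\omega)}$.
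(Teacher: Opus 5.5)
Your proposal is correct and follows essentially the same route as the paper: membership from Lemma~\ref{lemma:centered_cdf_membership}, Item~1 from Theorem~\ref{thm:cdf_spectral_isomorphism} with $P_2=\delta_0$ combined with $\mathcal V=\mathcal U\circ C$, Item~2 by linearity together with the two-sided inclusion of spans and restriction of $\mathcal U^{-1}$, and Item~3 by the isometry applied to the centred difference. Your explicit appeal to Remark~\ref{remark:real_linearity} is slightly more careful than the paper's proof. The paper simply calls $\mathcal U$ a linear isometric isomorphism, even though the theorem states that it is conjugate-linear.
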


\begin{proof}
By definition of $\Gamma_F$, if $F_P\in\Gamma_F$ then $\mathcal U_C(P) = F_P - F_{\delta_0} \in L^2(\mathbb R)$. Lemma~\ref{lemma:centered_cdf_membership} therefore implies that $\mathcal U_C(P)\in\mathcal H^{\mathrm{cdf}}_\epsilon$, so $\Gamma_C \subset \mathcal H^{\mathrm{cdf}}_\epsilon$ and $\mathrm{span}(\Gamma_C)$ is a linear subspace.

\medskip\noindent
\emph{Item~1.}
For any $P$ with $F_P\in\Gamma_F$, the definition of the transport operator $\mathcal V$ in~\eqref{eq:V_definition} gives
\[
\mathcal V(F_P)
 = \mathcal U(F_P - F_{\delta_0})
 = \mathcal U(\mathcal U_C(P)).
\]
By Theorem~\ref{thm:cdf_spectral_isomorphism}, $\mathcal U(\mathcal U_C(P))$ coincides with the regularised spectral embedding $\Phi(P)$, yielding the claimed identity.

\medskip\noindent
\emph{Item~2.}
By Theorem~\ref{thm:cdf_spectral_isomorphism}, $\mathcal U$ is a linear isometric isomorphism from $\mathcal H^{\mathrm{cdf}}_\epsilon$ onto $\mathcal H_{\epsilon}$. Its restriction to $\mathrm{span}(\Gamma_C)$ is therefore linear and norm-preserving.

To identify its image, let $H = \sum_{k=1}^n \alpha_k \mathcal U_C(P_k)$ with $F_{P_k}\in\Gamma_F$. By linearity of $\mathcal U$ and Item~1,
\[
\mathcal U(H)
 = \sum_{k=1}^n \alpha_k \mathcal U(\mathcal U_C(P_k))
 = \sum_{k=1}^n \alpha_k \Phi(P_k),
\]
which shows that
\[
\mathcal U\bigl(\mathrm{span}(\Gamma_C)\bigr)
 \subset
\mathrm{span}\bigl\{\Phi(P) : F_P\in\Gamma_F\bigr\}.
\]
Conversely, any finite linear combination $\sum_{k=1}^n \alpha_k \Phi(P_k)$ with $F_{P_k}\in\Gamma_F$ can be written as
\[
\sum_{k=1}^n \alpha_k \Phi(P_k)
 = \mathcal U\Bigl(\sum_{k=1}^n \alpha_k \mathcal U_C(P_k)\Bigr),
\]
establishing the reverse inclusion and hence equality of the two spans. Since $\mathcal U$ admits a bounded inverse $\mathcal U^{-1}$ on $\mathcal H_{\epsilon}$, its restriction yields the claimed inverse on the corresponding spans.

\medskip\noindent
\emph{Item~3.}
For any $P_1,P_2$ with $F_{P_1},F_{P_2}\in\Gamma_F$,
\[
\Phi(P_1) - \Phi(P_2)
 = \mathcal U(\mathcal U_C(P_1) - \mathcal U_C(P_2)),
\]
by linearity of $\mathcal U$. Since $\mathcal U$ is an isometry on $\mathcal H^{\mathrm{cdf}}_\epsilon$, taking norms yields
\[
\bigl\|\Phi(P_1) - \Phi(P_2)\bigr\|_{\mathcal H_{\epsilon}}
 = \bigl\|\mathcal U_C(P_1) - \mathcal U_C(P_2)\bigr\|_{\mathcal H^{\mathrm{cdf}}_\epsilon},
\]
as claimed.
\end{proof}

\begin{remark}[Geometric interpretation]
Proposition~\ref{prop:raw_spectral_isomorphism} shows that, after centring, raw CDF representations in $\Gamma_F$ generate an affine Hilbert structure that is carried isometrically into the linear geometry of the regularised spectral space $\mathcal H_{\epsilon}$. The operator $\mathcal V$ therefore provides a canonical bridge between probabilistic parameterisations and spectral Hilbert representations. This structural completeness result complements the canonical transport
properties used in the main text.
\end{remark}

\section{Proof Details in Section~\ref{sec:bellman_embedding}}

\subsection{Proof of Lemma~\ref{lemma:cdf_reward_nonexp}}
\label{appendix:proof_cdf_reward_nonexp}
\begin{proof}
Fix $(s,a)\in\mathcal S\times\mathcal A$. By definition of the reward translation operator,
\[
(\mathcal S_R F_{\mathcal Z_i})(s,a)(x)
=
\mathbb E_{r\sim R(s,a)}
\bigl[
F_{\mathcal Z_i(s,a)}(x-r)
\bigr],
\qquad i=1,2.
\]
Hence, for every $x\in\mathbb R$,
\[
(\mathcal S_R F_{\mathcal Z_1})(s,a)(x)
-
(\mathcal S_R F_{\mathcal Z_2})(s,a)(x)
=
\mathbb E_{r\sim R(s,a)}
\bigl[
F_{\mathcal Z_1(s,a)}(x-r)
-
F_{\mathcal Z_2(s,a)}(x-r)
\bigr].
\]
Applying Jensen's inequality pointwise in $x$ and integrating over $\mathbb R$
yields
\begin{align*}
\int_{\mathbb R}
\bigl|
(\mathcal S_R F_{\mathcal Z_1})(s,a)(x)
-
(\mathcal S_R F_{\mathcal Z_2})(s,a)(x)
\bigr|^2\,dx
&\le
\int_{\mathbb R}
\mathbb E_{r\sim R(s,a)}
\bigl[
|F_{\mathcal Z_1(s,a)}(x-r)-F_{\mathcal Z_2(s,a)}(x-r)|^2
\bigr]\,dx \\
&=
\mathbb E_{r\sim R(s,a)}
\left[
\int_{\mathbb R}
|F_{\mathcal Z_1(s,a)}(x-r)-F_{\mathcal Z_2(s,a)}(x-r)|^2\,dx
\right].
\end{align*}
For each fixed reward realisation $r$, the map $x\mapsto x-r$ is a translation of $\mathbb R$, and the $L^2(\mathbb R)$ norm is translation invariant. Therefore the inner integral equals
\[
\int_{\mathbb R}
|F_{\mathcal Z_1(s,a)}(x)-F_{\mathcal Z_2(s,a)}(x)|^2\,dx.
\]
Taking square roots and then the supremum over $(s,a)$ yields the claimed non-expansiveness in the metric $d_{\mathrm{Cr}}$.
\end{proof}

\subsection{Proof of Lemma~\ref{lemma:cdf_discount_contraction}}
\label{appendix:proof_cdf_discount_contraction}

\begin{proof}
Fix $(s,a)\in\mathcal S\times\mathcal A$. By definition of $\mathcal D_\gamma$,
\[
(\mathcal D_\gamma F_{\mathcal Z_i})(s,a)(x)
=
F_{\mathcal Z_i(s,a)}\!\left(\frac{x}{\gamma}\right),
\qquad i=1,2.
\]
Therefore,
\begin{align*}
&\int_{\mathbb R}
\bigl|
(\mathcal D_\gamma F_{\mathcal Z_1})(s,a)(x)
-
(\mathcal D_\gamma F_{\mathcal Z_2})(s,a)(x)
\bigr|^2\,dx \\
&\qquad =
\int_{\mathbb R}
\left|
F_{\mathcal Z_1(s,a)}\!\left(\frac{x}{\gamma}\right)
-
F_{\mathcal Z_2(s,a)}\!\left(\frac{x}{\gamma}\right)
\right|^2 dx.
\end{align*}
Performing the change of variables $u=x/\gamma$ (so $dx=\gamma\,du$) yields
\[
\int_{\mathbb R}
\bigl|
(\mathcal D_\gamma F_{\mathcal Z_1})(s,a)(x)
-
(\mathcal D_\gamma F_{\mathcal Z_2})(s,a)(x)
\bigr|^2\,dx
=
\gamma
\int_{\mathbb R}
|F_{\mathcal Z_1(s,a)}(u)-F_{\mathcal Z_2(s,a)}(u)|^2\,du.
\]
Taking square roots and then the supremum over $(s,a)$ gives
\[
d_{\mathrm{Cr}}(\mathcal D_\gamma F_{\mathcal Z_1},
                 \mathcal D_\gamma F_{\mathcal Z_2})
=
\sqrt{\gamma}\,
d_{\mathrm{Cr}}(F_{\mathcal Z_1},F_{\mathcal Z_2}),
\]
which proves the claim.
\end{proof}

\subsection{Proof of Lemma~\ref{lemma:cdf_condexp_nonexp}}
\label{appendix:proof_cdf_condexp_nonexp}
\begin{proof}
Fix $(s,a)\in\mathcal S\times\mathcal A$. By definition of $\mathcal C^\pi$, for any $x\in\mathbb R$ and $i\in\{1,2\}$,
\[
(\mathcal C^\pi F_{\mathcal Z_i})(s,a)(x)
=
\mathbb E_{(s',a')\sim \mathcal P^\pi(\cdot\mid s,a)}
\bigl[\,F_{\mathcal Z_i(s',a')}(x)\,\bigr].
\]
Since each $F_{\mathcal Z_i(s',a')}(\cdot)$ is a CDF and $\mathcal P^\pi(\cdot\mid s,a)$ is a probability measure, $(\mathcal C^\pi F_{\mathcal Z_i})(s,a)(\cdot)$ is a convex combination of CDFs and hence is itself a CDF. In particular, the Cram\'er distance at $(s,a)$ is well defined.

Define, for $(s',a')\in\mathcal S\times\mathcal A$ and $x\in\mathbb R$,
\[
\Delta(s',a';x)
:=
F_{\mathcal Z_1(s',a')}(x)-F_{\mathcal Z_2(s',a')}(x).
\]
Then for every $x\in\mathbb R$,
\begin{align*}
(\mathcal C^\pi F_{\mathcal Z_1})(s,a)(x)-(\mathcal C^\pi F_{\mathcal Z_2})(s,a)(x)
&=
\mathbb E_{(s',a')\sim \mathcal P^\pi(\cdot\mid s,a)}
\bigl[\Delta(s',a';x)\bigr].
\end{align*}
Applying Jensen's inequality to the square function, for each fixed $x\in\mathbb R$,
\[
\bigl|
\mathbb E_{(s',a')\sim \mathcal P^\pi(\cdot\mid s,a)}[\Delta(s',a';x)]
\bigr|^2
\le
\mathbb E_{(s',a')\sim \mathcal P^\pi(\cdot\mid s,a)}
\bigl[\,|\Delta(s',a';x)|^2\,\bigr].
\]
Hence, for every $x\in\mathbb R$,
\begin{equation}
\bigl|
(\mathcal C^\pi F_{\mathcal Z_1})(s,a)(x)-(\mathcal C^\pi F_{\mathcal Z_2})(s,a)(x)
\bigr|^2
\le
\mathbb E_{(s',a')\sim \mathcal P^\pi(\cdot\mid s,a)}
\bigl[\,|\Delta(s',a';x)|^2\,\bigr].
\label{eq:pointwise_jensen_condexp}
\end{equation}

Integrating \eqref{eq:pointwise_jensen_condexp} over $x$ and using monotonicity of the Lebesgue integral yields
\begin{align}
\int_{\mathbb R}
\bigl|
(\mathcal C^\pi F_{\mathcal Z_1})(s,a)(x)-(\mathcal C^\pi F_{\mathcal Z_2})(s,a)(x)
\bigr|^2\,dx
&\le
\int_{\mathbb R}
\mathbb E_{(s',a')\sim \mathcal P^\pi(\cdot\mid s,a)}
\bigl[\,|\Delta(s',a';x)|^2\,\bigr]\,dx.
\label{eq:integrate_pointwise}
\end{align}

To exchange the expectation and the integral on the right-hand side, note that the integrand
\[
|\Delta(s',a';x)|^2
=
|F_{\mathcal Z_1(s',a')}(x)-F_{\mathcal Z_2(s',a')}(x)|^2
\]
is nonnegative and measurable in $(s',a',x)$. Moreover, since $F_{\mathcal Z_1},F_{\mathcal Z_2}\in\mathcal X^{\mathrm{cdf}}$, we have
\[
\sup_{(s,a)\in\mathcal S\times\mathcal A}
\int_{\mathbb R}
|F_{\mathcal Z_1(s,a)}(x)-F_{\mathcal Z_2(s,a)}(x)|^2\,dx
=
d_{\mathrm{Cr}}^2(F_{\mathcal Z_1},F_{\mathcal Z_2})
<\infty.
\]
In particular, for the fixed $(s,a)$,
\begin{align*}
\mathbb E_{(s',a')\sim \mathcal P^\pi(\cdot\mid s,a)}
\Bigl[
\int_{\mathbb R}
|F_{\mathcal Z_1(s',a')}(x)-F_{\mathcal Z_2(s',a')}(x)|^2\,dx
\Bigr]
\le
d_{\mathrm{Cr}}^2(F_{\mathcal Z_1},F_{\mathcal Z_2})
<\infty,
\end{align*}
so $(s',a',x)\mapsto |\Delta(s',a';x)|^2$ is integrable with respect to the product measure $\mathcal P^\pi(\cdot\mid s,a)\otimes dx$. Therefore, by Fubini's theorem under integrability,
\begin{align}
\int_{\mathbb R}
\mathbb E_{(s',a')\sim \mathcal P^\pi(\cdot\mid s,a)}
\bigl[\,|\Delta(s',a';x)|^2\,\bigr]\,dx
&=
\int_{\mathbb R}
\int_{\mathcal S\times\mathcal A}
|\Delta(s',a';x)|^2\,\mathcal P^\pi(d(s',a')\mid s,a)\,dx \notag\\
&=
\int_{\mathcal S\times\mathcal A}
\int_{\mathbb R}
|\Delta(s',a';x)|^2\,dx\, \mathcal P^\pi(d(s',a')\mid s,a) \notag\\
&=
\mathbb E_{(s',a')\sim \mathcal P^\pi(\cdot\mid s,a)}
\Bigl[
\int_{\mathbb R}
|F_{\mathcal Z_1(s',a')}(x)-F_{\mathcal Z_2(s',a')}(x)|^2\,dx
\Bigr].
\label{eq:tonelli_exchange}
\end{align}

Combining \eqref{eq:integrate_pointwise} and \eqref{eq:tonelli_exchange} yields
\begin{align*}
\int_{\mathbb R}
\bigl|
(\mathcal C^\pi F_{\mathcal Z_1})(s,a)(x)-(\mathcal C^\pi F_{\mathcal Z_2})(s,a)(x)
\bigr|^2\,dx
&\le
\mathbb E_{(s',a')\sim \mathcal P^\pi(\cdot\mid s,a)}
\Bigl[
\int_{\mathbb R}
|F_{\mathcal Z_1(s',a')}(x)-F_{\mathcal Z_2(s',a')}(x)|^2\,dx
\Bigr] \\
&\le
\sup_{(s',a')\in\mathcal S\times\mathcal A}
\int_{\mathbb R}
|F_{\mathcal Z_1(s',a')}(x)-F_{\mathcal Z_2(s',a')}(x)|^2\,dx \\
&=
d_{\mathrm{Cr}}^2(F_{\mathcal Z_1},F_{\mathcal Z_2}).
\end{align*}

Taking square roots and then the supremum over $(s,a)$ gives
\[
d_{\mathrm{Cr}}\!\bigl(\mathcal C^\pi F_{\mathcal Z_1},\, \mathcal C^\pi F_{\mathcal Z_2}\bigr)
\le
d_{\mathrm{Cr}}\!\bigl(F_{\mathcal Z_1},\,F_{\mathcal Z_2}\bigr),
\]
as claimed.
\end{proof}

\subsection{Proof of Lemma~\ref{lem:P2_complete}}
\label{appendix:proof_cramer_cdf_complete}
\begin{proof}
Let $\{F_n\}_{n\ge 0}$ be a Cauchy sequence in $(\Gamma_F,d_C)$.
By definition of $d_C$, this means
\[
\|F_n - F_m\|_{L^2(\mathbb R)} \longrightarrow 0
\qquad\text{as } n,m\to\infty.
\]
Hence $\{F_n\}_{n\ge 0}$ is Cauchy in $L^2(\mathbb R)$ and there exists
$G\in L^2(\mathbb R)$ such that
\[
\|F_n - G\|_{L^2(\mathbb R)} \longrightarrow 0.
\]
Passing to a subsequence if necessary, we may assume that $F_n(x) \to G(x)$ for almost every $x \in \mathbb R$. Each $F_n$ is a cumulative distribution function: it is nondecreasing, right-continuous, takes values in $[0,1]$, and satisfies $\lim_{x\to-\infty} F_n(x)=0$ and $\lim_{x\to+\infty} F_n(x)=1$. From the almost-everywhere convergence and the uniform bound $0 \le F_n \le 1$, it follows that $0 \le G \le 1$ almost everywhere. Moreover, since $F_n \to G$ almost everywhere and each $F_n$ is nondecreasing, there exists a version of $G$, equal almost everywhere, that is nondecreasing on $\mathbb R$.

We now regularize $G$ to a right-continuous version.
Define
\[
\widetilde F(x)
 := \lim_{y\downarrow x} G(y), \qquad x\in\mathbb R.
\]
Since $G$ is nondecreasing, the limit on the right exists for every $x$. Then $\widetilde F$ is nondecreasing and right--continuous, and satisfies $0\le\widetilde F\le 1$ everywhere. Furthermore, $\widetilde F = G$ almost everywhere, so
\[
\|F_n - \widetilde F\|_{L^2(\mathbb R)}
 = \|F_n - G\|_{L^2(\mathbb R)}
 \longrightarrow 0
\qquad\text{as } n\to\infty.
\]

It remains to show that $\widetilde F$ has the correct boundary behaviour at $\pm\infty$ and that its Cram\'er distance to $\delta_0$ is finite. Recall that $F_{\delta_0}(x)=0$ for $x<0$ and $F_{\delta_0}(x)=1$ for $x\ge0$. Since $F_n\in\Gamma_F$, we have
\[
\int_{\mathbb R} \bigl(F_n(x) - F_{\delta_0}(x)\bigr)^2 dx
 = \ell_2^2(F_n,F_{\delta_0}) < \infty
\qquad\text{for every }n.
\]
Because $F_n\to\widetilde F$ in $L^2(\mathbb R)$, we obtain
\[
\int_{\mathbb R}
  \bigl(\widetilde F(x) - F_{\delta_0}(x)\bigr)^2 dx
 = \lim_{n\to\infty}
    \int_{\mathbb R} \bigl(F_n(x) - F_{\delta_0}(x)\bigr)^2 dx
 < \infty.
\]
In particular,
\[
\int_{-\infty}^0 \widetilde F(x)^2 dx
  + \int_0^{\infty} \bigl(1-\widetilde F(x)\bigr)^2 dx
 < \infty.
\]

We now deduce the boundary limits. Suppose first that $\liminf_{x\to-\infty} \widetilde F(x) = \eta > 0$. Then there exists $M>0$ such that $\widetilde F(x)\ge \eta/2$ for all $x<-M$. Since the interval $(-\infty,-M)$ has infinite Lebesgue measure, this would
imply
\[
\int_{-\infty}^0 \widetilde F(x)^2 dx
 \ge \int_{-\infty}^{-M} \Bigl(\frac{\eta}{2}\Bigr)^2 dx
 = \infty,
\]
contradicting finiteness of the integral. Hence
\[
\lim_{x\to-\infty} \widetilde F(x) = 0.
\]

A similar argument applies at $+\infty$. If $\limsup_{x\to+\infty} \widetilde F(x) \le 1-\eta$ for some $\eta>0$, then there exists $M>0$ such that $\widetilde F(x)\le 1-\eta/2$ for all $x>M$, and therefore
\[
\int_0^{\infty} \bigl(1-\widetilde F(x)\bigr)^2 dx
 \ge \int_M^{\infty} \Bigl(\frac{\eta}{2}\Bigr)^2 dx
 = \infty,
\]
again a contradiction. Thus
\[
\lim_{x\to+\infty} \widetilde F(x) = 1.
\]

We have shown that $\widetilde F$ is nondecreasing, right-continuous, takes values in $[0,1]$, satisfies the boundary conditions
$\widetilde F(-\infty)=0$ and $\widetilde F(+\infty)=1$, and obeys
\[
\int_{\mathbb R}
  \bigl(\widetilde F(x) - F_{\delta_0}(x)\bigr)^2 dx < \infty.
\]
Hence $\widetilde F$ is the cumulative distribution function of a probability measure $P$ with $\ell_2(F_P,F_{\delta_0}) < \infty$, that is, $\widetilde F\in\Gamma_F$.

Finally, together with $ \|F_n - \widetilde F\|_{L^2(\mathbb R)}  \rightarrow 0,$ thus $\{F_n\}_{n\ge 0}$ converges to $\widetilde F$ in $(\Gamma_F,d_C)$. This proves that $(\Gamma_F,d_C)$ is complete.
\end{proof}

\subsection{Proof of Corollary~\ref{cor:cdf_fixed_point}}
\label{appendix:proof_cdf_fixed_point}
\begin{proof}
By Theorem~\ref{thm:cramer_contraction}, $\mathcal T^\pi_{\mathrm{cdf}}$ is a strict contraction with constant $\sqrt{\gamma}<1$ on $(\mathcal X^{\mathrm{cdf}},d_{\mathrm{Cr}})$.

We first show that $(\mathcal X^{\mathrm{cdf}},d_{\mathrm{Cr}})$ is complete. Recall that for any two CDF fields
$F_{\mathcal Z_1},F_{\mathcal Z_2}\in\mathcal X^{\mathrm{cdf}}$,
\[
d_{\mathrm{Cr}}(F_{\mathcal Z_1},F_{\mathcal Z_2})
 = \sup_{(s,a)\in\mathcal S\times\mathcal A}
    d_C\bigl(
      F_{\mathcal Z_1(s,a)},
      F_{\mathcal Z_2(s,a)}
    \bigr),
\]
where $d_C$ denotes the Cram\'er metric on the admissible CDF class $\Gamma_F$. By Lemma~\ref{lem:P2_complete}, $(\Gamma_F,d_C)$ is complete.

Let $\{F_{\mathcal Z_n}\}_{n\ge 0}$ be a $d_{\mathrm{Cr}}$-Cauchy sequence in $\mathcal X^{\mathrm{cdf}}$. Then for every $(s,a)$, the sequence $\{F_{\mathcal Z_n(s,a)}\}_{n\ge 0}$ is Cauchy in $(\Gamma_F,d_C)$ and hence converges to some limit $F_{\mathcal Z(s,a)}\in\Gamma_F$. This defines a pointwise limit CDF field
\[
F_{\mathcal Z} :
\mathcal S\times\mathcal A \longrightarrow \Gamma_F.
\]

We next verify that $F_{\mathcal Z}\in\mathcal X^{\mathrm{cdf}}$ and that $F_{\mathcal Z_n}\to F_{\mathcal Z}$ in $d_{\mathrm{Cr}}$. Fix $(s_0,a_0)\in\mathcal S\times\mathcal A$ and an index $n_0$. Since $F_{\mathcal Z_{n_0}}\in\mathcal X^{\mathrm{cdf}}$, there exists $M<\infty$ such that
\[
\sup_{(s,a)}
d_C\bigl(
F_{\mathcal Z_{n_0}(s,a)},
F_{\mathcal Z_{n_0}(s_0,a_0)}
\bigr)
\le M.
\]
Moreover, since $\{F_{\mathcal Z_n}\}_{n\ge 0}$ is Cauchy in $d_{\mathrm{Cr}}$, there exists $K<\infty$ such that
\[
\sup_{n\ge n_0}
d_{\mathrm{Cr}}\bigl(F_{\mathcal Z_n},F_{\mathcal Z_{n_0}}\bigr)
\le K.
\]
For any $(s,a)$ and $n\ge n_0$,
\[
d_C\bigl(
F_{\mathcal Z_n(s,a)},
F_{\mathcal Z_{n_0}(s_0,a_0)}
\bigr)
\le
d_C\bigl(
F_{\mathcal Z_n(s,a)},
F_{\mathcal Z_{n_0}(s,a)}
\bigr)
+
d_C\bigl(
F_{\mathcal Z_{n_0}(s,a)},
F_{\mathcal Z_{n_0}(s_0,a_0)}
\bigr)
\le K+M.
\]
Passing to the limit $n\to\infty$ yields
\[
d_C\bigl(
F_{\mathcal Z(s,a)},
F_{\mathcal Z_{n_0}(s_0,a_0)}
\bigr)
\le K+M,
\]
so $F_{\mathcal Z}$ is a bounded CDF field and hence
$F_{\mathcal Z}\in\mathcal X^{\mathrm{cdf}}$.

Finally, to show convergence in $d_{\mathrm{Cr}}$, fix $\epsilon>0$ and choose $N$ such that $d_{\mathrm{Cr}}(F_{\mathcal Z_n},F_{\mathcal Z_m})<\epsilon$ for all $n,m\ge N$. Then for any fixed $n\ge N$ and any $(s,a)$,
\[
d_C\bigl(
F_{\mathcal Z_n(s,a)},
F_{\mathcal Z_m(s,a)}
\bigr)
\le
d_{\mathrm{Cr}}(F_{\mathcal Z_n},F_{\mathcal Z_m})
<\epsilon
\quad\text{for all } m\ge N.
\]
Letting $m\to\infty$ and using pointwise convergence yields
\[
d_C\bigl(
F_{\mathcal Z_n(s,a)},
F_{\mathcal Z(s,a)}
\bigr)
\le \epsilon
\quad\text{for all } (s,a).
\]
Taking the supremum over $(s,a)$ gives
\[
d_{\mathrm{Cr}}(F_{\mathcal Z_n},F_{\mathcal Z})
\le \epsilon,
\]
so $F_{\mathcal Z_n}\to F_{\mathcal Z}$ in $(\mathcal X^{\mathrm{cdf}},d_{\mathrm{Cr}})$. Thus $(\mathcal X^{\mathrm{cdf}},d_{\mathrm{Cr}})$ is complete.

The Banach fixed-point theorem then yields a unique fixed point $F_{\mathcal Z^\pi}\in\mathcal X^{\mathrm{cdf}}$ of $\mathcal T^\pi_{\mathrm{cdf}}$, and convergence of the iterates
\[
F_{\mathcal Z_{n+1}}=\mathcal T^\pi_{\mathrm{cdf}}F_{\mathcal Z_n}
\]
to $F_{\mathcal Z^\pi}$ in $d_{\mathrm{Cr}}$ for any initial $F_{\mathcal Z_0}\in\mathcal X^{\mathrm{cdf}}$.
\end{proof}

\vskip 0.2in
\bibliography{sample}

\end{document}